\newcommand{\pdim}{\text{Pdim}}
\newcommand{\ndim}{\text{Ndim}}
\newcommand{\grow}{\text{Grow}}
\newcommand{\F}{\mathcal{F}}
\newcommand{\Z}{\mathcal{Z}}
\newcommand{\X}{\RR^p}
\newcommand{\Y}{\RR^d}
\newcommand{\D}{\mathcal{D}}
\newcommand{\expect}{\mathbb{E}}
\newcommand{\ind}{\mathbb{I}}
\newcommand{\pr}{\mathbb{P}}
\newcommand{\edit}[1]{#1}
\newcommand{\Data}{O}
\newcommand{\data}{o}
\newcommand{\nuisance}{\nu}
\newcommand{\propensity}{e}
\crefname{assumption}{Assumption}{Assumptions}
\crefname{remark}{Remark}{Remarks}
\begin{document}
		
	\RUNAUTHOR{Hu et al.}
	
	\RUNTITLE{CLO with Partial Feedback}
	
	\TITLE{Contextual Linear Optimization under Partial Feedback}
	
	\newcommand*\samethanks[1][\value{footnote}]{\footnotemark[#1]}
	\ARTICLEAUTHORS{%
		\AUTHOR{Yichun Hu\thanks{Authors are listed in alphabetical order.}}
 		\AFF{Cornell University, Ithaca, NY 14853,  \EMAIL{yh767@cornell.edu}}
        \AUTHOR{Nathan Kallus\samethanks}
 		\AFF{Cornell University, New York, NY 10044, \EMAIL{kallus@cornell.edu}}
        \AUTHOR{Xiaojie Mao\samethanks}
 		\AFF{Tsinghua University, 100084 Beijing, China, \EMAIL{maoxj@sem.tsinghua.edu.cn}}
        \AUTHOR{Yanchen Wu\samethanks}
 		\AFF{Tsinghua University, 100084 Beijing, China, \EMAIL{wu-yc23@mails.tsinghua.edu.cn}}
} %

\ABSTRACT{%
Contextual linear optimization (CLO) uses predictive contextual features to reduce uncertainty in random cost coefficients in the objective and thereby improve decision-making performance. A canonical example is the stochastic shortest path problem with random edge costs (\eg, travel time) and contextual features (\eg, lagged traffic, weather). While existing work on CLO assumes fully observed cost coefficient vectors, in many applications the decision maker observes only partial feedback corresponding to each chosen decision in the history. In this paper, we study both a bandit-feedback setting (\eg, only the overall travel time of each historical path is observed) and a semi-bandit-feedback setting (\eg, travel times of the individual segments on each chosen path are additionally observed). 
  We propose a unified class of offline learning algorithms for CLO with different types of feedback, following a powerful induced empirical risk minimization (IERM) framework that integrates estimation and optimization. We provide a novel fast-rate regret bound for IERM that allows for misspecified model classes and flexible choices of estimation methods. To solve the partial-feedback IERM, we also tailor computationally tractable surrogate losses. A byproduct of our theory of independent interest is the fast-rate regret bound for IERM with full feedback and a misspecified policy class. We compare the performance of different methods numerically using  stochastic shortest path examples on simulated and real data and provide practical insights from the empirical results.
}%

\maketitle

\section{Introduction}

Decision-making under uncertainty is a prevalent challenge across numerous real-world applications, ranging from optimizing urban shortest routes and planning product inventory to managing complex investment portfolios. These problems typically involve optimization with uncertain input quantities (\eg, travel time, product demand, asset returns), whose values are unknown at the time of decision-making.  
The increasing availability of data, coupled with advancements in machine learning (ML), has spurred the development of data-driven approaches. In particular, Contextual Stochastic Optimization (CSO) emerges as a powerful framework that leverages auxiliary information predictive of the uncertain quantities, known as \emph{contextual features} or \emph{covariates} (\eg, traffic information, customer demographics, economic fundamentals), to reduce the inherent uncertainty and improve decision quality. CSO focuses on finding optimal decicisions that minimize the expected costs, conditioned on observed contextual feature values. 

Contextual linear optimization (CLO) stands out as a fundamental and extensively studied instance within the broader CSO paradigm. In CLO, the objective cost function is linear with respect to both the decision variables and the uncertain quantities of interest, which can be  predicted using contextual features (see \cref{eq:CLO}). One prominent example of CLO is the stochastic shortest path problem where the decision-maker wants to choose routes to transport units from a starting point to a desired destination as fast as possible. This involves a linear optimization problem where the linear objective function gives the total travel time of each possible path and the decision constraints characterize the feasible paths. In this problem, the objective relies on the travel time of all edges within the road network, which is uncertain and unknown at the time of making route decision. Meanwhile, the decision-maker may observe useful contextual features such as traffic, weather, holidays, time and hope to leverage these to aid  decision-making. 

Traditionally, such problems have been addressed using a two-stage approach commonly referred to as Predict-Then-Optimize (PTO) or Estimate-Then-Optimize (ETO). This framework separates the prediction/estimation task from the subsequent optimization task, treating them as distinct and independent steps.
In the initial prediction/estimation stage, an ML model is trained to predict the uncertain input quantities, such as edge travel times, from contextual features. This training typically involves minimizing standard prediction error metrics, like the sum of squared errors, without explicit consideration of the downstream optimization task. Subsequently, in the optimization stage, these predictions of the uncertain quantities are "plugged in" as deterministic inputs to the optimization problem, which is then solved to derive the final decisions. While this approach is convenient for implementation, the two stages are decoupled and not well coordinated. 

Recently, a range of literature has proposed a new paradigm that aims to \emph{integrate} the prediction/estimation and optimization components, where the prediction/estimation is directly informed by the downstream optimization task. This paradigm is referred to by various names in the literature, including Smart Predict-then-Optimize, Integrated Learning and Optimization, Decision-Focused Learning, and End-to-End Learning, etc.  In this paper we call them \emph{integrated} or \emph{end-to-end} approaches. 
The core idea is to train the ML model by directly minimizing a "task loss" or "decision error" that quantifies the suboptimality of the decisions induced by ML predictions, rather than focusing solely on the prediction error. Thereby, this approach seeks the best possible \emph{decision quality} by optimizing over the class of decision policies induced by the ML model, incorporating the structure of the downstream optimization problem during training. 
This approach has demonstrated significant performance advantages over ETO, particularly in scenarios where the underlying ML model is misspecified (\ie, the true  relationship between contextual features and target uncertain quantities is not perfectly captured by the chosen model class). 

However, a crucial limitation shared by all existing CLO literature (and the more broad CSO literature), including both ETO and integrated approaches, is the common assumption of \emph{full-feedback} data. This typically implies that for each historical data instance, the decision-maker had access to and observed the complete, true vector of uncertain quantities (\eg, the travel time of every edge in the network). However,  this assumption deviates significantly from many real-world applications, where decision-makers usually have access to only \emph{partial-feedback} data. In these  settings, observations are limited to outcomes directly related to the specific decisions that were historically made, with information about counterfactual outcomes remaining unobserved. 

\begin{figure}[t]
    \centering
    \begin{subfigure}[t]{0.32\textwidth}
        \centering
        \includegraphics[width=\textwidth]{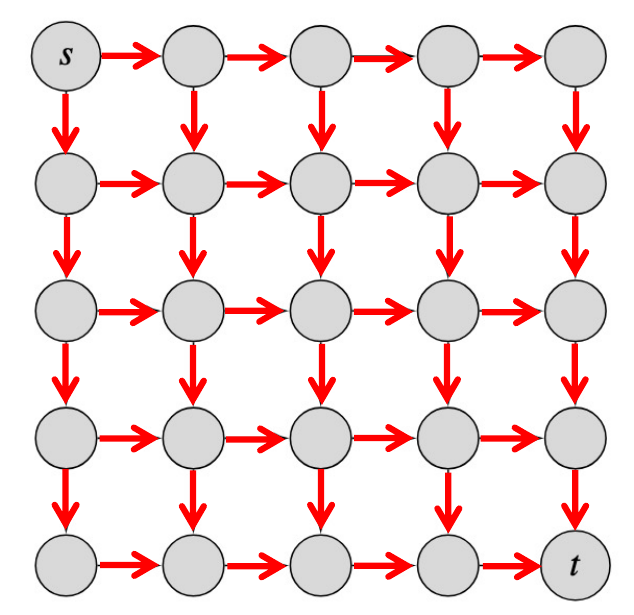}
        \caption{Full Feedback}
        \label{fig:full}
    \end{subfigure}
    \hfill
    \begin{subfigure}[t]{0.32\textwidth}
        \centering
        \includegraphics[width=\textwidth]{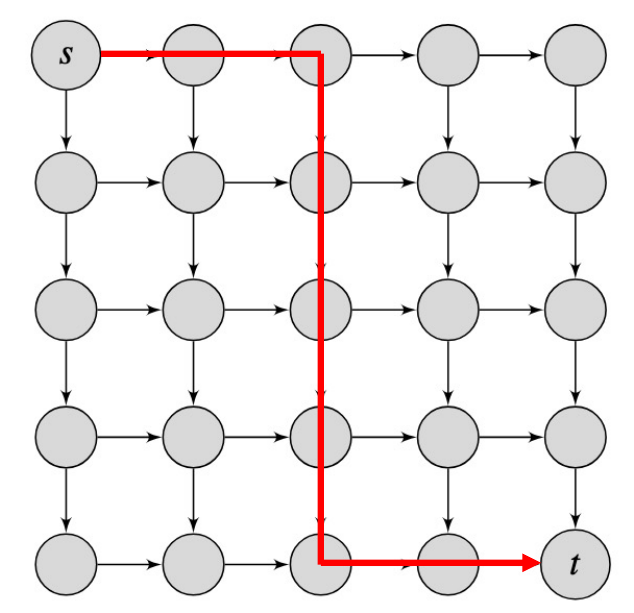}
        \caption{Bandit Feedback}
        \label{fig:bandit}
    \end{subfigure}
    \hfill
    \begin{subfigure}[t]{0.32\textwidth}
        \centering
        \includegraphics[width=\textwidth]{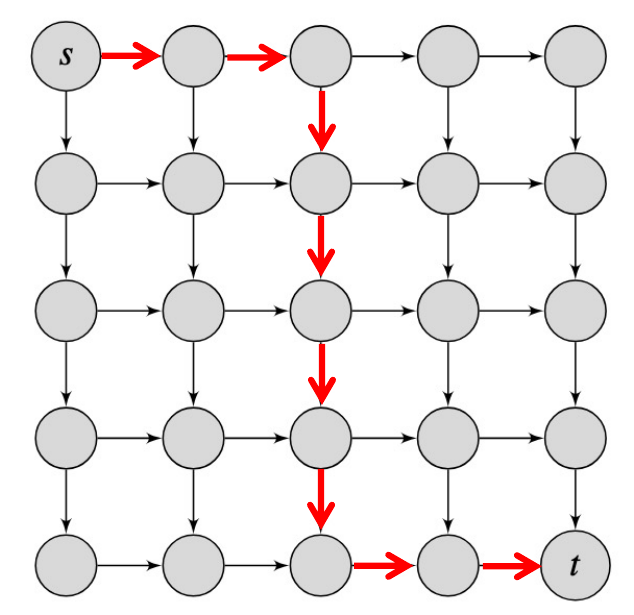}
        \caption{Semi-Bandit Feedback}
        \label{fig:semi}
    \end{subfigure}
    \caption{Illustrations of full-feedback, bandit-feedback, and semi-bandit-feedback settings for the stochastic shortest path problem on a $5 \times 5$ grid. 
Highlighted edges indicate the information observed under each feedback model. 
In the full-feedback setting, the travel times of all $40$ edges in the road network are observed for each data point. 
In the bandit-feedback setting, only the scalar total travel time of the chosen path is observed. 
In the semi-bandit-feedback setting, the travel time of each edge along the chosen path is observed.
We will conduct synthetic experiments on this road network in \cref{sec: synthetic}.
}
    \label{fig:feedback-comparison}
\end{figure}

The partial-feedback problem can be well illustrated in the aforementioned stochastic shortest-path problem, as shown in \cref{fig:feedback-comparison}. 
In this problem, 
if a user chooses a particular path, they may be able to observe only the total travel time for that specific path, but not the travel times for any alternative paths that were not taken (\cref{fig:bandit}). This scenario corresponds to a \emph{bandit-feedback} setting, where only the final outcome of the chosen action is revealed \citep{lattimore2020bandit}. An intermediate scenario between full feedback and bandit feedback also exists, which we refer to as \emph{semi-bandit feedback}. In this case, a decision-maker observes more granular information than pure bandit feedback but still not the full vector of uncertain quantities of interest. For example, in the shortest path problem, the travel times for each segment along the chosen path (instead of only the total travel time) might also be observed, but still not the costs of segments on unchosen paths (\cref{fig:semi}). As another example, \cite{elmachtoub20a} consider a news article recommendation problem and model it as a full-feedback CLO problem. However, in reality, this may also correspond to partial feedback, since the decision-maker may observe the user feedback (\eg, clicks) for articles that were actually recommended, but not necessarily the feedback for articles that were not recommended.

This paper addresses the critical research gap by focusing on the development of a novel, integrated, end-to-end framework for CLO under partial feedback, including both  bandit-feedback and semi-bandit settings. Our paper tackles several key challenges inherent to these limited information environments and makes the following key contributions:
\begin{itemize}
    \item \textbf{A New End-to-End Learning Framework with Partial Feedback.} 
    We develop a novel, end-to-end learning framework that adapts the powerful Induced Empirical Risk Minimization (IERM) paradigm to partial-feedback settings, including both bandit and semi-bandit feedback. A central challenge is evaluating a policy's expected cost without observing the full cost vector. Our framework overcomes this by introducing a score function that can be estimated from the observed data to serve in place of the unobserved cost vector. We propose a range of methods for constructing this score function. By optimizing the policy cost estimated via these scores, our framework directly targets decision quality. This approach is highly general, unifying the full-feedback, bandit, and semi-bandit settings within a single, coherent IERM framework.
    
    \item \textbf{Novel Fast-Rate Regret Guarantees under Model Misspecification.}
    We establish novel theoretical guarantees for the general IERM framework by providing a unified upper bound on the regret (\ie, decision sub-optimality) of the resulting IERM policy across different feedback settings. Crucially, our analysis explicitly accounts for model misspecification, a critical consideration for real-world applications where models are rarely perfect. It also incorporates a margin condition that characterizes the problem's intrinsic difficulty, enabling potentially fast convergence rates. The resulting regret bound can be decomposed into several interpretable components that trace different sources of error, which, when instantiated across feedback settings, lead to different fast-rate regret bounds.
    Our proof technique is novel and, as a significant byproduct of independent interest, it yields the first fast-rate regret bound for the traditional full-feedback CLO setting under misspecification.
    
    \item \textbf{Practical and Efficient Optimization via Surrogate Losses.} For practical implementation, we demonstrate the flexibility of our framework to leverage existing computational tools. Specifically, we show that popular surrogate losses designed for full-feedback CLO, including but not limited to the well-known SPO+ loss in \cite{elmachtoub2022smart}, can be seamlessly adapted to our bandit and semi-bandit settings. This makes the policy optimization problem computationally efficient and practical. We further support our approach with extensive empirical studies, including simulations on a stochastic shortest path problem and a real-world case study using Uber Movement data. These experiments demonstrate the effectiveness of our method.

\end{itemize}

\subsubsection*{Paper Organization.}  
We review the related literature in \cref{sec: literature}. 
In \cref{sec: problem}, we formally define contextual linear optimization with bandit feedback and discuss the key challenges in designing an end-to-end learning algorithm. 
We then introduce our proposed method for the bandit-feedback setting in \cref{sec: ierm-bandit} and demonstrate that the bandit and full-feedback settings can be unified within a single generic end-to-end learning framework. 
Theoretical regret guarantees for this framework are established in \cref{section: theory}, and computationally tractable solutions are developed in \cref{sec: spo+}. 
We further instantiate the framework for the semi-bandit-feedback setting in \cref{sec: semi}. 
Finally, we evaluate the performance of our proposed methods through computational experiments, using both synthetic and real-world data, in \cref{sec: numerical}.
Concluding remarks are provided in \cref{sec: conclusions}.

\section{Related Literature} \label{sec: literature}

\subsubsection*{Contextual Stochastic Optimization}

Our paper studies contextual stochastic optimization (CSO) for data-driven decision-making under uncertainty. CSO is a powerful framework that has been extensively studied in the recent literature, as surveyed systematically in  \cite{sadana2024survey} and \cite{mandi2024decision}. \cite{sadana2024survey} categorize methodologies for CSO problems into sequential learning and optimization (SLO), integrated learning and optimization (ILO), and decision rule optimization. \cite{mandi2024decision} focus on reviewing end-to-end decision-focused learning methods, roughly corresponding to the ILO category. This paper develops an integrated approach within contextual linear optimization (CLO), a specific and canonical class of CSO problems. 

For CLO problems, a seminal approach is the "smart" predict-then-optimize (SPO) framework proposed by \cite{elmachtoub2022smart}, which integrates learning and optimization through an SPO loss that directly targets the decision-making quality in predictive model training.
To address the non-convex and non-smooth nature of the SPO loss, a convex surrogate, SPO+, was also proposed for tractable optimization. Subsequent research has explored the theoretical properties of these losses, establishing generalization error bounds \citep{el2023generalization} and calibration properties \citep{liu2021risk,ho2022risk}.
Some recent theoretical analyses have revealed a nuanced performance landscape for integrating learning and optimization. For instance, \cite{hu2022fast} show that the conventional ETO approach may achieve faster regret convergence than its integrated counterpart with a well-specified model class. \cite{elmachtoub2023estimate} employ  a stochastic dominance analysis for parametric statistical models, showing that the integrated approach outperforms ETO under model misspecification, but the order reverses under correct specification. 
\cite{huang2024decision,bennounacontextual} focus on  misspecified models and develop new surrogate losses for more effective learning and optimization.  
Despite this extensive body of work, existing research has universally assumed a full-feedback setting. This paper is the first to study CLO under the more challenging partial-feedback settings.

\subsubsection*{Bandit and Semi-bandit Feedback}
Our paper studies CLO under partial feedback, encompassing both bandit and semi-bandit settings. 
Bandit feedback has been widely studied in the bandit learning literature \citep{lattimore2020bandit,slivkins2019introduction}. 
Given that our problem involves a linear objective and leverages contextual features, it naturally connects to the rich literature on linear and contextual bandits \citep[e.g.,][]{auer2002using,dani2008stochastic,li2010contextual, abbasi2011improved,rusmevichientong2010linearly}. 
In addition, the bandit literature has also studied different granularities of feedback. The standard bandit-feedback model reveals only a single overall feedback observation for each chosen decision, but when the decision consists of multiple components (\eg, the segments in a chosen path or items in a chosen assortment), it is possible to observe the feedback for each component of a decision separately. The latter is called semi-bandit feedback in the literature \citep[\eg,][]{chen2013combinatorial,neu2013efficient,combes2015combinatorial,kveton2015tight}. 
This established literature, however, focuses on online learning, where data is collected sequentially to manage the exploration-exploitation trade-off. By contrast, our work addresses the offline setting, where the goal is to learn an effective decision policy from a pre-existing batch of historical data for future decision-making.

\subsubsection*{Off-policy Evaluation and Optimization of Contextual Bandits} \label{sec: background ope}

There also exists a vast body of literature on contextual bandits in the offline setting \citep[\eg,][]{zhao2012estimating,dudik2014doubly,swaminathan2015batch,kallus2018balanced,kitagawa2018should,athey2021policy,zhou2018offline,kallus2022doubly,si2023distributionally}. This  literature studies both off-policy evaluation (\ie, evaluating the  performance of a given decision policy from offline bandit-feedback data) and off-policy optimization (\ie, searching for a policy that optimizes the evaluated performance).
However, the literature is predominantly centered on a discrete action space. In contrast, our paper studies these problems with a general polytopal decision space. 

While a few recent studies have considered continuous decisions \citep[\eg,][]{kallus2018policy,ai2024data},  \cite{chernozhukov2019semi} is the most relevant. \cite{chernozhukov2019semi} also propose a doubly robust off-policy evaluation approach for a setting  where feedback is linear in the decision (or its  transformations). However, our paper differs from \cite{chernozhukov2019semi} in several key aspects. First, our proposed policy evaluation approach allows the linear span of the decision space to be strictly smaller than the entire space, which is the case in our canonical example of the stochastic shortest path problem. This renders a matrix-valued component in the doubly robust evaluation rank-deficient, thereby violating a key assumption imposed implicitly in \cite{chernozhukov2019semi}. Second, our learning guarantees are very different. We provide a fast-rate regret analysis under a margin condition, which contrasts with the slow-rate regret analysis in \cite{chernozhukov2019semi}. 
Finally, we tackle a computationally more complex policy optimization problem. Because the integration of learning and optimization leads to a policy class implicitly defined through an inner optimization problem, our overall task is fundamentally bi-level. We address this challenge by leveraging efficient surrogate loss functions. In contrast, the policy class in \cite{chernozhukov2019semi} can be specified directly and amenable to standard optimization methods.

\section{Contextual Linear Optimization with Bandit Feedback}\label{sec: problem}

In a CLO problem, a decision-maker needs to select a decision, denoted by $z$, from a set of feasible options $\Z$. The decision incurs an uncertain linear cost $Y\tr z$, which is determined by a random coefficient vector $Y\in\Y$ that is not observed at the time of the decision.
The decision-maker does, however, observe predictive features (or contexts) $X\in\mathcal{X}\subseteq \X$ prior to decision, which help reduce uncertainty in the random cost coefficients $Y$. The goal of CLO is to use the features $X$ to choose the decision $z$ that minimizes the expected cost. Mathematically, CLO can be expressed either as a contextual stochastic optimization problem or a linear optimization problem where the cost vector is a conditional expectation:
\begin{equation}\label{eq:CLO}\textstyle
\min_{z\in\Z}\Eb{Y\tr z\mid X=x}=\min_{z\in\Z}f^*(x)\tr z,\quad\text{where}\quad f^*(x)=\Eb{Y\mid X=x}.
\end{equation}
Throughout this paper, we assume that $\Z$ is a polytope; that is, $\Z$ is the convex hull of a finite set of vertices $\Z^\angle$ with $\sup_{z\in\Z^\angle}\norm{z}\leq B$. We also assume that $Y$ is bounded and, without loss of generality, that $Y\in \mathcal{Y} = \braces{y: \norm{y}\le 1}$.

To illustrate these concepts, recall our stochastic shortest path example. Here, the context $X$ is a vector of features predictive of travel times (\eg, weather, time of day). The cost coefficient vector $Y$ contains the random travel time for each of the $d$ edges in the road network, while $f^*(x)$ represents the expected travel time on each edge, conditional on context $x$. A decision $z$ is a vector that selects a feasible path from an origin to a destination, and the decision set $\mathcal{Z}$ is the collection of all such valid paths. The total cost, $Y\tr z$, is then the total travel time along the path $z$.

The central question in CLO is how to use {data} to learn a good policy, $\pi:\mathcal{X}\to\Z$, mapping contextual feature observations to effective decisions.
We begin in \Cref{sec: full-feedback} by reviewing the well-studied full-feedback setting. We provide background on two major existing approaches and highlight the end-to-end Induced Empirical Risk Minimization (IERM) approach, which integrates estimation and optimization.
We then turn to the bandit-feedback setting in \Cref{sec:ierm}---the primary focus of this paper---where we formulate the problem and describe the associated challenges for developing an end-to-end IERM framework. An extension to a semi-bandit feedback setting is deferred to \cref{sec: semi}.

\subsection{Background: CLO with Full Feedback}\label{sec: full-feedback}
The existing CLO literature assumes that the decision-maker can observe a batch of existing data $(X_i, Y_i)$ for $i = 1, \dots, n$, where the data are viewed as independent and identically distributed (i.i.d.) draws from the population joint distribution of $(X, Y)$.
This corresponds to a \emph{full-feedback} setting, where the random coefficient vector $Y_i$ for each past event is fully observed, so that the corresponding costs $Y_i^\top z$ for \textit{any} potential decision $z$ is fully known. In the stochastic shortest path example, this means that for each historical trip, we observe the complete vector of travel times on all segments of the road network (\cref{fig:full}). This complete information allows us to retroactively calculate the total travel time for any alternative route, even those that were not actually taken.

We consider two approaches for CLO with full-feedback data: an estimate-then-optimize (ETO) approach, and an end-to-end induced empirical risk minimization (IERM) approach.
Both the ETO and the IERM approaches are built on the concept of a generic plug-in policy (also referred to as an induced policy). Formally, for any given function $f: \R{p} \to \R{d}$ that can be viewed as a potential estimate for $f^*$ in \cref{eq:CLO}, the plug-in policy $\pi_f$ is defined as:
\begin{align}\label{eq: plug in policy}\textstyle
\pi_f(x) \in \arg\min_{z\in\Z} f(x)^\top z.
\end{align}
Note that for any given context value $x$, this policy solves a linear programming problem with coefficients given by $f(x)$.
An optimal policy for \cref{eq:CLO} corresponds to a plug-in policy $\pi_{f^*}$, which uses the true conditional expectation function, $f^*(x)$.
Without loss of generality, we restrict the value of $\pi_f(x)$ to the set of vertices $\Z^\angle$ of the polytope $\Z$, with ties broken according to some fixed rule (\eg, a total ordering over $\Z^\angle$ such as lexicographic).

The ETO approach first uses a supervised learning method to train an estimator $\hat{f}$ for the function $f^*$ in \cref{eq:CLO}, which predicts the coefficient vector $Y$ given the context $X$.
This can typically be implemented by choosing a hypothesis class of functions $\F \subseteq [\mathcal{X} \to \R{d}]$ (\eg, linear functions, decision trees, neural networks) and finding $\hat f \in \F$ that minimizes a prediction fitness criterion, such as the sum of squared errors. 
After training the estimator $\hat f$, the ETO approach then makes decisions according to the induced policy $\pi_{\hat f}$.  
 Notably, the ETO approach provides a two-step procedure to choose the policy $\pi_{\hat f}$ from the class of plug-in policies induced by $\F$:
 \begin{align}\label{eq: induced class}
    \Pi_\F = \braces{\pi_f: f \in \F} 
 \end{align}
 A key feature of this class is that because each policy $\pi_f$ is the result of an optimization problem in the form of \cref{eq: plug in policy}, it inherently satisfies the CLO constraints.
 However, in ETO, the first-step estimation of $\hat f$ completely ignores the downstream optimization; consequently, the process of choosing the policy $\pi_{\hat f}$ from the induced class $\Pi_\F$ is not entirely coherent.
 
 To integrate estimation and the optimization, we consider the IERM framework,  which directly searches for an induced policy within $\Pi_\F$ in \cref{eq: induced class} to minimize the expected cost \citep{hu2022fast}:  
 \begin{align} \label{eq: best in class policy full}%
    \Tilde{\pi}^* \in \arg\min_{\pi\in \Pi_\F} V(\pi), ~~ \text{ where } V(\pi)=\expect\bracks{f^*(X)^\top \pi(X)} = \expect\bracks{Y^\top \pi(X)}. 
 \end{align}
 Although the expected cost $V(\pi)$ is unknown, it can be unbiasedly estimated by the sample average, and the IERM approach minimizes the sample average cost on the training data: 
 \begin{align}\label{eq: IERM-full}%
    \hat\pi_{\text{F}} \in \arg\min_{\pi \in \Pi_\F} \frac{1}{n}\sum_{i=1}^n Y_i^\top \pi(X_i).
\end{align}
 This can be viewed as a one-step process that directly chooses the least-average-cost policy $\hat\pi_{\text{F}}$ within $\Pi_\F$. Alternatively, because any such policy is a plug-in policy, we can write $\hat\pi_{\text{F}} = \pi_{\hat f}$ for some $\hat f \in \F$. From this perspective, IERM can also be viewed as first choosing a $\hat f \in \F$ and then using the policy $\pi_{\hat f}$ to make decisions.\footnote{The framework in \cite{elmachtoub2022smart} follows this perspective but is slightly different in that break ties in \cref{eq: plug in policy} in a worst-case way. See also \Cref{sec: spo+}.} However, unlike ETO, IERM chooses $\hat f$ according to the downstream decision-making objective rather than the statistical accuracy (\eg, sum of squared errors) of the intermediate estimate. In this sense, the IERM approach is integrated and end-to-end.

Recent literature demonstrates that such end-to-end approaches often outperform ETO \citep{elmachtoub2023estimate}, with significant benefits in the \textit{misspecified} setting -- that is, when the function class $\F$ fails to contain the true expected cost function $f^*$, and the policy class $\Pi_\F$ does not include the globally optimal decision policy $\pi_{f^*}$ \citep[\eg,][]{elmachtoub2022smart,elmachtoub2023estimate}. In this case, the IERM approach still targets the best cost-minimizing policy within the plug-in policy class $\Pi_\F$, but the ETO approach may target a severely suboptimal policy without taking into account the downstream decision-making task. 
Notably, because the policy class $\Pi_\F$ is itself defined by the optimization problem in \cref{eq: plug in policy}, 
the IERM formulation in \cref{eq: IERM-full} involves a challenging bi-level optimization that is generally non-convex and non-smooth. Fortunately, the recent literature has proposed a range of practical computational approximations that optimize more tractable surrogate loss functions \citep[e.g.,][]{elmachtoub2022smart,huang2024learning,tang2024pyepo}.

Nonetheless, the IERM formulation in \cref{eq: IERM-full} requires observing the full feedback $Y_i$ in the data. 
In the next subsection, we turn to the challenges of developing an effective end-to-end approach for the more difficult bandit-feedback setting.

\subsection{Our Main Problem: CLO with Bandit Feedback}\label{sec:ierm}

The primary setting of our paper is the \emph{bandit-feedback} setting, where for each data point $i$, the decision-maker can only observe the total cost $C_i=Z_i\tr Y_i$ for the specific decision $Z_i$ but not the coefficient vector $Y_i$. In other words, the decision-maker can only  observe the cost for  a chosen decision but not the counterfactual cost of any alternative decision, much like in bandit problems \citep{lattimore2020bandit}. In the stochastic shortest path example, this is equivalent to knowing only the total travel time for the specific path that was taken.

Formally, assume we have \edit{an offline dataset consisting of} $n$ data points $\D_B = \braces{\prns{X_1, Z_1, C_1}, \dots, \prns{X_n, Z_n, C_n}}$ that are independent draws from a distribution on $(X,Z,C)$ generated in the following way. We first draw $(X, Z, Y)$ where the $(X,Y)$ distribution is as in the full feedback setting \edit{and $Z$ is generated according to some historical decision policies for data collection}. Then we set $C=Z\tr Y$ and omit the full $Y$. 

The learning task is to use these data $\D_B$ to learn a data-driven policy $\hat{\pi}_{\text{B}}: \mathcal{X}\to\Z$ that has a low \emph{regret}. Specifically, the regret of a given policy $\pi$ measures the expected sub-optimality of $\pi$ compared to the optimal policy $\pi_{f^*}$:
\begin{align}\textstyle
    \text{Reg}(\pi) = \expect_X\bracks{f^*(X)^\top {\pi(X) - \min_{z\in\Z}f^*(X)^\top z}} = V(\pi)-V(\pi_{f^*}),
\end{align}
where we recall that $V(\pi)=\expect_X\bracks{f^*(X)\tr \pi (X)}$ is the expected cost of policy $\pi$. A low-regret $\hat{\pi}_{\text{B}}$ implies the expected cost of $\hat{\pi}_{\text{B}}$ is close to the best achievable expected cost and is thus desirable.

Given the advantages of the end-to-end IERM approach in the full-feedback setting, a natural question is whether it can be extended to the bandit-feedback case. 
In particular, one can still consider the plug-in policy class $\Pi_\F$ (cf. \cref{eq: best in class policy full,eq: plug in policy}), and it is natural to again target the best policy within this class. However, doing so in the bandit-feedback setting introduces three major challenges.
The first challenge is \textit{evaluation}. In the bandit setting, we do not observe the full cost vector $Y$. Consequently, we can no longer use a simple sample average like that from \cref{eq: IERM-full} to estimate the expected cost $V(\pi)$ of a policy $\pi$, let alone optimize this estimate to find the best policy.
The second challenge is \textit{theoretical}. Even with a valid estimator for $V(\pi)$, it is unclear whether the policy that minimizes this estimate is guaranteed to perform well under misspecification. This is a crucial question, as a key advantage of IERM is its robustness in misspecified settings. However, the theory for contextual linear optimization under misspecification is explored only recently and has so far been restricted to the full-feedback setting \citep{huang2024decision,bennounacontextual}.
The third challenge is \textit{computational}. The optimization over the induced policy class $\Pi_\F$ remains difficult under bandit feedback, necessitating the development of practical and efficient computational surrogates.

We will address each of these challenges in turn: we tackle the evaluation challenge in \cref{sec: ierm-bandit}, the theoretical challenge in \cref{section: theory}, and the computational challenge in \cref{sec: spo+}.

\section{Induced Empirical Risk Minimization with Bandit Feedback}\label{sec: ierm-bandit}

In this section, we develop an integrated, end-to-end framework for learning CLO under bandit feedback. Our method extends the core principles of the full-feedback IERM in \cref{eq: IERM-full} to accommodate the limited information available in the bandit setting. At the end of this section, we unify the bandit-feedback and full-feedback IERM into a general framework, which underpins the theoretical and computational developments in \cref{section: theory,sec: spo+}.

Our IERM approach for the bandit setting is based on the same principle as the full-feedback case in \cref{eq: IERM-full}: directly minimizing an empirical estimate of the expected policy cost $V(\pi)$ over the induced policy class $\Pi_\F$. 
The key challenge, however, is that we do not observe the full cost vector $Y_i$. Therefore, we can no longer use the sample average estimator $\sum_{i=1}^n Y_i^\top \pi(X_i)/n$ to evaluate the expected policy cost. Instead, we must find alternative ways to evaluate $V(\pi)$ based on the observed bandit-feedback data $\D_B = \braces{\prns{X_1, Z_1, C_1}, \dots, \prns{X_n, Z_n, C_n}}$. 

\subsection{Population Evaluation of the Expected Policy Cost}

To evaluate the expected policy cost, we begin by analyzing how it can be identified from the population distribution that generates the bandit-feedback data. 
This resolves the evaluation problem at the population level and lays the foundation for finite-sample evaluation.

We make two basic assumptions about how the historical decisions were generated. 

\begin{assumption}\label{assump: dgp}
The data generating process satisfies the following two properties:
\begin{enumerate}
    \item (Ignorability) $\mathbb{E}\bracks{C \mid Z, X} = Z^\top f^*(X)$ (which holds, for example, if decisions are independent of coefficients given the context, \ie, $Z \indep Y \mid X$).
    \item (Coverage) 
    $\inf_{z\in\mathrm{span}(\mathcal{Z}):\norm{z}=1} z^\top \Sigma^*(X)z > 0$ almost surely, where $\Sigma^*(X) = \expect\bracks{ZZ^\top \mid X}$.
\end{enumerate}
\end{assumption}

The ignorability condition is a common assumption that plays a fundamental role in learning with bandit feedback and causal inference \citep[\eg,][]{imbens2015causal,athey2021policy,chernozhukov2019semi}. It requires that the assignment of the historical decisions $Z$ does not depend on any unobserved factors potentially dependent on $Y$. If this condition fails—\ie, if there are unobserved factors related to both $Z$ and $Y$—then we still have $\mathbb{E}\bracks{C \mid Z, X} = Z^\top \mathbb{E}\bracks{Y \mid Z, X}$, but $\mathbb{E}\bracks{Y \mid Z, X}$ may arbitrarily deviate from the function of interest $f^*(X) = \mathbb{E}\bracks{Y \mid  X}$ due to correlation between $Z$ and $Y$ induced by those unobserved factors.
In this case, it is inherently impossible to faithfully learn the target function $f^*$ from bandit-feedback data alone. This necessitates the ignorability assumption.

The coverage condition is an analogue of the overlap or positivity assumption in causal inference \citep{imbens2015causal}. 
It ensures that, for any given context $X$, the historical decisions are sufficiently diverse to explore all directions in the linear span of the constraint set $\mathcal{Z}$. 
Equivalently, it means that the conditional Gram matrix, $\Sigma^*(X)$, is non-singular on $\mathrm{span}(\mathcal{Z})$. 
This requirement is weaker than demanding exploration of the entire space $\mathbb{R}^d$ or that $\Sigma^*(X)$ be full-rank on $\mathbb{R}^d$.
Indeed, in many applications $\mathrm{span}(\mathcal{Z})$ is much smaller than $\mathbb{R}^d$, so this weaker assumption suffices (see the shortest-path example in \cref{sec: numerical}). Without coverage, the bandit-feedback data may provide no information about the decision cost of some feasible policies.

To characterize the expected policy cost $V(\pi)$ in terms of the bandit-feedback data distribution, we rely on two functions. First, the conditional Gram matrix, $\Sigma^*(X) = \expect\bracks{ZZ^\top \mid X}$, which depends only on the distribution of the observed variables \((Z,X)\) and is therefore estimable. Second, let $\tilde f^*$ be any solution to the least-squares problem
\begin{align}\label{eq: f-nuisance}
    \tilde f^* \in \mathcal{F}^* \coloneqq \argmin_{f: \mathcal{X} \to \R{}} \mathbb{E}[(C - Z^\top f(X))^2].    
\end{align}
Notably, \cref{eq: f-nuisance} involves only the distribution of the observed data $(X, Z, C)$, so it is possible to estimate one of its solutions from data.
When \(\Sigma^*(X)\) is almost surely non-singular, \cref{eq: f-nuisance} has an almost surely unique solution \(\tilde f^* = f^*\). When \(\Sigma^*(X)\) is rank-deficient, multiple solutions may exist (and may differ from \(f^*\)), but any such solution suffices for our purposes. We discuss estimation of \(\tilde f^*\) and \(\Sigma^*\) in \Cref{remark:f,remark:sigma0}. 
We refer to these functions as \emph{nuisance functions}, following the existing literature on off-policy evaluation and learning in contextual bandits and reinforcement learning \citep{uehara2022review}, because they are not directly used for decision-making but serve merely as intermediaries for evaluating the policy value.

Our key idea is to replace the unobserved coefficient vector $Y$ with a score function, $\theta(X,Z,C; \tilde f^*, \Sigma^*)$, that depends only on the observed data $(X, Z, C)$ and estimable nuisance functions $\tilde f^*, \Sigma^*$. The score function must enable the evaluation of expected policy cost, just as the unobserved $Y$ would: for any policy $\pi:\mathcal{X}\to \Z$, 
\begin{align} \label{eq: theta condition}
    \mathbb{E}_{X,Z,C}\bracks{\theta(X,Z,C; \tilde f^*, \Sigma^*)^\top \pi(X)} = V(\pi).
\end{align}
\cref{prop: unbiased} summarizes several possible choices of the score function. 
\begin{lemma}\label{prop: unbiased}
  Under \cref{assump: dgp}, 
  the following choices for the score function $\theta(X,Z,C; f, \Sigma)$, when evaluated at the true nuisance functions $f = \tilde f^* \in \mathcal{F}^*$ and $\Sigma=\Sigma^*$, all satisfy the policy evaluation identity in \cref{eq: theta condition}:
  \begin{enumerate}
  \item (Direct Method) $\theta_{\text{DM}}(X,Z,C; f,\Sigma) = f(X)$.
  \item (Inverse Spectral Weighting) $\theta_{\text{ISW}}(X,Z,C; f,\Sigma) = \Sigma^{\dagger}(X)ZC$.
      \item (Doubly Robust) $\theta_{\text{DR}}(X,Z,C; f,\Sigma) = f(X) + \Sigma^{\dagger}(X) Z (C - Z^\top f(X))$.
  \end{enumerate}
Here, $\Sigma^\dagger(X)$ denotes the Moore–Penrose pseudo-inverse of matrix $\Sigma(X)$.
\end{lemma}

\Cref{prop: unbiased} provides three score functions for evaluating the expected cost of any decision policy. 
In particular, the ``Direct Method'' (DM) score uses only the least-squares nuisance \(\tilde f^*\); the ``Inverse Spectral Weighting'' (ISW) score uses only the conditional Gram matrix \(\Sigma^*\); and the ``Doubly Robust'' (DR) score combines both nuisance components. 
While a similar DR score was considered by \cite{chernozhukov2019semi}, our formulation is more general. It explicitly handles rank-deficient design matrices $\Sigma^*$ via the pseudo-inverse and accommodates cases where the least-squares problem \cref{eq: f-nuisance} for $\tilde f^*$ may have non-unique solutions.
This generalization is practically important for applications like our shortest path experiment (\cref{sec: numerical}), where the decision set's linear span is a low-dimensional subspace, leading to a rank-deficient $\Sigma^*$.

\subsection{Sample Estimation of the Expected Policy Cost}

Since the data-generating distribution and the true nuisance functions $\tilde f^*$ and $\Sigma^*$ are unknown, the objective in \cref{eq: theta condition} must be estimated from data. To this end, we adopt the $K$-fold cross-fitting procedure in \cite{chernozhukov2018double}, which is common in the offline contextual bandit literature \citep{athey2021policy,zhou2018offline}. For simplicity, we assume that $n/K$ is an integer.

The procedure begins by splitting the dataset $\mathcal{D}_B$ into $K \ge 2$ equal-sized subsamples (or folds), denoted as $\mathcal{D}_B^{(1)}, \dots, \mathcal{D}_B^{(K)}$. For each fold $k\in[K]$, we define $\Dcal^{(-k)}_B = \Dcal_B\setminus\Dcal_B^{(k)}$ as the dataset with the $k$-th fold held out. The data $\mathcal{D}_B^{(-k)}$ can be used as an auxiliary sample to train nuisance function estimators, $\hat{f}^{(-k)}$ and $\hat{\Sigma}^{(-k)}$, for functions $\tilde f^*$ and $\Sigma^*$ respectively. These estimates are then used on the held-out fold $\mathcal{D}_B^{(k)}$ to form an estimate of the policy cost $V(\pi)$:
\begin{align*} 
    \textstyle 
    \frac{K}{n} \sum_{i\in \mathcal{D}_B^{(k)}} \theta\prns{X_i, Z_i, C_i;\hat{f}^{(-k)}, \hat{\Sigma}^{(-k)}}^\top \pi(X_i).
\end{align*}
This process is repeated for all $K$ folds. The final IERM policy, $\hat{\pi}_B$, is found by minimizing the average of these $K$ cross-fitted policy cost estimates over the induced policy class $\Pi_\F$:
\begin{align} \label{eq: pi ierm}
    \textstyle 
    \hat{\pi}_B \in \arg\min_{\pi\in \Pi_\F} \frac{1}{n} \sum_{k\in[K]} \sum_{i\in \mathcal{D}_B^{(k)}} \theta\prns{X_i, Z_i, C_i;\hat{f}^{(-k)}, \hat{\Sigma}^{(-k)}}^\top \pi(X_i),
\end{align}
where the subscript $B$ stands for ``Bandit'' feedback. 
As a concrete example, substituting the doubly robust (DR) score $\theta_{{DR}}$ into this objective gives the following policy optimization problem: 
\begin{align}\label{eq: pi ierm dr}
    \hat\pi_{\text{DR-B}} \in \arg\min_{\pi\in \Pi_\F} \frac{1}{n} \sum_{k\in[K]} \sum_{i\in \mathcal{D}_B^{(k)}}\prns{\hat{f}^{(-k)}(X_i) + \prns{\hat{\Sigma}^{(-k)}(X_i)}^{\dagger} Z_i \prns{C_i - Z_i^\top \hat{f}^{(-k)}(X_i)} }^\top \pi(X_i).
\end{align}

\begin{remark}[Estimation of $\tilde f^*$]\label{remark:f}
The nuisance function $\tilde f^*$ is a solution to the least squares problem in \cref{eq: f-nuisance}. We can replace the expected objective by its sample analogue on each dataset $\mathcal{D}_B^{(-k)}$, and minimize the sample loss over a proper function class  $\F^{\text{N}}$ (\eg, linear functions, tree models, neural networks) to obtain the corresponding nuisance estimator:  
\begin{equation} \label{eq: nuisance f}
    \textstyle 
 \hat{f}^{(-k)}\in \arg\min_{f\in \F^{\text{N}}} \frac{1}{|\mathcal{D}_B^{(-k)}|}\sum_{i\in \mathcal{D}_B^{(-k)}}  \prns{C_i - Z_i^\top f(X_i)}^2.
\end{equation}
Note that the nuisance function class  $\F^{\text{N}}$ can also be viewed as a model class for $f^*$, since $f^*$ is a canonical solution to the least squares problem in \cref{eq: f-nuisance}. However, it is different from the class $\F$ that induces the plug-in policy class $\Pi_\F$ in \cref{eq: best in class policy full}, as $\F^{\text{N}}$ is used to construct nuisance estimators for evaluating induced policies.
 In practice, we do not need to use the same class for $\F$ and $\F^{\text{N}}$. In fact, it might be desirable to use a more flexible function class for $\F^{\text{N}}$ to make sure it is well-specified for accurate policy evaluation, and a simpler class for $\F$ to make the policy optimization more tractable. We numerically test out different choices of $\F$ and $\F^{\text{N}}$ in \cref{sec: numerical}.
\end{remark}

\begin{remark}[Estimation of $\Sigma^*$]\label{remark:sigma0}
In many applications, the conditional Gram matrix $\Sigma^*(X) = \mathbb{E}[ZZ^\top \mid X]$ can be computed exactly and need not be estimated from data. This is often the case when the logging policy that generated the data is known (\eg, from a prior A/B test or a previously deployed policy) so that the $Z \mid X$ conditional distribution is known, which is analogous to the setting of known propensity scores in the causal inference and off-policy contextual bandit literature \citep[e.g.,][]{swaminathan2015batch,si2023distributionally}.
   When the logging policy is not available, there are multiple ways to estimate $\Sigma^*$. For example, 
   \cite{chernozhukov2019semi} suggest estimating $\Sigma^*$ by running a multi-task regression  for all $(j, k)$ entries to the matrix over some appropriate hypothesis spaces\footnote{To ensure a positive semi-definite estimator, we may posit each hypothesis $\Sigma$ to be the outer product of some matrix-valued hypothesis of appropriate dimension.}  $\mathcal{S}_{jk}$: $ \hat{\Sigma}^{(-k)} = \arg\min_{\Sigma_{11} \in \mathcal{S}_{11}, \dots, \Sigma_{dd}  \in \mathcal{S}_{dd}} \sum_{i\in 
\mathcal{D}_B^{(-k)}}\|Z_iZ_i^\top - \Sigma(X_i)\|_{Fro}^2$. 
Alternatively, in settings with finitely many feasible decisions $z_1,\dots,z_m$ (\eg, feasible paths in stochastic shortest-path problems; see \cref{sec: numerical}; or, more generally, the vertices in $\mathcal{Z}^{\angle}$), one can first estimate the propensity scores $e^*(z \mid X)=\mathbb{P}(Z=z \mid X)$ for $z = z_1, \dots, z_m$ using a suitable estimator $\hat e^{(-k)}(z \mid X)$ and then estimate $\Sigma^*(X) = \sum_{j=1}^m z_jz_j^\top e^*(z_j \mid X)$ by $\hat \Sigma^{(-k)}(X) = \sum_{j=1}^m z_jz_j^\top \hat e^{(-k)}(z_j \mid X)$. 
\end{remark}

\subsection{A General IERM Framework}
\label{sec: general ierm}
We conclude this section by introducing a generic IERM framework with $K$-fold cross-fitting, which includes the full-feedback setting in \Cref{sec: full-feedback} and the bandit-feedback setting in this section as special cases. In fact, it can also accommodate the semi-bandit-feedback setting in \Cref{sec: semi}. 

Let $\mathcal{D}=\braces{\Data_i}_{i=1}^n$ be an observed, i.i.d. historical dataset, which we partition into $K$ disjoint subsets, $\mathcal{D}^{(1)},\dots, \Dcal^{(K)}$, each of size $n/K$. We consider the existence of a general score function that can enable the evaluation of policy cost.

\begin{assumption}\label{assump: score general}
    Suppose there exists a score function $\theta(\Data;\nuisance)$ with nuisance $\nu = \nu^*$ such that 
    \begin{align*}
        \textstyle 
        \expect_\Data \bracks{\theta(\Data; \nuisance^*)^\top \pi(X)} = V(\pi), ~~ \forall \text{ fixed policy } \pi: \mathcal{X} \to \Z.
    \end{align*}
\end{assumption}

We again use cross-fitting to estimate the unknown nuisance $\nuisance^*$, and denote by $\hat{\nuisance}^{(-k)}$ the estimator trained on $\mathcal{D}^{(-k)} = \Dcal \setminus \Dcal^{(k)}$. We then obtain the IERM policy over the induced policy class $\Pi_{\F}$ by solving the following optimization problem: 
\begin{align} \label{eq: general ierm}
    \textstyle 
    \hat{\pi} \in \arg\min_{\pi\in \Pi_\F} \frac{1}{n} \sum_{k\in[K]} \sum_{i\in \Dcal^{(k)}} \theta\prns{\Data_i;\hat{\nuisance}^{(-k)}}^\top \pi(X_i).
\end{align}

This formulation provides a unified view of different settings. 
The full-feedback setting is a special case with observation $\Data = (X, Y)$, no nuisance parameter ($\nuisance^*=\emptyset$), and a simple score function $\theta(\Data;\nuisance) = Y$.
The bandit feedback setting corresponds to $\Data = (X, Z,C)$ with nuisances $\nuisance^*=(\tilde f^*, \Sigma^*)$, and when \Cref{assump: dgp} holds,  the score function $\theta$ has three possible forms as detailed in \cref{prop: unbiased}. The instantiation in the semi-bandit-feedback setting will be presented in \Cref{sec: semi}.

For generality, our subsequent analysis of theoretical guarantees in \cref{section: theory} and computationally efficient surrogates in \cref{sec: spo+} will focus on the general IERM framework.

\section{Theoretical Analysis} \label{section: theory}

In this section, we provide a theoretical regret analysis for the general IERM framework in \cref{sec: general ierm}. 
We begin by establishing a general fast-rate regret bound.
Crucially, our analysis allows for model misspecification, where the induced policy class $\Pi_\F$ may not contain the globally optimal policy $\pi_{f^*}$. 
We then instantiate this general theorem for the bandit-feedback and full-feedback settings to derive specific corollaries. 
The proof of the main theorem is detailed in the final part of this section, as it introduces new analytical tools that may be of independent interest.

\subsection{Preliminaries}
Given an induced policy class $\Pi_\F$, recall that the best-in-class policy $\Tilde{\pi}^*$ (defined in \cref{eq: best in class policy full}) minimizes the expected policy cost within this class. 
As a key departure from many existing theoretical analyses in the literature, we do not assume that the policy class $\Pi_\F$ is well-specified. This means the globally optimal policy $\pi_{f^*}$ may not be contained in $\Pi_\F$, in which case $\Tilde{\pi}^*$ differs from $\pi_{f^*}$. The regret of this best-in-class policy, $\text{Reg}(\Tilde{\pi}^*)$, quantifies the degree of misspecification inherent to the induced policy class $\Pi_\F$: 
\begin{align*}
    \textstyle 
    \text{Reg}(\Tilde{\pi}^*) = V(\Tilde{\pi}^*) - V(\pi_{f^*}) = \min_{\pi\in\Pi_\F}V(\pi) - \min_{\pi: \mathcal{X} \to \Z}V(\pi). 
\end{align*}

Throughout \cref{section: theory}, we impose two key assumptions.
The first, \cref{assumption: nuisance}, formalizes the requirement that the nuisance function estimator is well-behaved. It requires that the resulting score function is bounded and that the nuisance estimate converges to its true value at a specific rate.

\begin{assumption}[Nuisance Estimation] \label{assumption: nuisance}
The nuisance estimator $\hat{\nuisance}$, trained on an i.i.d sample of size $n$, satisfies $\|\theta(\Data;\hat{\nuisance})\| \le \Theta$ almost surely. Furthermore, for any $\delta\in(0,1)$ and $\pi\in \Pi_\F$, with probability at least $1-\delta$, the following holds:
\begin{align*} \textstyle
    \mathbb{E}_\Data\bracks{\prns{\theta(\Data; \nuisance^*)- \theta(\Data;\hat{\nuisance})}^\top \prns{\pi(X)- \Tilde{\pi}^*(X)}  }
\le \operatorname{Rate}^{\textbf{N}}(n, \delta).
\end{align*}
\end{assumption}
This assumption is trivial for the full-feedback setting, where $\nuisance^*=\hat{\nuisance}=\emptyset$ and $\operatorname{Rate}^{\textbf{N}}(n, \delta)=0$. For the bandit-feedback setting, in \cref{sec: theory bandit} we will relate  $\operatorname{Rate}^{\textbf{N}}(n, \delta)$ to the estimation errors of the individual nuisances, $\tilde f^*$ and $\Sigma^*$, for various choices of the score function $\theta$.

Our second assumption, \cref{assumption: margin}, is a margin condition that restricts the probability of small sub-optimality gaps in the CLO problem instance, which enables faster regret rates. Such conditions originated from the binary classification literature \citep{tsybakov2004optimal,audibert2007fast}; we adopt the recent extension to contextual linear optimization by \cite{hu2022fast}.

\begin{assumption}[Margin Condition]\label{assumption: margin}
Let $\mathcal{Z}^*(x) = \arg\min_{z\in \mathcal{Z}^\angle} f^*(x)^\top z$ define the set of optimal decisions. Let the sub-optimality gap be $\Delta(x) = \inf_{z\in\mathcal{Z}^\angle\setminus \mathcal{Z}^*(x)} f^*(x)^\top z - \inf_{z\in\mathcal{Z}^\angle} f^*(x)^\top z$ if $\mathcal{Z}^*(x) \ne \mathcal{Z}^\angle$, and $\Delta(x) = 0$ otherwise. Assume for some constants $\alpha, \gamma\ge 0$, 
\[
\pr_X(0<\Delta(X)\le \delta )\le (\gamma\delta/B)^{\alpha} \quad \forall \delta>0.
\]
\end{assumption}
Lemmas 4 and 5 in \cite{hu2025fast} establish that \cref{assumption: margin} typically holds with $\alpha = 1$ when $f^*$ is sufficiently well-behaved and $X$ is continuous, and with $\alpha = \infty$ when $X$ is discrete. Moreover, any CLO instance trivially satisfies the case $\alpha = 0$. Intuitively, larger values of $\alpha$ indicate that the sub-optimality gap between the best and second-best decisions tends to be larger across more contexts, making it easier to identify the optimal decision. We will show that a larger $\alpha$ leads to faster regret rates.

Finally, for $k\in[K]$, we define the function class $\mathcal{G}^{(-k)}$ as:
\begin{align} \label{eq: function class G}
    \mathcal{G}^{(-k)} = \braces{\data \rightarrow \frac{\theta\prns{\data;\hat{\nuisance}^{(-k)}}^\top \prns{\pi(x) - \Tilde{\pi}^*(x)} \rho}{2B\Theta} :  \pi\in\Pi_\F, \rho\in[0,1]}.
\end{align}
Our theoretical bounds involve the critical radius of this function class. The critical radius is a generic measure of function complexity \citep{wainwright2019high} that characterizes many function classes, both parametric (for which the critical radius is generally $O(1/\sqrt{n})$) and non-parametric (for which the critical radius generally decays more slowly). We derive bounds on critial radii in \cref{sec: critical radius}.

For any class of functions $\Gcal$ mapping from the data space to $\mathbb{R}$, its local Rademacher complexity at radius $r>0$ and sample size $n$ is defined as:
\begin{align*}
    \textstyle 
    \Rcal_n(\Gcal, r) = \mathbb{E}\bracks{\sup_{g\in \mathcal{G}, \norm{g}_2\le r} \abs{\frac{1}{n} \sum_{i=1}^n \epsilon_i g(\Data_i)}},
\end{align*}
where $\{\epsilon_i\}$ are i.i.d. Rademacher random variables (\ie, $\epsilon_i$ takes $\pm 1$ with equal probability) and $\norm{g}_2 = \sqrt{\mathbb{E}_\Data [g^2(\Data)]}$. Finally, the critical radius of $\Gcal$ at sample size $n$ is any $\tilde r>0$ that satisfies $\Rcal_{n}\prns{\Gcal, \tilde{r}} \le \tilde{r}^2$. Because $\Gcal^{(-k)}$ is star-shaped ($g\in\Gcal^{(-k)},\rho\in[0,1]\implies\rho g\in\Gcal^{(-k)}$), we have that $\Rcal_{n}(\Gcal^{(-k)}, r)/r$ is non-increasing, so there exists a solution and any larger value is a critical radius.

\subsection{Main Theorem}\label{sec:main-thm}

We now state our main theorem, an upper bound on the regret of the IERM policy $\hat \pi$ in \cref{eq: general ierm}.

\begin{theorem}\label{thm: doubly robust fast rates}
Suppose that \cref{assump: score general,assumption: nuisance,assumption: margin} hold, and that the set of optimal decisions $\mathcal{Z}^*(X)$ (defined in \cref{assumption: margin}) is almost surely a singleton. Let $\tilde{r} > 0$ bound above the critical radii of $\Gcal^{(-k)}$ at sample size $n/K$ for all $k\in[K]$ as well as satisfy
$3n\tilde{r}^2/(64K) \ge \log \log_2(1/\tilde{r})$
and
$2\exp(-3n\tilde{r}^2/(64K)) \le \delta/(2K)$.
Then, there exists a positive constant $\tilde{C}(\alpha,\gamma)$ such that with probability at least $1-\delta$, we have 
\begin{align}
    \text{Reg}(\hat{\pi}) \le & 2\text{Reg}(\tilde{\pi}^*) 
    + 2\text{Rate}^{\textbf{N}}\prns{\frac{(K-1)n}{K}, \frac{\delta}{2K}} 
    + B \prns{12\Theta \sqrt{\tilde{C}(\alpha, \gamma)} \tilde{r}}^{\frac{2\alpha+2}{\alpha + 2}}  \notag
    \\& + 24 B\Theta \prns{ \sqrt{\tilde{C}(\alpha, \gamma)}\prns{\frac{\text{Reg}(\tilde{\pi}^*)}{B}}^{\frac{\alpha}{2(1+\alpha)}}\tilde{r} + \tilde{r}^2}  . \label{eq: upper bound}
\end{align}
\end{theorem}

We note that the fourth term on the right-hand side of \cref{eq: upper bound} is composed of higher-order components. Specifically, its first component, $O\prns{\text{Reg}(\tilde{\pi}^*)^{\frac{\alpha}{2(1+\alpha)}} \tilde{r}}$, is dominated by $O\prns{\text{Reg}(\tilde{\pi}^*) + \tilde{r}^{\frac{2\alpha+2}{\alpha+2}}}$. Its second component, $O(\tilde r^2)$, is dominated by $O\prns{\tilde r^{\frac{2\alpha+2}{\alpha+2}}}$ because $\tilde r=o(1)$ generally holds. 
This fourth term is therefore dominated by the first three terms. These three terms, which we will focus on in the rest of our analysis,  characterize 
three primary sources of error. 
\begin{enumerate}
    \item \textit{Misspecification Error.} The first term, $\text{Reg}(\tilde{\pi}^*)$, is the misspecification error, which quantifies the sub-optimality of the best-in-class policy $\tilde{\pi}^*$ relative to the globally optimal policy $\pi_{f^*}$. This term is zero if the policy class $\Pi_\F$ is well-specified (\ie, if $\pi_{f^*} \in \Pi_\F$). It is natural to expect a bigger decision regret when the misspecification error $\text{Reg}\prns{\Tilde{\pi}^*}$ is higher.

    \item \textit{Nuisance Estimation Error.} The second term, $\operatorname{Rate}^{\textbf{N}}((K-1)n/K, \delta/(2K))$, results from estimation errors of the $K$ cross-fitted nuisance estimators  for $\nuisance^*$, each trained on a subsample of size $(K-1)n/K$. Similar nuisance error terms also appear in the literature on offline contextual bandits \citep{athey2021policy,zhou2018offline,chernozhukov2018double} and other learning problems with nuisance components \citep{foster2023orthogonal}. 
    The behavior of this error term is what primarily distinguishes the different feedback settings. We will further characterize this term for the bandit-feedback setting in \cref{sec: theory bandit} and the semi-bandit-feedback setting in \cref{sec: semi}.

\item \textit{Statistical Complexity.} The third term is governed by the critical radius $\tilde{r}$, which characterizes the complexity of each function class $\Gcal^{(-k)}$, when evaluated on the  subsample $\mathcal{D}^{(k)}$ for $k\in[K]$. This is a standard function class complexity measure in statistics and machine learning \citep{wainwright2019high}.  As a concrete example, we show in \cref{sec: critical radius} that if each $\Gcal^{(-k)}$ has a finite VC-subgraph dimension,
  then $\tilde{r}$ is of the order $\tilde{O}(\sqrt{K/n})$. However, stating our main bound directly in terms of $\tilde{r}$ provides greater generality and accommodates wider function classes. A key consequence of the margin condition (\cref{assumption: margin}) is that, unlike generic bounds that  scale linearly with the critical radius, our regret bound scales with a polynomial of $\tilde{r}$, enabling potentially faster convergence.
    
\end{enumerate}

\begin{remark} \label{remark: tighter bound}
The constant coefficients in the regret upper bound can be tightened when $2(1+\alpha)/\alpha$ is an integer. This condition notably includes the important case of $\alpha = 1$, which holds for well-behaved functions $f^*$ and continuous contexts $X$ \citep{hu2025fast}. In such scenarios, the bound improves to:
\begin{align}
    \text{Reg}(\hat{\pi}) \le & \text{Reg}\prns{\tilde{\pi}^*} + 
    \frac{2\alpha+2}{\alpha+2}\text{Rate}^{\textbf{N}}\prns{\frac{(K-1)n}{K}, \frac{\delta}{2K}}  +
    B \prns{12\Theta \sqrt{\tilde{C}(\alpha, \gamma)} \tilde{r}}^{\frac{2\alpha+2}{\alpha + 2}}  \notag
    \\ & + \frac{24(\alpha+1)}{\alpha+2} B\Theta \prns{ \sqrt{\tilde{C}(\alpha, \gamma)}\prns{\frac{\text{Reg}(\tilde{\pi}^*)}{B} }^{\frac{\alpha}{2(1+\alpha)}}\tilde{r} + \tilde{r}^2}. \label{eq: tighter bound}
\end{align}
Notably, the coefficient on the misspecification error, $\text{Reg}(\tilde{\pi}^*)$, is reduced from $2$ to $1$, which we believe is the optimal constant. The derivation of this tighter bound is presented in \cref{sec: main proof} following the proof of our main theorem.
\end{remark}

\subsubsection{Critical Radius for VC-Subgraph Classes} \label{sec: critical radius}

In this section, we present a concrete example of computing the critical radius. Specifically, we focus on function classes with finite VC-subgraph dimension, a widely used complexity measure in the literature \citep{van1996weak}.

\begin{definition}[VC-Subgraph Dimension] 
The VC-subgraph dimension (also known as the pseudo-dimension) of a  function class $\Gcal$ is defined as the VC dimension of its subgraph class: 
\[
    \braces{(x,t) \mapsto \ind[t \le g(x)] : g \in \Gcal}.
\]
\end{definition}

The following proposition establishes that the critical radius $\tilde{r}$ is of order $\tilde{O}(\sqrt{K\eta/n})$ when each function class $\Gcal^{(-k)}$ has a finite VC-subgraph dimension $\eta$ almost surely.

\begin{proposition} \label{lemma: critical radius}
Suppose $\Gcal^{(-k)}$ has VC-subgraph dimension $\eta$ almost surely. Then, for any $\delta \in (0,1)$, there exists a universal constant $\Tilde{C}>0$ such that
\begin{align}\label{eq: critical radius}\textstyle
    \Tilde{r} = \Tilde{C} \sqrt{\frac{\eta K \log(n+1) + K \log(8/\delta)}{n}},
\end{align}
satisfies the inequalities $\Rcal_{n/K}\!\prns{\Gcal^{(-k)}, \Tilde{r}}\le \Tilde{r}^2$, 
$3n \Tilde{r}^2 / (64K) \ge \log \log_2(1/\Tilde{r})$, 
and $2\exp\!\prns{-3 n \Tilde{r}^2/(64K)} \le \delta/(2K)$ in \Cref{thm: doubly robust fast rates}. 
\end{proposition}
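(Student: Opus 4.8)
The plan has two parts: (I) bound the local Rademacher complexity $\mathcal{R}_n(\mathcal{G}_1,r)$ as a function of $r$ using the VC-subgraph structure, which identifies a critical radius of order $\sqrt{\eta\log(n+1)/n}$; and (II) check that the stated $\tilde r$ — which additionally inflates this by the $\log(8/\delta)$ term — still satisfies the two numerical inequalities $3n\tilde r^2/64\ge\log\log_2(1/\tilde r)$ and $2\exp(-3n\tilde r^2/64)\le\delta/4$. Throughout we condition on the auxiliary subsample used to fit $\hat f_1,\hat\Sigma_1$, so that $\mathcal{G}_1$ is a fixed (non-random) function class whose VC-subgraph dimension is at most $\eta$ by hypothesis.

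For part (I), I would first record two structural facts. Every $g\in\mathcal{G}_1$ obeys $|g|\le\|\theta(\cdot;\hat f_1,\hat\Sigma_1)\|\cdot\|\pi(\cdot)-\tilde\pi^*(\cdot)\|/(2B\Theta)\le\Theta\cdot 2B/(2B\Theta)=1$, using the bound $\|\theta\|\le\Theta$ from \cref{assumption: nuisance} and $\|\pi(x)\|,\|\tilde\pi^*(x)\|\le B$; thus $\mathcal{G}_1$ has envelope $1$. Also $\mathcal{G}_1$ is \emph{star-shaped} about $0$, since the scalar $\rho$ ranges over $[0,1]$ and the defining map is linear in $\rho$; hence $r\mapsto\mathcal{R}_n(\mathcal{G}_1,r)/r$ is non-increasing and every $r$ above the critical radius again satisfies $\mathcal{R}_n(\mathcal{G}_1,r)\le r^2$. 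Now a VC-subgraph class of dimension $\le\eta$ with envelope $1$ has uniform $L_2$ covering numbers $\log N(t,\mathcal{G}_1,L_2(Q))\le c_0\eta\log(2/t)$ for every probability measure $Q$ and $t\in(0,1]$ \cite{wainwright2019high}; inserting this into Dudley's entropy-integral bound for the localized Rademacher complexity, together with the standard comparison between the population $L_2$-ball appearing in the definition of $\mathcal{R}_n$ and the empirical $L_2$-ball used in chaining (equivalent up to constants once $r\gtrsim1/\sqrt n$), yields a universal constant $C_0$ with
\[
\mathcal{R}_n(\mathcal{G}_1,r)\ \le\ C_0\Big(r\sqrt{\tfrac{\eta\log(2/r)}{n}}+\tfrac{\eta\log(2/r)}{n}\Big),\qquad r\in(0,1].
\]

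For part (II), write $a:=\eta\log(n+1)+\log(8/\delta)$ and set $\tilde r=\tilde C\sqrt{a/n}$. Assuming $\eta\ge1$ (else $\mathcal{G}_1$ is trivial) we have $a\ge\log 8>1$, so for $\tilde C\ge1$ it holds that $1/\sqrt n\le\tilde r$ and therefore $\log(2/\tilde r)\le\log 2+\tfrac12\log n\le\log(n+1)\le a$; also $a\ge\eta\log(n+1)$ and $a\ge\log(8/\delta)$. Substituting into the display gives $\mathcal{R}_n(\mathcal{G}_1,\tilde r)\le C_0(\tilde r\sqrt{\eta\log(n+1)/n}+\eta\log(n+1)/n)\le C_0(\tilde C^{-1}+\tilde C^{-2})\tilde r^2\le\tilde r^2$ once $\tilde C$ is large enough, which is the first inequality. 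For the third, $3n\tilde r^2/64=3\tilde C^2a/64\ge3\tilde C^2\log(8/\delta)/64\ge\log(8/\delta)$ provided $\tilde C^2\ge64/3$, hence $2\exp(-3n\tilde r^2/64)\le\delta/4$. For the second, if $\tilde r\ge\tfrac12$ the left side is $\le0$ and the inequality is trivial; otherwise $1/\tilde r\le\sqrt n/\tilde C\le\sqrt n$ gives $\log\log_2(1/\tilde r)\le\log\log_2\sqrt n\le\log(n+1)\le a\le3\tilde C^2a/64=3n\tilde r^2/64$ for $\tilde C$ large. Taking $\tilde C$ to be the maximum of the finitely many thresholds produced above proves the proposition; the identical argument applied to $\mathcal{G}_2$ shows the same $\tilde r$ bounds its critical radius as well.

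I expect the main obstacle to be part (I), and within it the reconciliation of the \emph{population} $L_2$-localization used in the paper's definition of $\mathcal{R}_n(\mathcal{G}_1,r)$ with the \emph{empirical} $L_2$ metric that chaining naturally controls: making this precise requires a concentration (fixed-point) argument showing the two localizing balls are comparable on the scale $r\asymp\tilde r$, or equivalently invoking a ready-made critical-radius bound for uniformly bounded star-shaped VC classes such as those in \cite{wainwright2019high}. Everything after the displayed complexity bound is elementary algebra with universal constants.
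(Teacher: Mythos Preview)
Your plan matches the paper's proof: both (i) bound the \emph{empirical}-$L_2$ localized Rademacher complexity via the Dudley/Pollard entropy integral together with the VC-subgraph covering bound, obtaining a linear-in-$r$ upper bound $\hat{\mathcal R}_n(\mathcal G_1,r)\lesssim r\sqrt{\eta\log(n+1)/n}$ and hence an empirical critical radius $\hat r_n^*\lesssim\sqrt{\eta\log(n+1)/n}$; (ii) transfer this to the population-localized critical radius; and (iii) check the two numerical inequalities. Your Part~(II) algebra is in fact more explicit than the paper's one-line Step~III.

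The only place where you left a genuine loose end---and you correctly flagged it---is the population-versus-empirical localization. The paper does \emph{not} prove a direct inequality of the form in your display for the population-localized $\mathcal R_n(\mathcal G_1,r)$. Instead it argues indirectly: letting $r_n^*$ be the smallest solution of $\mathcal R_n(\mathcal G_1,r)\le r^2/32$, it uses concentration of $W_n(t)=\mathbb E_\epsilon\sup_{\|g\|_2\le t}|\frac1n\sum\epsilon_i g|$ around its mean (Boucheron--Lugosi--Massart) together with a uniform ratio bound $\sup_g|\|g\|_n^2-\|g\|_2^2|/(\|g\|_2^2+(r_n^*)^2)\le\tfrac12$ (cf.\ \cite[Lemma~EC.10, EC.12]{hu2022fast}) to show $\mathbb P(r_n^*/5\le \hat r_n^*\le 3r_n^*)\ge 1-6e^{-c n(r_n^*)^2}$. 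Since Step~I gives $\hat r_n^*\le \tilde C_0\sqrt{\eta\log(n+1)/n}$ \emph{almost surely}, assuming $r_n^*>\tilde C\sqrt{\eta\log(n+1)/n}$ for a large enough $\tilde C$ would force $\mathbb P(\hat r_n^*>\tilde C_0\sqrt{\eta\log(n+1)/n})>0$, a contradiction; hence $r_n^*\le \tilde C\sqrt{\eta\log(n+1)/n}$ deterministically, and star-shapedness lets you take any larger $\tilde r$. This is precisely the ``concentration (fixed-point) argument'' you anticipated; the paper's route avoids ever writing a clean closed-form bound on $\mathcal R_n(\mathcal G_1,r)$ and instead compares critical radii directly.
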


In the contextual linear optimization (CLO) literature, a more common approach is to impose complexity measures directly on $\Pi_\F$. Since $\Pi_\F$ represents a class of multi-class classifiers (mapping each $x \in \X$ to a vertix point in $\Z^\angle$), existing theoretical analyses have focused on its Natarajan dimension \citep{hu2022fast,el2023generalization}, which generalizes the VC-dimension for binary-class settings to multi-class settings \citep[Chap 29]{shalev2014understanding}.

\begin{definition}[Natarajan Dimension]
    The Natarajan dimension of a function class $\Gcal\subseteq [\RR^p\rightarrow \mathcal{S}]$ with co-domain $\mathcal{S}$ is the largest $m$ for which there exist $x_1, \dots, x_m \in \RR^p, s_1\neq s_1', \dots, s_m \neq s'_m \in \mathcal{S}$ such that
    \begin{align*}
        \braces{\prns{\ind[g(x_1) = s_1], \dots,  \ind[g(x_m) = s_m]} : g\in\Gcal , g(x_1)\in \braces{s_1, s_1'}, \dots, g(x_m)\in \braces{s_m, s_m'}} = \braces{0,1}^m.
    \end{align*}
\end{definition}

The next proposition shows that a finite Natarajan dimension for $\Pi_\F$ is sufficient to guarantee that $\Gcal^{(-k)}$ has a finite VC-subgraph dimension almost surely.

\begin{proposition} \label{prop: natarajan}
If $\Pi_\F$ has Natarajan dimension $\ndim(\Pi_\F)$, then for any $k \in [K]$, $\Gcal^{(-k)}$ has VC-subgraph dimension $\tilde{O}\!\prns{\ndim(\Pi_\F)\log\!\abs{\Z^\angle}}$ almost surely.
\end{proposition}
The proof of \cref{prop: natarajan} is novel and deferred to \cref{sec: critical radius proof}. As an intermediate result, we establish a lemma characterizing the VC-subgraph dimension of star-hulls of VC-subgraph function classes, which may be of independent interest.

Finally, combining \cref{prop: natarajan,lemma: critical radius}, we conclude that if $\Pi_\F$ has a finite Natarajan dimension (as assumed in the existing CLO theory literature\footnote{\cite{hu2022fast} Theorem 2 also proves that $\Pi_\F$ has a finite Natarajan dimension when $\F$ has a finite VC-linear-subgraph dimension.}), the critical radius is of order $\tilde{O}(\sqrt{K/n})$. However, we emphasize that our main theorem assumes only a generic critical radius, allowing our results to extend beyond Natarajan or VC-subgraph classes and thereby providing a more general theoretical framework than prior work in the CLO literature.

\subsubsection{Regret Bounds for Bandit-Feedback IERM} \label{sec: theory bandit}

The bandit-feedback setting is a special case of \cref{eq: general ierm} where the data is $\Data = (X,Z,C)$, the nuisance parameter is $\nu^* = (\tilde f^*,\Sigma^*)$, and the score function $\theta$ is one of $\theta_{\text{DM}}$, $\theta_{\text{ISW}}$, or $\theta_{\text{DR}}$ in \Cref{prop: unbiased}. In this section, we discuss the implications of our main result, \cref{thm: doubly robust fast rates}, for this setting. The key is to bound the nuisance estimation error, $\text{Rate}^{\textbf{N}}(n, \delta)$. The following proposition shows how this term is bounded by the estimation errors for the individual nuisance components.

\begin{proposition} \label{example: DR}
Suppose that for any given $\delta \in (0, 1)$, the nuisance estimators $\hat{f},\hat{\Sigma}$ trained on an i.i.d. sample of size $n$ satisfy the following mean-squared error bounds with probability at least $1-\delta$ for some positive sequences $\chi_f(n, \delta)\to 0$, $\chi_\Sigma(n, \delta)\to 0$ as $n \to \infty$:
\begin{align*}
   \mathbb{E}_X\bracks{\norm{\operatorname{Proj}_{\operatorname{span}(\Z)}\prns{\hat{f}(X) - f^*(X)}}^2} \le \chi_f^2(n, \delta), ~~ \mathbb{E}_X\bracks{\norm{\hat{\Sigma}^\dagger(X) - \Sigma^{*\dagger}(X)}_{\text{Fro}}^2}\le \chi^2_\Sigma(n, \delta),
\end{align*}
 where $\operatorname{Proj}_{\operatorname{span}(\Z)}(\cdot)$ denotes the projection onto $\operatorname{span}(\Z)$, $\|\cdot\|_2$ denotes the Euclidean norm and $\|\cdot\|_{\text{Fro}}$ denotes the matrix Frobenius norm. 
Then we have the following bounds on nuisance errors: 
\begin{enumerate}
    \item If $\theta=\theta_{\text{DM}}$, then $\text{Rate}^{\textbf{N}}(n, \delta) = O(\chi_f(n, \delta))$.
    \item If $\theta=\theta_{\text{ISW}}$, then $\text{Rate}^{\textbf{N}}(n,\delta) = O(\chi_\Sigma(n, \delta))$.
    \item If $\theta=\theta_{\text{DR}}$, then $\text{Rate}^{\textbf{N}}(n,\delta) = O(\chi_f(n, \delta)\chi_\Sigma(n, \delta))$.
\end{enumerate}
\end{proposition}

Compared to the DM and ISW scores, the impact of nuisance estimation error on the DR score is of second order, \ie, the product of the two estimation errors instead of the individual error terms. 
Effectively, the nuisance estimator $\hat\Sigma^\dagger$ in the doubly robust score helps reduce the bias introduced by the nuisance estimator $\hat f$, and vice versa. 
This debiasing property echoes the well-known benefit of doubly robust methods in causal inference and offline contextual bandit learning \citep{athey2021policy,chernozhukov2018double,chernozhukov2019semi}. Notably, our condition only requires bounding the \textit{projected} error of the nuisance estimator $\hat{f}$, which handles the setting where $\operatorname{span}(\Z)$ does not cover the entire space $\mathbb{R}^d$, such as the shortest path application in \cref{sec: numerical}. In this case, the solution set $\F^*$ for the least squares problem in \cref{eq: f-nuisance} is non-unique and contains solution $\tilde f^* \neq f^*$. Fortunately, for any $\tilde f^* \in \F^*$, it can be shown that $\tilde f^* - f^*$ is orthogonal to $\operatorname{span}(\Z)$, so the projected error in \cref{example: DR} is invariant to the particular solution targeted by the nuisance estimator $\hat f$.

Taking the doubly robust IERM policy as an example, we can combine our main result in \cref{thm: doubly robust fast rates} with the nuisance bounds in \cref{example: DR} to obtain the following regret bound. 
\begin{corollary}
    Let $\hat\pi_{\text{DR-B}}$ be an  IERM policy that solves \cref{eq: pi ierm dr}, where the cross-fitted nuisance estimators $\hat f^{(-k)}$, $\hat \Sigma^{(-k)}$ all satisfy the assumptions in \Cref{example: DR}. Define the following DR score-based function classes, for $k \in [K]$: 
    \begin{align*}
        \textstyle  
        \mathcal{G}^{(-k)}_{DR-B} = \braces{(x,z,c) \rightarrow \frac{\prns{\hat f^{(-k)}(x) + \prns{\hat \Sigma^{(-k)}}^{\dagger}(x) z \prns{c - z^\top \hat{f}^{(-k)}(x)}}^\top \prns{\pi(x) - \Tilde{\pi}^*(x)} \rho}{2B\Theta} :  \pi\in\Pi_\F, \rho\in[0,1]}.
    \end{align*}
Suppose that $\tilde{r}_{\text{DR}} > 0$ satisfies $\Rcal_{n/K}\!\prns{\Gcal^{(-k)}_{DR}, \Tilde{r}}\le \Tilde{r}^2$ for all $k\in[K]$ almost surely,  
$3n \Tilde{r}^2 / (64K) \ge \log \log_2(1/\Tilde{r})$, 
and $2\exp\!\prns{-3 n \Tilde{r}^2/(64K)} \le \delta/(2K)$. Further, suppose \cref{assump: dgp,assumption: nuisance,assumption: margin} hold, and that the set of optimal actions $\mathcal{Z}^*(X)$ is a singleton almost surely. Then, with probability at least $1-\delta$, we have 
\begin{align*}
    \text{Reg}(\hat\pi_{DR}) = O\prns{\text{Reg}(\tilde{\pi}^*) +\chi_f((K-1)n/K,\delta/(2K)) \cdot\chi_\Sigma((K-1)n/K,\delta/(2K)) + \tilde{r}_{\text{DR}}^{(2\alpha+2)/(\alpha + 2)}}.
\end{align*}
\end{corollary}
Furthermore, if each $\mathcal{G}^{(-k)}_{DR}$ has finite VC-subgraph dimension $\eta$ almost surely, then $\tilde r_{\text{DR}} = O(\sqrt{K\eta/n})$. Substituting this into the regret bound above yields a final regret bound.

\subsubsection{Byproduct: Fast Rates for Full Feedback with Misspecification} \label{section: full feedback theory}

Taking $\Data = (X, Y)$, $\nuisance^*=\hat{\nuisance}=\emptyset$, $\theta(\Data;\nuisance) = Y$, and $K=1$, \cref{eq: general ierm} specializes to IERM in the full-feedback setting.
In this case, since no nuisance function needs to be estimated, the nuisance estimation error term $\text{Rate}^{\textbf{N}}(n, \delta/2)$ is zero. This leads to the following corollary of \cref{thm: doubly robust fast rates}. To the best of our knowledge, this is the first CLO result that establishes a margin-dependent fast rate under potential policy misspecification in the full-feedback setting, which may be of independent interest.

\begin{corollary}\label{cor: full feedback fast rates}
Suppose \cref{assumption: margin} holds and that $\mathcal{Z}^*(X)$ is almost surely a singleton. Define the function class $\Gcal_{\text{F}} = \braces{(x,y) \rightarrow \frac{y^\top \prns{\pi(x) - \tilde{\pi}^*(x)} \rho}{2B} : \pi\in\Pi_\F, \rho \in [0,1]}$. 
Let $\tilde{r}_{\text{F}} > 0$ bound the critical radius of $\Gcal_{\text{F}}$ at sample size $n$ as well as satisfy
$3 n\tilde{r}_{\text{F}}^2/64 \ge \log \log_2(1/\tilde{r}_{\text{F}})$
and
$2\exp(-3 n\tilde{r}_{\text{F}}^2/64) \le \delta/2$. 
Then, with probability at least $1-\delta$, the regret of the full-feedback estimator $\hat{\pi}_{\text{F}}$ in \cref{eq: IERM-full} is bounded by:
\begin{align*}
    \text{Reg}(\hat{\pi}_{\text{F}}) \le  2\text{Reg}(\tilde{\pi}^*) + B \prns{12 \sqrt{\tilde{C}(\alpha, \gamma)} \tilde{r}_{\text{F}}}^{\frac{2\alpha+2}{\alpha + 2}}  + 24 B \prns{ \sqrt{\tilde{C}(\alpha, \gamma)}\prns{\frac{\text{Reg}(\tilde{\pi}^*)}{B}}^{\frac{\alpha}{2(1+\alpha)}}\tilde{r}_{\text{F}} + (\tilde{r}_{\text{F}})^2}.
\end{align*}
\end{corollary}

 When $\Pi_\F$ is a well-specified class with Natarajan dimension $\eta$, we have $\text{Reg}(\tilde{\pi}^*)=0$ and, by \cref{prop: natarajan,lemma: critical radius}, $\tilde{r}_{\text{F}} = \tilde{O}(\sqrt{\eta/n})$. In this case, the bound in \cref{cor: full feedback fast rates} simplifies to $O\prns{(\eta/n)^{(\alpha+1)/(\alpha+2)}}$. This bound interpolates between $O(n^{-1/2})$ and $O(n^{-1})$ depending on the margin parameter $\alpha$, recovering the fast rate results of \cite{hu2022fast}. Importantly, our result additionally quantifies the impact of model misspecification through the $\text{Reg}(\tilde{\pi}^*)$ term.

\subsection{Proof of Main Theorem} \label{sec: main proof}
Now, we sketch the proof of our main theoretical result, \cref{thm: doubly robust fast rates}. 
To the best of our knowledge, this is the first result establishing fast regret rates under the margin condition in the presence of partial feedback and model misspecification.

To simplify notation, we let $\expect_{n_k}$ denote the empirical average over data $\Dcal^{(k)}$: for any function $g$,
\begin{align*}
    \expect_{n_k}(g)
    = \frac{K}{n}\sum_{i\in \Dcal^{(k)}} g(\Data_i).
\end{align*}

\proof{Proof of \cref{thm: doubly robust fast rates}}

First, we can decompose the regret of our learned policy $\hat{\pi}$ as follows: 
\begin{align}
    \text{Reg}\prns{\hat{\pi}} = & V(\hat\pi) - V(\pi_{f^*})\\
    =&  V(\tilde\pi^*) - V(\pi_{f^*}) +V(\hat\pi) - V(\tilde \pi^*) \notag \\
    = & \text{Reg}\prns{\Tilde{\pi}^*} +\expect_\Data\bracks{\theta(\Data; \nuisance^*)^\top \prns{\hat{\pi}(X) - \Tilde{\pi}^*(X)}} \notag \\
    \le & \text{Reg}\prns{\Tilde{\pi}^*} +\expect_\Data\bracks{\theta(\Data; \nuisance^*)^\top \prns{\hat{\pi}(X) - \Tilde{\pi}^*(X)}} 
    -\frac{1}{K}\sum_{k\in[K]} \expect_{n_k} [{\theta({\Data;\hat{\nuisance}^{(-k)}})^\top (\hat{\pi}(X)-\tilde{\pi}^*(X))}] \notag\\
    =  & \underbrace{\text{Reg}\prns{\Tilde{\pi}^*}}_{(i)} 
    + \underbrace{\frac{1}{K} \sum_{k\in[K]} \expect_\Data[{({\theta(\Data; \nuisance^*)- \theta({\Data;\hat{\nuisance}^{(-k)}})})^\top \prns{\hat{\pi}(X)- \Tilde{\pi}^*(X)}  }]}_{(ii)} \notag \\
    & + \underbrace{\frac{1}{K}\sum_{k\in[K]} (\expect_\Data - \expect_{n_k})[{ \theta({\Data; \hat{\nuisance}^{(-k)}})^\top \prns{\hat{\pi}(X) - \Tilde{\pi}^* (X)} }]}_{(iii)} , \label{eq: regret decomp}
\end{align}
where the third equality follows from the definition of and the fact that $V(\pi) = \expect_\Data[\theta(\Data;\nuisance^*)^\top \pi(X)]$ for any policy $\pi$, the inequality follows from the fact that $\hat{\pi}$ minimizes the empirical risk, and the last equality follows from subtracting and adding $\sum_{k\in[K]} \expect_\Data \bracks{ \theta(\Data; \hat{\nuisance}^{(-k)})^\top \prns{\hat{\pi}(X) - \Tilde{\pi}^* (X)} }/K$.  

\cref{eq: regret decomp} shows three sources of errors: (i) the errors incurred by a misspecified  policy class, (ii) the errors incurred by nuisance function estimation, and (iii) the errors incurred by empirical average approximation. By \cref{assumption: nuisance}, with probability at least $1-\delta/2$, we can bound (ii) as
\begin{align*}
   \frac{1}{K} \sum_{k\in[K]} \expect_\Data\bracks{\prns{\theta(\Data; \nuisance^*)- \theta\prns{\Data;\hat{\nuisance}^{(-k)}}}^\top \prns{\hat{\pi}(X)- \Tilde{\pi}^*(X)}  } \le \text{Rate}^{\textbf{N}}\prns{\frac{(K-1)n}{K}, \frac{\delta}{2K}}.
\end{align*}
It remains to control (iii). To this end, we introduce the following lemma, which shows that the uniform deviation between the sample average and true expectation over the class $\mathcal{G}^{(-k)}$ is bounded by the corresponding critical radius with high probability. This lemma adapts Theorem 14.20 in \cite{wainwright2019high} to our setting and follows similar proofs (deferred to \cref{sec: lemma proof}).
\begin{lemma} \label{lemma: uniform bound G2}
Let $\tilde r > 0$ be satisfy the assumptions of \cref{thm: doubly robust fast rates}. 
Then, with probability at least $1-\delta/(2K)$,
\begin{align} \label{eq: event 1}
  \sup_{g\in\mathcal{G}^{(-k)}} \frac{\abs{(\expect_{n_k} - \expect_\Data) g}}{\norm{g}_2 + \Tilde{r}} \le 6 \Tilde{r}.  
\end{align}
\end{lemma}

Assuming the high-probability event in  \cref{eq: event 1} holds, we can bound each component of (iii):
\begin{align}\label{eq:emp-process-n1}
        & (\expect_\Data - \expect_{n_k})[{ \theta({\Data; \hat{\nuisance}^{(-k)}})^\top \prns{\hat{\pi}(X) - \Tilde{\pi}^* (X)} }]  \\
    = & {2B\Theta}(\expect_\Data - \expect_{n_k})[{ \theta({\Data; \hat{\nuisance}^{(-k)}})^\top \prns{\hat{\pi}(X) - \Tilde{\pi}^* (X)} }/(2B\Theta)] \notag \\
    \le & 12 B\Theta \prns{\norm{{ \theta({\Data; \hat{\nuisance}^{(-k)}})^\top \prns{\hat{\pi}(X) - \Tilde{\pi}^* (X)} }/(2B\Theta)}_2 \Tilde{r} + \Tilde{r}^2} \notag \\
    \le & 12 B\Theta \prns{\norm{{ \theta({\Data; \hat{\nuisance}^{(-k)}})^\top \prns{\hat{\pi}(X) - \pi_{f^*} (X)} }/(2B\Theta)}_2\tilde r + \norm{{ \theta({\Data; \hat{\nuisance}^{(-k)}})^\top \prns{\tilde \pi^* - \pi_{f^*} (X)} }/(2B\Theta)}_2\tilde r + \tilde r^2}. \notag
 \end{align}

We then relate the first two terms above to regret using the following lemma, whose proof follows directly from Lemma 1 in \cite{hu2022fast}.
\begin{lemma}\label{lemma: margin bound}
    Suppose \cref{assumption: margin} holds and the optimal decision set $\mathcal{Z}^*(X)$ (defined in \cref{assumption: margin}) is almost surely a singleton. Then there exists a constant $\Tilde{C}(\alpha, \gamma)$ such that for any $\pi\in \Pi_\F$,
    \begin{align*}
        \pr\prns{\pi(X)\ne \pi_{f^*}(X)} \le \Tilde{C}(\alpha, \gamma)\prns{\frac{\text{Reg}(\pi)}{B}}^{\frac{\alpha}{1+\alpha}}.
    \end{align*}
\end{lemma}
By \cref{lemma: margin bound}, for any $\pi\in \Pi_\F$,
\begin{align*}
   & \norm{{ \theta({\Data; \hat{\nuisance}^{(-k)}})^\top \prns{\pi(X) - \pi_{f^*} (X)} }/(2B\Theta)}_2^2\\
   = & \EE\bracks{\prns{{ \theta({\Data; \hat{\nuisance}^{(-k)}})^\top \prns{\pi(X) - \pi_{f^*} (X)} }/(2B\Theta)}^2 \ind\braces{\pi(X) \ne \pi_{f^*}(X)}} \\
    \le & \pr\prns{\pi(X)\ne \pi_{f^*}(X)} 
    \le  \Tilde{C}(\alpha, \gamma)\prns{\frac{\text{Reg}(\pi)}{B}}^{\frac{\alpha}{1+\alpha}}.
\end{align*}
Applying this inequality for both $\pi = \hat\pi$ and $\pi = \pi_{f^*}$ and invoking \cref{eq:emp-process-n1}, we get 
\begin{align*}
    & (\expect_\Data - \expect_{n_k})[{ \theta({\Data; \hat{\nuisance}^{(-k)}})^\top \prns{\hat{\pi}(X) - \Tilde{\pi}^* (X)} }]\\
    \le & 12 B\Theta \prns{\sqrt{\Tilde{C}(\alpha, \gamma)}\prns{\frac{\text{Reg}(\hat{\pi})}{B}}^{\frac{\alpha}{2(1+\alpha)}}\Tilde{r} + \sqrt{\Tilde{C}(\alpha, \gamma)}\prns{\frac{\text{Reg}(\Tilde{\pi}^*)}{B}}^{\frac{\alpha}{2(1+\alpha)}}\Tilde{r} + \Tilde{r}^2}.
\end{align*}

Taking a union bound over $k\in[K]$ and combining all pieces together, we get that with probability at least $1-\delta$,
\begin{align}
   \frac{ \text{Reg}(\hat{\pi})}{B} 
   \le & 12 \Theta \prns{\sqrt{\Tilde{C}(\alpha, \gamma)}\prns{\frac{\text{Reg}(\hat{\pi})}{B}}^{\frac{\alpha}{2(1+\alpha)}}\Tilde{r} + \sqrt{\Tilde{C}(\alpha, \gamma)}\prns{\frac{\text{Reg}(\Tilde{\pi}^*)}{B}}^{\frac{\alpha}{2(1+\alpha)}}\Tilde{r} + \Tilde{r}^2} \notag \\
   & + \frac{1 }{B}\text{Rate}^{\textbf{N}}\prns{\frac{(K-1)n}{K}, \frac{\delta}{2K}} + \frac{\text{Reg}\prns{\Tilde{\pi}^*}}{B}. \label{eq: inequality}
\end{align}

Note that this is a transcendental inequality that involves $\text{Reg}(\hat{\pi})$ on both sides. It is difficult to solve \cref{eq: inequality} exactly, so we rely on the following lemma to get an upper bound on its solution. The proof of \cref{lemma: inequality} is novel, and we defer the proof to \cref{sec: lemma proof}.
\begin{lemma} \label{lemma: inequality}
Let $c_1, c_2, \alpha, x$ be positive constants. If $x \le c_1 x^{\frac{\alpha}{2(1+\alpha)}} + c_2$, then we have $x \le c_1^{\frac{2\alpha+2}{\alpha+2}} + 2c_2$. 
\end{lemma}

Applying \cref{lemma: inequality} with $x=\text{Reg}(\hat{\pi})/B$, $c_1 = 12 \Theta \tilde{r}\sqrt{\Tilde{C}(\alpha, \gamma)} $, and $c_2 = 12 \Theta \prns{\sqrt{\Tilde{C}(\alpha, \gamma)}\prns{\text{Reg}(\Tilde{\pi}^*)/B}^{\frac{\alpha}{2(1+\alpha)}}\Tilde{r} + \Tilde{r}^2} + \text{Rate}^{\textbf{N}}((K-1)n/K, \delta/(2K))  /B + \text{Reg}\prns{\Tilde{\pi}^*}/B$,
we prove the bound in \cref{thm: doubly robust fast rates}: 
\begin{align*}
   \text{Reg}(\hat{\pi}) \le & B \prns{12\Theta \sqrt{\Tilde{C}(\alpha, \gamma)} \Tilde{r}}^{\frac{2\alpha+2}{\alpha + 2}} 
    + 24 B\Theta \prns{ \sqrt{\Tilde{C}(\alpha, \gamma)}\prns{\frac{\text{Reg}(\Tilde{\pi}^*)}{B}}^{\frac{\alpha}{2(1+\alpha)}}\Tilde{r} + \Tilde{r}^2}  \\
    & + 2\text{Rate}^{\textbf{N}}\prns{\frac{(K-1)n}{K}, \frac{\delta}{2K}}  + 2\text{Reg}\prns{\Tilde{\pi}^*}.
\end{align*}
\endproof

\begin{remark}[Improved Bounds When $2(\alpha+1)/\alpha$ is an Integer]
    Recall that the proof of \cref{thm: doubly robust fast rates} involves bounding the solution to a transcendental inequality (\cref{eq: inequality}) using \cref{lemma: inequality}. In the special case when $2(1+\alpha)/\alpha$ is an integer, this bound can be tightened via the following lemma.
\begin{lemma} \label{lemma: inequality integer}
Let $c_1, c_2, r, y, z, \alpha$ be positive constants such that $2(1+\alpha)/\alpha$ is an integer. If $x > 0$ satisfies $x \le c_1 x^{\frac{\alpha}{2(1+\alpha)}} r + c_1 y^{\frac{\alpha}{2(1+\alpha)}} r + c_2 r^2 + z + y$, then $x \le c_1^{\frac{2\alpha+2}{\alpha +2}} r^{\frac{2\alpha+2}{\alpha+2}} + \frac{2\alpha+2}{\alpha+2} c_1 y^{\frac{\alpha}{2\alpha+2}} r + \frac{2\alpha+2}{\alpha+2} c_2 r^2 + \frac{2\alpha+2}{\alpha+2} z + y$. 
\end{lemma}
Applying \cref{lemma: inequality integer} with $x = \text{Reg}(\hat{\pi})/B$, $y=\text{Reg}(\tilde{\pi}^*)/B$, $z=\text{Rate}^{\textbf{N}}((K-1)n/K, \delta/(2K)) /B$, $r=\tilde{r}$, $c_1 = 12 \Theta \sqrt{\tilde{C}(\alpha, \gamma)}$, and $c_2 =12 \Theta$ yields the tighter bound presented in \cref{eq: tighter bound}.
\end{remark}

\begin{remark}[Comparison with \cite{hu2022fast}]\label{remark: comparison hu2022fast}
As discussed in \Cref{section: full feedback theory}, our general \cref{thm: doubly robust fast rates}, when specialized to the well-specified full-feedback setting,  reduces to the result in \cite{hu2022fast}.
However, the proofs are fundamentally different. The argument in \citet{hu2022fast} relies on sophisticated analyses of a self-normalized empirical process, making it challenging to incorporate additional misspecification error and nuisance-estimation error into their proof. 
In contrast, our proof is much more direct and modular: it uses the regret decomposition in \Cref{eq: regret decomp}, an empirical-process deviation bound in \Cref{lemma: uniform bound G2}, and the margin condition analysis in \Cref{lemma: margin bound}. 
These steps reduce the regret analysis to solving a transcendental inequality, as further addressed in \Cref{lemma: inequality,lemma: inequality integer}. 
To our knowledge, this proof strategy is novel, and can be broadly applicable to a wide range of offline policy optimization problems.
\end{remark}

\section{Computationally Tractable Surrogate Losses}\label{sec: spo+}

The IERM problem in \cref{eq: general ierm} involves a complex induced policy class $\Pi_\F$ that is generally non-convex and non-smooth, so it is computationally challenging to optimize.
In this section, we demonstrate that any computationally tractable surrogate loss previously developed for the full-feedback IERM problem can be seamlessly adapted to the generic IERM problem. 
We conclude the section with three concrete examples of surrogate losses.

\subsubsection*{Surrogate Losses in the Full-Feedback Setting.}
The full-feedback IERM problem in \cref{eq: IERM-full} is equivalent to minimizing the following loss over the function class $\F$:
\begin{align*}
    \min_{f\in\F} \frac{1}{n}\sum_{i=1}^n l_{\text{IERM}}(f(X_i), Y_i),
\end{align*}
where $l_{\text{IERM}}(f(x), y) = y^\top \pi_f(x) - \min_{z\in\Zcal} y^\top z$. 
This loss coincides with the ``smart predict-then-optimize'' (SPO) loss introduced in Definition 1 of \cite{elmachtoub2022smart}, which is known to be computationally intractable in general. 

To overcome this difficulty, the full-feedback CLO literature has proposed several surrogate losses that replace the intractable $l_{\text{IERM}}$ with a computationally efficient alternative $l_{\text{SR}}$, yielding the surrogate optimization problem:
\begin{align}\label{eq: surrogate full}
    \hat f_{\text{SR-F}} \in \arg\min_{f \in \F} \frac{1}{n} \sum_{i=1}^n l_{\text{SR}}(f(X_i), Y_i).
\end{align}
The resulting plug-in policy $\pi_{\hat{f}_{\text{SR-F}}}$ is then deployed as the decision policy.

\subsubsection*{Extending Surrogates to the Generic IERM Setting.}
A key insight is that in our general IERM objective (\cref{eq: general ierm}), the score function $\theta(\Data_i;\hat{\nuisance}^{(-k)})$ plays the same role as the true coefficient vector $Y_i$ in the full-feedback objective.
This observation suggests a natural extension of the surrogate loss approach: simply replace $Y_i$ with $\theta(\Data_i;\hat{\nuisance}^{(-k)})$ to obtain
\begin{equation*}
    \hat{f}_{\text{SR}}\in \arg\min_{f\in \F}~\frac{1}{n} \sum_{k\in[K]} \sum_{i\in \Dcal^{(k)}} l_{\text{SR}}\prns{f(X_i), \theta(\Data_i;\hat{\nuisance}^{(-k)})},
\end{equation*}
and use the resulting plug-in policy $\pi_{\hat{f}_{\text{SR}}}$ as the decision rule.

This procedure highlights that our framework is agnostic to the particular surrogate loss chosen: \emph{any} surrogate loss $l_{\text{SR}}$ developed for the full-feedback setting can be directly employed to handle partial feedback, making the approach both practical and general.

\subsubsection*{Examples of Surrogate Losses.}
Finally, we briefly review three representative surrogate losses frequently used in the literature and employ them in our numerical experiments in \cref{sec: numerical}. These losses all allow for efficient gradient- or subgradient-based optimization.

\begin{enumerate}
    \item \textit{SPO+ Loss.} 
    \cite{elmachtoub2022smart} propose the SPO+ surrogate loss:
    \begin{align*}
        l_{\text{SPO+}}(f(x), y) = \max_{z\in\Zcal} \prns{y - 2 f(x)}^\top z - \prns{y - 2 f(x)}^\top z^*(y),
    \end{align*}
    where $z^*(y) \in \arg\min_{z\in\Zcal} y^\top z$ with the same tie-breaking rule as in $\pi_f$. 
    The SPO+ loss has several desirable properties: for any fixed $y$, it upper bounds the IERM loss, is convex in $f(x)$, and has a closed-form subgradient  $2\!\prns{z^*(y) - z^*(2f(x) - y)}$ with respect to $f(x)$. 

\item \textit{Perturbation-Based Surrogates (PGC \& PGB).} \cite{huang2024decision} point out that the decision loss $\ell(f(x), y) = y^\top \pi_f(x)$ can be viewed as a directional derivative of a certain optimization problem, and propose to approximate it using finite differences. This results in surrogate losses that are Lipschitz continuous, bounded, and differentiable. For a smoothing parameter $h>0$, the two main variants are Perturbation Gradient Central (PGC) and Backward (PGB) losses as follows:
\[
\ell_h^{\text{PGC}}(f(x), y) = \frac{1}{2h} \left( \min_{z\in\mathcal{Z}} (f(x) + hy)^\top z - \min_{z\in\mathcal{Z}} (f(x) - hy)^\top z \right),
\]
\[
\ell_h^{\text{PGB}}(f(x), y) = \frac{1}{h} \left( \min_{z\in\mathcal{Z}} f(x)^\top z - \min_{z\in\mathcal{Z}} (f(x) - hy)^\top z \right).
\]
Given any fixed $y$, these two losses admit closed-form subgradients with respect to $f(x)$,  given by $\frac{1}{2h} (z^*(f(x)+hy) - z^*(f(x)-hy))$ and $\frac{1}{h} (z^*(f(x)) - z^*(f(x)-hy))$, respectively.

    \item \textit{Perturbed Fenchel-Young Loss (PFYL).} 
    \cite{berthet2020learning} propose a Fenchel-Young loss $l_{\text{FY}}$, which was applied to IERM problem in \cite{tang2024pyepo}. This loss measures decision error relative to $z^*(y) \in \arg\min_{z\in\Zcal} y^\top z$ in terms of the predicted cost $f(x)$:
    \[
    l_{\text{FY}}(f(x), y) = f(x)^\top z^*(y) - F(f(x)) - \Omega(z^*(y)),
    \]
    where $F(y) = \E_\xi\!\left[\min_{z \in \Zcal} (y + \sigma \xi)^\top z\right]$ is the expected perturbed cost and $\Omega$ its Fenchel conjugate.
    The gradient of $l_{\text{FY}}$ with respect to $f(x)$ can be approximated using Monte Carlo sampling:
    \[
    \textstyle
    \frac{\partial l_{\text{FY}}(f(x), y)}{\partial f(x)} \approx 
    z^*(y) - \frac{1}{M} \sum_{m=1}^M 
    z^*(f(x) + \sigma \xi_m),
    \]
    where $\{\xi_m\}_{m=1}^M$ are random perturbations (\eg, from a Gaussian distribution) and $\sigma > 0$ is a smoothing parameter.
\end{enumerate}

\section{Extension: Contextual Linear Optimization with Semi-Bandit Feedback} \label{sec: semi}

So far, we have studied two feedback structures for contextual linear optimization: a full-feedback setting where the entire cost vector $Y$ is observed, and a bandit setting where only a single scalar total cost $C=Y^\top Z$ is observed. In this section, we explore an intermediate scenario, which we term \textit{semi-bandit feedback}.

Recall our stochastic shortest path example (\cref{fig:feedback-comparison}). The bandit setting means that only the total travel time of a chosen path is recorded. In the semi-bandit setting, more granular information is available: we observe the travel time for each individual edge that was part of the chosen path.

Mathematically, we assume access to an i.i.d. offline dataset $\mathcal{D}_S = \braces{(X_i, Z_i, \tilde{Y}_i)}_{i=1}^n$. We focus on the special case where each decision $Z_i = (Z_{i1}, \dots, Z_{id})^\top$ is a binary vector, i.e., $Z_{ij} \in \braces{0,1}$. In the shortest path example, $Z_{ij} = 1$ indicates that edge $j$ was part of the chosen path. For each data point $i$, we observe the context $X_i$, the decision $Z_i$, and a partially revealed cost vector $\tilde{Y}_i$, where for each component $j \in \{1, \dots, d\}$, we have $\tilde{Y}_{ij} = Y_{ij}$ if $Z_{ij} = 1$ (\eg, we observe the travel time for edge $j$ when it is part of a chosen path), but $\tilde{Y}_{ij}$ is a missing value if $Z_{ij} = 0$.

In this semi-bandit setting, the ignorability and coverage conditions in \cref{assump: dgp} reduce to two simpler conditions in the corollary below. For this corollary, we say a component $j  \in \{1, \dots, d\}$ is \textit{feasible} if there exists $z \in \Z$ with $j$-th entry $z_j = 1$, \ie, this component is active in one feasible decision. Conversely, a component $j$ is \textit{infeasible} if there exists no $z \in \Z$ with $z_j = 1$.
\begin{corollary}\label{coro: dgp semi}
    Suppose \cref{assump: dgp} holds. Then, for any feasible component $j \in \{1, \dots, d\}$,  we have: 
    \begin{enumerate}
        \item (Ignorability) The expected cost coefficient $f^*_j(X_i) = \mathbb{E}[Y_{ij} \mid X_i]$ satisfies $f^*_j(X_i) = \mathbb{E}[\tilde Y_{ij} \mid Z_{ij} = 1, X_i]$ almost surely. 
        \item (Coverage) The propensity score  $e_j^*(X_i) = \mathbb{P}(Z_{ij}=1\mid X_i)$ is strictly positive almost surely.
    \end{enumerate}
\end{corollary}

\Cref{coro: dgp semi} shows that, under \cref{assump: dgp}, for any feasible component $j$ that may be selected by a feasible decision, the corresponding expected cost coefficient $f^*_j(X_i)$ can be recovered from the non-missing cost observation $\tilde Y_{ij}$ through the conditional expectation $\mathbb{E}[\tilde Y_{ij} \mid Z_{ij} = 1, X_i]$. Moreover, any such component must also be explored with positive probability in  the data, as indicated by the strictly positive propensity score $e_j^*(X_i) > 0$. 
Although infeasible components may exist, they are irrelevant to the decision-making problem and therefore need not be considered.

For evaluating the expected policy cost $V(\pi)$ for any policy $\pi$, we again rely on some score functions. In the semi-bandit setting, the score functions involve two types of nuisance functions that can be estimated from the observed data: the conditional expected coefficients $f^*(x) = (f^*_1(x), \dots, f^*_d(x))^\top$, where $f^*_j(x) = \mathbb{E}[\tilde Y_{ij}\mid Z_{ij} = 1, X_i=x]$; and the conditional {propensity scores} $e^*(x) = (e^*_1(x), \dots, e^*_d(x))^\top$, where $e^*_j(x) = \pr(Z_{ij}=1\mid X_i=x)$. 
Since any infeasible component $j$ does not contribute to the evaluation of $V(\pi)$, without loss of generality, we set $f^*_j(x)=0$ and $e^*_j(x)=0$ for such components.

The following proposition provides three valid score functions.

\begin{proposition}\label{lemma: theta functions semi} 
Suppose \cref{assump: dgp} holds.  The following choices of $\theta(x,z,\tilde{y}; f, e)$, when evaluated at $f = f^*$ and $e = e^*$, all satisfy \Cref{assump: score general}, i.e., $\mathbb{E}_{X,Z,\tilde Y}\bracks{\theta(X,Z,\tilde Y; f^*, e^*)^\top \pi(X)} = V(\pi)$ for any fixed policy $\pi: \mathcal{X} \to \Z$:
\begin{enumerate}
   \item (Direct Method) $\theta_{\text{DM-S}}(x,z,\tilde y; f,\propensity) = f(x)$;
  \item (Inverse Propensity Weighting) $\theta_{\text{IPW-S}}(x,z,\tilde y; f,\propensity) = \prns{\frac{ \tilde y_1 z_1   }{\propensity_1(x)}, \dots, \frac{ \tilde y_d z_d   }{\propensity_d(x)}}^\top$;
      \item (Doubly Robust) $\theta_{\text{DR-S}}(x,z,\tilde y; f,\propensity) = \prns{f_1(x) + \frac{ \prns{\tilde y_1 - f_1(x) } z_1   }{\propensity_1(x)}, \dots, f_d(x) + \frac{ \prns{\tilde y_d - f_d(x) } z_d   }{\propensity_d(x)}}^\top$.
\end{enumerate}
Here, the subscript $j$ denotes the $j$-th component of the vector, and we adopt the convention that terms involving the fraction $z_j/e_j(x)$ are treated as zero\footnote{Since any infeasible component would never appear in the data, this operation would  effectively set the corresponding term in the score function to zero.} whenever $z_j=0$. 
\end{proposition}

For each score function in \cref{lemma: theta functions semi}, we again estimate the involved nuisance functions using $K$-fold cross-fitting: split the dataset $\mathcal{D}_S$ into $K$ equal-sized folds $\mathcal{D}_S^{(1)},\dots,\mathcal{D}_S^{(K)}$, and repeatedly train nuisance estimators on datasets with each fold held out. 
This estimation can be performed component-wise for both  $f^* = (f^*_1, \dots, f^*_d)$ and  $e^* = (e^*_1, \dots, e^*_d)$, restricted to feasible components. 
Specifically, for each feasible cost component, the estimator $\hat f_j^{(-k)}$ can be obtained by regressing the observed costs $\tilde Y_{ij}$ for component $j$ on the corresponding context $X_i$ using all non-missing observations excluding the $k$-th fold, \ie, data points in $\braces{i\in \mathcal{D}_S\setminus \mathcal{D}_S^{(k)}: Z_{ij} = 1}$. Similarly, for each feasible propensity score component, the estimator $\hat{e}_j^{(-k)}$ can be obtained by fitting a binary classification for the corresponding decision observation $Z_{ij}$ given the context $X_i$ using all but the $k$-th fold data, \ie, data points in $\mathcal{D}_S\setminus \mathcal{D}_S^{(k)}$.

With the nuisance estimators in hand, we can now instantiate the general IERM framework from \cref{eq: general ierm} for the semi-bandit setting by taking $\Data = (X, Z, \tilde Y)$, $\nuisance^* = (f^*, \propensity^*)$ and $\theta$ as any of the score functions in \cref{lemma: theta functions semi}. Using the DR score as a concrete example, the DR-IERM policy, $\hat{\pi}_{\text{DR-S}}$, is obtained by solving the following optimization problem:
\begin{align*}
    \textstyle
    \hat{\pi}_{\text{DR-S}} \in \arg\min_{\pi\in \Pi_\F} \frac{1}{n} \sum_{k\in[K]} \sum_{i\in \mathcal{D}_S^{(k)}} \sum_{j=1}^d \prns{\hat{f}_j^{(-k)}(X_i)+{ \prns{\tilde{Y}_{ij} - \hat{f}_j^{(-k)}(X_i)}Z_{ij}}/{\hat{e}_j^{(-k)}(X_i)}} \pi_j(X_i).
\end{align*}

Following from our main result in \cref{thm: doubly robust fast rates}, the regret of an IERM policy with score function $\theta(x,z,\tilde{y}; f, e)$ is additive in three primary components: the misspecification error, $\text{Reg}(\tilde{\pi}^*)$, the nuisance estimation error, $\operatorname{Rate}^{\textbf{N}}(n, \delta)$, and the statistical error, $O(\tilde{r}_S^{(2\alpha+2)/(\alpha+2)})$.
Here, the value $\tilde{r}_S$ upper-bounds the critical radius of the following function classes for $k \in [K]$:
\begin{align*}
    \mathcal{G}^{(-k)}_S = \braces{(x,z,\tilde y) \rightarrow \frac{\theta\prns{x,z,\tilde{y}; \hat{f}^{(-k)}, \hat{e}^{(-k)}}^\top \prns{\pi(x) - \Tilde{\pi}^*(x)} \rho}{2B\Theta} :  \pi\in\Pi_\F, \rho\in[0,1]}.
\end{align*}
The following proposition specifies the nuisance estimation error for different score functions. 
\begin{proposition} \label{nuisance rate semi}
Suppose that for any given $\delta \in (0, 1)$, the nuisance estimators trained on an i.i.d. sample of size $n$ satisfy the following mean-squared error bound with probability at least $1-\delta$ for some positive sequences $\chi_f(n, \delta) \to 0$, $\chi_e(n, \delta) \to 0$ as $n \to \infty$:
\begin{align*}
    \mathbb{E}_X\bracks{\sum_{\text{feasible } j}\prns{\hat{f}_j(X) - f^*_j(X)}^2}\le \chi_f^2(n, \delta), ~~ \mathbb{E}_X\bracks{\sum_{\text{feasible } j}\prns{{1}/{\hat e_j(X)} - {1}/{e^*_j(X)}}^2} \le \chi_e^2(n, \delta).
\end{align*}
Then, the nuisance estimation error is bounded as follows:
\begin{enumerate}
    \item If $\theta=\theta_{\text{DM-S}}$, then $\text{Rate}^{\textbf{N}}_{\text{DM-S}}(n, \delta) = O(\chi_f(n, \delta))$.
    \item If $\theta=\theta_{\text{IPW-S}}$, then $\text{Rate}^{\textbf{N}}_{\text{IPW-S}}(n,\delta) = O(\chi_e(n, \delta))$.
    \item If $\theta=\theta_{\text{DR-S}}$, then $\text{Rate}^{\textbf{N}}_{\text{DR-S}}(n,\delta) = O(\chi_f(n, \delta)\chi_e(n, \delta))$.
\end{enumerate}
\end{proposition}

Finally, since the policy optimization in the semi-bandit setting is also a special case of the general IERM problem in \cref{eq: general ierm}, we can again use the surrogate losses discussed in \cref{sec: spo+} (\eg, SPO+, PGB, PGC, and PFYL) to address the computational challenge.

\section{Numerical Experiments} \label{sec: numerical}
In this section, we conduct extensive numerical experiments to gain empirical insights for our proposed IERM framework. We evaluate our methods under various conditions on synthetic data, detailed in \cref{sec: synthetic}, and further test them on a real-world Uber dataset in \cref{sec: real data}.

\subsection{Synthetic Experiments}\label{sec: synthetic}

We begin with a simulated stochastic shortest path problem following \citet{elmachtoub2022smart,hu2022fast}.

\subsubsection*{Experimental Setup}
The task is to travel from a start node $s$ to a terminal node $t$ on a $5 \times 5$ grid with $d=40$ edges (the road network is shown in \cref{fig:feedback-comparison}). Edge costs are given by a random vector $Y \in \mathbb{R}^{40}$. We consider $3$-dimensional contexts $X$ and a conditional mean $f^*(x) = \mathbb{E}[Y \mid X = x]$ whose components are cubic polynomials in $x$; implementation details are in \cref{sec:exp-setup}. The corresponding shortest path problem is formulated as a CLO problem with the constraint set $\mathcal{Z}$ given by standard flow-preservation constraints, with a source of $+1$ at $s$ and a sink of $-1$ at $t$.
The optimal solution $z^*$ belongs to $\{0, 1\}^{40}$, indicating whether passing each edge or not. There are $m = 70$ feasible paths from the start node to the terminal node, and the feasible paths are linearly dependent with a rank of $18$. Therefore, this problem corresponds to the rank-deficient setting discussed below \Cref{assump: dgp}. 

We consider two logging policies for generating observed decisions. 
In this section, we report results using a \emph{random} logging policy, which selects paths uniformly at random from all feasible ones, independent of the context. For these selected paths, we then generate the bandit-feedback and semi-bandit-feedback data.
In \cref{sec: x1x2}, we also study a \emph{covariate-dependent} logging policy that depends on the signs of $X_1$ and $X_2$. Empirically, the conclusions under the covariate-dependent policy are qualitatively similar to those under the random logging policy. 

In the experiments, we need to specify two function classes, the policy-inducing class $\mathcal{F}$ and the nuisance estimation class $\mathcal{F}^{\mathrm{N}}$ (see also \Cref{remark:f}). For both the ETO and IERM approaches, we use the same class $\mathcal{F}$ to induce decision policies. We consider three choices for $\mathcal{F}$ (details in \cref{sec:exp-setup}): (i) a correctly specified polynomial class; (ii) a misspecified class omitting two higher-order terms (``degree-2 misspecification''); and (iii) a misspecified class omitting four higher-order terms (``degree-4 misspecification''). The IERM approach (evaluated via DM and DR) additionally requires estimating the nuisance function $\tilde f^*$. We implement this by running a least squares bandit-feedback regression in \cref{eq: nuisance f}, with additional ridge regularization to reduce overfitting. We also consider the same three (mis)specification scenarios for $\F^{\text{N}}$ as for $\mathcal{F}$. However, we allow $\mathcal{F}^{\mathrm{N}}$ to differ from $\mathcal{F}$ in each experiment.

Performance of IERM policies further depends on the policy-evaluation method and the surrogate loss used for optimization. For evaluation, we report Direct Method (DM) and Doubly Robust (DR) results, omitting ISW/IPW due to substantially worse performance. For DR, we use the true $\Sigma^*(x)$ in this section for clarity, and show in \cref{sec: estimated sigma} that the findings are robust to using an estimated $\hat{\Sigma}(x)$. We further consider two DR variants when plugging in $\Sigma^*(x)$: \emph{DR PI}, which uses the Moore–Penrose pseudoinverse as in the definition of $\theta_{\mathrm{DR}}$; and \emph{DR Lambda}, which replaces $\Sigma^{*\dagger}$ with $(\Sigma + \lambda I)^{-1}$ for a small positive constant $\lambda$. For optimization, we consider the SPO+, PGC, PGB, and PFYL surrogate losses discussed in \cref{sec: spo+}.

For each configuration, we use training sample sizes $n \in \{400,1000,1600\}$ and evaluate on an independent test set of size $2000$. We report the \emph{relative regret ratio}, defined as the ratio of policy regret to the expected cost of the globally optimal policy $\pi_{f^*}$:
$\mathrm{Reg}(\hat{\pi})/V(\pi_{f^*})$.
All results are averaged over $50$ independent replications.

\subsubsection*{Experimental Findings}

Our main experimental results are presented in \cref{tab:Feedback type,tab:Nuisance Specifications,tab:Evaluation Methods}. We now analyze these results from several perspectives to draw key insights.

\begin{table}[t]
\centering
\begin{tblr}{
  cells = {c},
  cell{1}{3} = {c=3}{},
  cell{1}{6} = {c=3}{},
  cell{3}{1} = {r=2}{},
  cell{5}{1} = {r=4}{},
  cell{9}{1} = {r=4}{},
  vlines,
  hline{1-3,5,9,13} = {-}{},
  hline{4,6-8,10-12} = {2-8}{},
}
                        &                & $\F$ well-specified &        &        & $\F$ misspecified degree 4 &         &         \\
Feedback                & Methods        & 400              & 1000~  & 1600   & 400                     & 1000    & 1600    \\
{Full\\Feedback}        & ETO            & 0.02\%           & 0.01\% & 0.00\% & 3.80\%                  & 3.63\%  & 3.59\%  \\
                        & SPO+           & 0.14\%           & 0.06\% & 0.04\% & 3.31\%                  & 3.23\%  & 3.17\%  \\
{Semi-bandit\\Feedback} & ETO            & 2.19\%           & 0.25\% & 0.03\% & 9.07\%                  & 5.08\%  & 4.55\%  \\
                        & SPO+ DM        & 2.86\%           & 0.05\% & 0.02\% & 4.79\%                  & 3.04\%  & 3.01\%  \\
                        & SPO+ DR   & 2.80\%           & 0.16\% & 0.09\% & 4.95\%                  & 3.43\%  & 3.37\%  \\
                        & SPO+ DR Lambda & 3.03\%           & 0.11\% & 0.07\% & 4.85\%                  & 3.26\%  & 3.24\%  \\
{Bandit\\Feedback}      & ETO          & 16.35\%          & 2.92\% & 1.04\% & 25.95\%                 & 17.23\% & 14.35\% \\
                        & SPO+ DM        & 4.66\%           & 0.30\% & 0.09\% & 5.03\%                  & 3.17\%  & 3.03\%  \\
                        & SPO+ DR PI     & 4.63\%           & 0.47\% & 0.24\% & 5.38\%                  & 3.51\%  & 3.37\%  \\
                        & SPO+ DR Lambda & 4.37\%           & 0.28\% & 0.12\% & 4.91\%                  & 3.33\%  & 3.23\%  
\end{tblr}
\caption{Mean relative regret ratio across feedback regimes and policy-inducing class specifications. The nuisance class $\mathcal{F}^{\mathrm N}$ is well specified, and the logging policy selects feasible paths uniformly at random.}
\label{tab:Feedback type}
\end{table}

\paragraph{Feedback Types.}
\cref{tab:Feedback type} compares the performance of the two-stage ETO approach against our end-to-end IERM methods (using the SPO+ surrogate) across the three feedback settings, under both correctly specified and misspecified policy classes $\F$. 

When the policy-inducing class $\mathcal{F}$ is well specified, we observe a clear trend: for all methods, performance degrades as the information decreases, with regret increasing from the full-feedback to the semi-bandit and bandit settings. This is intuitive, as less information is available for learning. In the full-feedback case, ETO outperforms IERM, consistent with findings in \cite{hu2022fast,elmachtoub2022smart}. However, this advantage vanishes under partial feedback; in the most challenging bandit setting, IERM methods are substantially better than ETO. We conjecture this is because ETO's first-stage cost estimation becomes highly inaccurate with limited bandit data, causing it to optimize a substantially misleading objective.

When $\mathcal{F}$ is misspecified, the IERM methods consistently and significantly outperform ETO across all feedback types. This highlights the key advantage of the IERM framework: its end-to-end nature provides robustness against model misspecification, even under the more challenging partial feedback settings. While both ETO's and IERM's performance degrade as feedback becomes more limited, the increase in regret is noticeably smaller for IERM.

In the remainder of \cref{sec: numerical}, we will focus on the numerical results for the bandit-feedback setting. Similar experiments for the semi-bandit-feedback setting are deferred to \cref{sec: semi experiments}, and we note that the insights regarding function class specification, evaluation methods, and optimization methods hold qualitatively true for that setting as well.

\begin{table}
\centering
\begin{tabular}{|c|c|c|c|c|c|c|c|c|c|} 
\hline
           & \multicolumn{3}{c|}{$\F^{\text{N}}$ well-specified} & \multicolumn{3}{c|}{$\F^{\text{N}}$ misspecified degree 2} & \multicolumn{3}{c|}{$\F^{\text{N}}$ misspecified degree 4}  \\ 
\hline
Evaluation method & 400    & 1000   & 1600                                                                     & 400     & 1000   & 1600                                                                           & 400     & 1000    & 1600                                                                           \\ 
\hline
DM         & 4.66\% & 0.30\% & 0.09\%                                                                   & 21.21\% & 10.70\% & 8.33\%                                                                         & 23.45\% & 12.38\% & 9.90\%                                                                         \\ 
\hline
DR PI      & 4.63\% & 0.47\% & 0.24\%                                                                   & 18.64\%  & 9.13\% & 6.76\%                                                                         & 21.42\% & 10.97\% & 8.75\%                                                                         \\ 
\hline
DR Lambda  & 4.37\% & 0.28\% & 0.12\%                                                                   & 19.11\%  & 8.88\% & 6.53\%                                                                         & 21.59\% & 10.88\% & 8.70\%                                                                         \\
\hline
\end{tabular}
\caption{Mean relative regret ratio across evaluation methods under bandit feedback with a random logging policy. The policy-inducing class $\mathcal{F}$ is well specified, while the nuisance class $\mathcal{F}^{\mathrm N}$ is misspecified to varying degrees. Optimization uses SPO+.}
\label{tab:Nuisance Specifications}
\end{table}

\paragraph{Nuisance Class ($\F^{\text{N}}$) Specification.}
In \cref{tab:Nuisance Specifications}, under a random logging policy and bandit feedback, we report the relative regret for different evaluation methods when the nuisance class $\mathcal{F}^{\mathrm N}$ is misspecified to varying degrees, while the policy-inducing class $\mathcal{F}$ is well specified. Optimization uses SPO+ throughout.

A key observation, in comparison to the results in \cref{tab:Feedback type}, is that the performance of the end-to-end IERM approach degrades significantly when the nuisance model $\F^{\text{N}}$ is misspecified. In this setting, policy evaluation becomes biased, so the end-to-end approaches also target a ``wrong'' objective that may not accurately capture the decision quality. This highlights a challenge unique to partial-feedback CLO: sensitivity of end-to-end methods to nuisance misspecification. This sensitivity suggests a practical guideline: one may prefer using a highly flexible model for the nuisance class $\F^{\text{N}}$ to ensure accurate estimation, while using a simpler class $\F$ for the policy to maintain end-to-end tractability.

Furthermore, the results reveal a clear tradeoff between the DM and DR scores. When the nuisance model $\F^{\text{N}}$ is misspecified, the DR score (especially the regularized DR-Lambda variant) consistently outperforms the DM score, as its structure is designed to debias the misspecified nuisance (see discussion below \Cref{example: DR}). Conversely, when the nuisance model $\F^{\text{N}}$ is correctly specified, DM often performs better, likely because it’s free from the inflated variance due to the inverse Gram matrix in the DR method. This highlights that the DM and DR scores face different bias-and-variance tradeoffs under nuisance model misspecification.

\begin{table}
\centering
\begin{tabular}{|c|c|c|c|c|c|c|c|c|c|} 
\hline
             & \multicolumn{3}{c|}{DM}  & \multicolumn{3}{c|}{DR PI} & \multicolumn{3}{c|}{DR Lambda}  \\ 
\hline
Optimization method & 400    & 1000   & 1600   & 400    & 1000   & 1600     & 400    & 1000   & 1600          \\ 
\hline
SPO+         & 4.66\% & 0.30\% & 0.09\% & 4.63\% & 0.47\% & 0.24\%   & 4.37\% & 0.28\% & 0.12\%        \\ 
\hline
PGC          & 1.14\% & 0.21\% & 0.09\% & 1.59\% & 0.32\% & 0.23\%   & 1.49\% & 0.30\% & 0.20\%        \\ 
\hline
PGB          & 1.23\% & 0.23\% & 0.09\% & 1.66\% & 0.44\% & 0.42\%   & 1.46\% & 0.42\% & 0.34\%        \\ 
\hline
PFYL         & 1.69\% & 0.25\% & 0.08\% & 1.90\% & 0.48\% & 0.42\%   & 1.78\% & 0.31\% & 0.16\%        \\
\hline
\end{tabular}
\caption{Mean relative regret ratio across evaluation and optimization methods under bandit feedback with a random logging policy. Both the policy-inducing class $\mathcal{F}$ and the nuisance class $\mathcal{F}^{\mathrm N}$ are well specified.}
\label{tab:Evaluation Methods}
\end{table}

\paragraph{Comparison of Evaluation and Optimization Methods.}
\cref{tab:Evaluation Methods} compares various surrogate losses and evaluation methods under a random logging policy and bandit feedback when both the policy class $\F$ and the nuisance class $\F^{\text{N}}$ are well-specified. Consistent with our finding from \cref{tab:Nuisance Specifications}, DM outperforms DR when the nuisance function is correctly specified (and as a reminder, the reverse holds under misspecification). When comparing the optimization methods (surrogates), we observe that all  losses lead to policies whose regret decreases with the  sample size, demonstrating their general effectiveness. Among them, PGC exhibits the strongest performance with small sample sizes ($n=400$), while SPO+ and PGC emerge as the top performers as the sample size increases.

\subsection{Real Data Experiments} \label{sec: real data}
We now evaluate our methods on a real-world dataset from Uber Movement (\url{https://movement.uber.com})\footnote{This website is no longer actively maintained as of \the\month/\the\year; we use data archived prior to its shutdown.}, focusing on 45 census tracts in downtown Los Angeles. The dataset contains average travel times between adjacent tracts across 93 edges for 2018–2019. 
We again use a \emph{random} logging policy that selects paths from feasible ones completely at random, and generate the corresponding bandit- and semi-bandit-feedback data. We specify the policy-inducing class $\F$ and the nuisance class $\F^{\text{N}}$ to be the same class of functions linear in features $X \in \mathbb{R}^{12}$ (including covariates such as wind speed and visibility).\footnote{We also tested more flexible neural network classes for  $\F^{\text{N}}$. The results are similar and thus omitted. } We again use the true conditional Gram matrix $\Sigma^*$ in the DR scores.

\begin{table}[t]
\centering
\begin{tabular}{|c|c|c|c|c|c|c|c|c|} 
\hline
               & \multicolumn{2}{c|}{Full Feedback} & \multicolumn{3}{c|}{Semi-bandit Feedback}                                  & \multicolumn{3}{c|}{Bandit Feedback}                                         \\ 
\hline
 \begin{tabular}[c]{@{}c@{}}Period\\(Months)\end{tabular} & ETO    & SPO+~                     & ETO    & SPO+ DM & \begin{tabular}[c]{@{}c@{}}SPO+ DR\\Lambda\end{tabular} & ETO     & SPO+ DM & \begin{tabular}[c]{@{}c@{}}SPO+ DR\\Lambda\end{tabular}  \\ 
\hline
6            & 1.01\% & 1.01\%                    & 2.24\% & 1.01\%  & 1.01\%                                                  & 15.46\% & 13.23\% & 14.81\%                                                  \\
\hline
12             & 0.97\% & 0.98\%                    & 1.64\% & 0.98\%  & 0.98\%                                                  & 15.83\% & 10.21\% & 9.71\%                                                  \\ 
\hline
18            & 0.93\% & 0.92\%                    & 1.29\% & 0.92\%  & 0.92\%                                                  & 12.52\% & 4.89\%  & 3.25\%                                                   \\ 
\hline
24              & 0.84\% & 0.84\%                    & 1.07\% & 0.84\%  & 0.84\%                                                  & 10.72\% & 3.50\%  & 1.99\%                                                   \\ 
\hline
\end{tabular}
\caption{Mean relative regret ratio across methods in the real-data experiment under a random logging policy.}
\label{tab:Real data result}
\end{table}

We consider four different sample sizes corresponding to four time horizons: 6 months (920 data points), 12 months (1{,}825 data points), 18 months (2{,}735 data points), and 24 months (3{,}640 data points). For each horizon, we randomly split the data into 40\% training, 40\% validation, and 20\% testing, and repeat the split across 50 independent trials. As in the synthetic studies, we compare across feedback regimes, evaluation methods, optimization surrogates, and logging policies. In the main text, we present a core set of results in \cref{tab:Real data result}, comparing our IERM approach (using the SPO+ surrogate with DM and DR-Lambda evaluation) against the ETO baseline under a random logging policy. Additional results are provided in \cref{sec: real data exp extra}.

The results, summarized in \cref{tab:Real data result}, are consistent with our findings from the synthetic experiments. As expected, all methods perform worse as the feedback becomes more limited, with performance degrading from the full-feedback to the semi-bandit and bandit settings. 
In the full-feedback setting, ETO and IERM approaches perform comparably. However, in the partial-feedback settings, the end-to-end IERM approaches demonstrate a clear advantage.
This performance gap is most pronounced in the challenging bandit-feedback setting, highlighting the robustness of the IERM framework when data is limited.

\section{Conclusions} \label{sec: conclusions}
This paper proposes a unified learning framework for contextual linear optimization across multiple feedback regimes, with a particular focus on partial feedback, including the prevalent bandit and semi-bandit settings. Our framework adapts the powerful induced empirical risk minimization approach that integrates estimation and optimization, and it can be solved efficiently using existing surrogate losses. We provide novel regret analyses for the resulting policies, allowing for policy-class misspecification and nuisance-estimation error. Empirical experiments demonstrate the effectiveness of our method and offer practical insights from multiple perspectives.

\bibliographystyle{informs2014} %
\bibliography{literature} %

\begin{thebibliography}{53}
\providecommand{\natexlab}[1]{#1}
\providecommand{\url}[1]{\texttt{#1}}
\providecommand{\urlprefix}{URL }

\bibitem[{Abbasi-Yadkori et~al.(2011)Abbasi-Yadkori, P{\'a}l, \protect\BIBand{}
  Szepesv{\'a}ri}]{abbasi2011improved}
Abbasi-Yadkori Y, P{\'a}l D, Szepesv{\'a}ri C (2011) Improved algorithms for
  linear stochastic bandits. \emph{Advances in neural information processing
  systems} 24.

\bibitem[{Ai et~al.(2024)Ai, Fang, \protect\BIBand{} Xie}]{ai2024data}
Ai C, Fang Y, Xie H (2024) Data-driven policy learning for continuous
  treatments. \emph{arXiv preprint arXiv:2402.02535} .

\bibitem[{Athey \protect\BIBand{} Wager(2021)}]{athey2021policy}
Athey S, Wager S (2021) Policy learning with observational data.
  \emph{Econometrica} 89(1):133--161.

\bibitem[{Audibert \protect\BIBand{} Tsybakov(2007)}]{audibert2007fast}
Audibert JY, Tsybakov AB (2007) Fast learning rates for plug-in classifiers.
  \emph{The Annals of statistics} 35(2):608--633.

\bibitem[{Auer(2002)}]{auer2002using}
Auer P (2002) Using confidence bounds for exploitation-exploration trade-offs.
  \emph{Journal of Machine Learning Research} 3(Nov):397--422.

\bibitem[{Bennouna et~al.(2025)Bennouna, Zhang, Amin, \protect\BIBand{}
  Ozdaglar}]{bennounacontextual}
Bennouna O, Zhang J, Amin S, Ozdaglar AE (2025) Contextual optimization under
  model misspecification: A tractable and generalizable approach.
  \emph{Forty-second International Conference on Machine Learning}.

\bibitem[{Berthet et~al.(2020)Berthet, Blondel, Teboul, Cuturi, Vert,
  \protect\BIBand{} Bach}]{berthet2020learning}
Berthet Q, Blondel M, Teboul O, Cuturi M, Vert JP, Bach F (2020) Learning with
  differentiable pertubed optimizers. \emph{Advances in neural information
  processing systems} 33:9508--9519.

\bibitem[{Boucheron et~al.(2003)Boucheron, Lugosi, Massart
  et~al.}]{boucheron2003concentration}
Boucheron S, Lugosi G, Massart P, et~al. (2003) Concentration inequalities
  using the entropy method. \emph{The Annals of Probability} 31(3):1583--1614.

\bibitem[{Bousquet(2003)}]{bousquet2003concentration}
Bousquet O (2003) Concentration inequalities for sub-additive functions using
  the entropy method. \emph{Stochastic inequalities and applications}, 213--247
  (Springer).

\bibitem[{Chen et~al.(2013)Chen, Wang, \protect\BIBand{}
  Yuan}]{chen2013combinatorial}
Chen W, Wang Y, Yuan Y (2013) Combinatorial multi-armed bandit: General
  framework and applications. \emph{International conference on machine
  learning}, 151--159 (PMLR).

\bibitem[{Chernozhukov et~al.(2018)Chernozhukov, Chetverikov, Demirer, Duflo,
  Hansen, Newey, \protect\BIBand{} Robins}]{chernozhukov2018double}
Chernozhukov V, Chetverikov D, Demirer M, Duflo E, Hansen C, Newey W, Robins J
  (2018) Double/debiased machine learning for treatment and structural
  parameters. \emph{The Econometrics Journal} 21(1):C1--C68.

\bibitem[{Chernozhukov et~al.(2019)Chernozhukov, Demirer, Lewis,
  \protect\BIBand{} Syrgkanis}]{chernozhukov2019semi}
Chernozhukov V, Demirer M, Lewis G, Syrgkanis V (2019) Semi-parametric
  efficient policy learning with continuous actions. \emph{Advances in Neural
  Information Processing Systems} 32.

\bibitem[{Combes et~al.(2015)Combes, Talebi Mazraeh~Shahi, Proutiere
  et~al.}]{combes2015combinatorial}
Combes R, Talebi Mazraeh~Shahi MS, Proutiere A, et~al. (2015) Combinatorial
  bandits revisited. \emph{Advances in neural information processing systems}
  28.

\bibitem[{Dani et~al.(2008)Dani, Hayes, \protect\BIBand{}
  Kakade}]{dani2008stochastic}
Dani V, Hayes TP, Kakade SM (2008) Stochastic linear optimization under bandit
  feedback. \emph{21st Annual Conference on Learning Theory}, 355--366, number
  101.

\bibitem[{Dud{\'\i}k et~al.(2014)Dud{\'\i}k, Erhan, Langford, \protect\BIBand{}
  Li}]{dudik2014doubly}
Dud{\'\i}k M, Erhan D, Langford J, Li L (2014) Doubly robust policy evaluation
  and optimization .

\bibitem[{Dud{\'\i}k et~al.(2011)Dud{\'\i}k, Langford, \protect\BIBand{}
  Li}]{dudik2011doubly}
Dud{\'\i}k M, Langford J, Li L (2011) Doubly robust policy evaluation and
  learning. \emph{arXiv preprint arXiv:1103.4601} .

\bibitem[{El~Balghiti et~al.(2023)El~Balghiti, Elmachtoub, Grigas,
  \protect\BIBand{} Tewari}]{el2023generalization}
El~Balghiti O, Elmachtoub AN, Grigas P, Tewari A (2023) Generalization bounds
  in the predict-then-optimize framework. \emph{Mathematics of Operations
  Research} 48(4):2043--2065.

\bibitem[{Elmachtoub \protect\BIBand{} Grigas(2022)}]{elmachtoub2022smart}
Elmachtoub AN, Grigas P (2022) Smart “predict, then optimize”.
  \emph{Management Science} 68(1):9--26.

\bibitem[{Elmachtoub et~al.(2023)Elmachtoub, Lam, Zhang, \protect\BIBand{}
  Zhao}]{elmachtoub2023estimate}
Elmachtoub AN, Lam H, Zhang H, Zhao Y (2023) Estimate-then-optimize versus
  integrated-estimation-optimization: A stochastic dominance perspective.
  \emph{arXiv preprint arXiv:2304.06833} .

\bibitem[{Elmachtoub et~al.(2020)Elmachtoub, Liang, \protect\BIBand{}
  Mcnellis}]{elmachtoub20a}
Elmachtoub AN, Liang JCN, Mcnellis R (2020) Decision trees for decision-making
  under the predict-then-optimize framework. III HD, Singh A, eds.,
  \emph{Proceedings of the 37th International Conference on Machine Learning},
  volume 119 of \emph{Proceedings of Machine Learning Research}, 2858--2867
  (PMLR),
  \urlprefix\url{https://proceedings.mlr.press/v119/elmachtoub20a.html}.

\bibitem[{Foster \protect\BIBand{} Syrgkanis(2023)}]{foster2023orthogonal}
Foster DJ, Syrgkanis V (2023) Orthogonal statistical learning. \emph{The Annals
  of Statistics} 51(3):879--908.

\bibitem[{Ho-Nguyen \protect\BIBand{}
  K{\i}l{\i}n{\c{c}}-Karzan(2022)}]{ho2022risk}
Ho-Nguyen N, K{\i}l{\i}n{\c{c}}-Karzan F (2022) Risk guarantees for end-to-end
  prediction and optimization processes. \emph{Management Science}
  68(12):8680--8698.

\bibitem[{Hu et~al.(2022)Hu, Kallus, \protect\BIBand{} Mao}]{hu2022fast}
Hu Y, Kallus N, Mao X (2022) Fast rates for contextual linear optimization.
  \emph{Management Science} 68(6):4236--4245.

\bibitem[{Hu et~al.(2025)Hu, Kallus, \protect\BIBand{} Uehara}]{hu2025fast}
Hu Y, Kallus N, Uehara M (2025) Fast rates for the regret of offline
  reinforcement learning. \emph{Mathematics of Operations Research}
  50(1):633--655.

\bibitem[{Huang \protect\BIBand{}
  Gupta(2024{\natexlab{a}})}]{huang2024decision}
Huang M, Gupta V (2024{\natexlab{a}}) Decision-focused learning with
  directional gradients. \emph{Advances in Neural Information Processing
  Systems} 37:79194--79220.

\bibitem[{Huang \protect\BIBand{}
  Gupta(2024{\natexlab{b}})}]{huang2024learning}
Huang M, Gupta V (2024{\natexlab{b}}) Learning best-in-class policies for the
  predict-then-optimize framework. \emph{arXiv preprint arXiv:2402.03256} .

\bibitem[{Imbens \protect\BIBand{} Rubin(2015)}]{imbens2015causal}
Imbens GW, Rubin DB (2015) \emph{Causal inference in statistics, social, and
  biomedical sciences} (Cambridge university press).

\bibitem[{Kallus(2018)}]{kallus2018balanced}
Kallus N (2018) Balanced policy evaluation and learning. \emph{Advances in
  Neural Information Processing Systems}, 8895--8906.

\bibitem[{Kallus et~al.(2022)Kallus, Mao, Wang, \protect\BIBand{}
  Zhou}]{kallus2022doubly}
Kallus N, Mao X, Wang K, Zhou Z (2022) Doubly robust distributionally robust
  off-policy evaluation and learning. \emph{International Conference on Machine
  Learning}, 10598--10632 (PMLR).

\bibitem[{Kallus \protect\BIBand{} Zhou(2018)}]{kallus2018policy}
Kallus N, Zhou A (2018) Policy evaluation and optimization with continuous
  treatments. \emph{International conference on artificial intelligence and
  statistics}, 1243--1251 (PMLR).

\bibitem[{Kitagawa \protect\BIBand{} Tetenov(2018)}]{kitagawa2018should}
Kitagawa T, Tetenov A (2018) Who should be treated? empirical welfare
  maximization methods for treatment choice. \emph{Econometrica}
  86(2):591--616.

\bibitem[{Kveton et~al.(2015)Kveton, Wen, Ashkan, \protect\BIBand{}
  Szepesvari}]{kveton2015tight}
Kveton B, Wen Z, Ashkan A, Szepesvari C (2015) Tight regret bounds for
  stochastic combinatorial semi-bandits. \emph{Artificial Intelligence and
  Statistics}, 535--543 (PMLR).

\bibitem[{Lattimore \protect\BIBand{}
  Szepesv{\'a}ri(2020)}]{lattimore2020bandit}
Lattimore T, Szepesv{\'a}ri C (2020) \emph{Bandit algorithms} (Cambridge
  University Press).

\bibitem[{Li et~al.(2010)Li, Chu, Langford, \protect\BIBand{}
  Schapire}]{li2010contextual}
Li L, Chu W, Langford J, Schapire RE (2010) A contextual-bandit approach to
  personalized news article recommendation. \emph{Proceedings of the 19th
  international conference on World wide web}, 661--670.

\bibitem[{Liu \protect\BIBand{} Grigas(2021)}]{liu2021risk}
Liu H, Grigas P (2021) Risk bounds and calibration for a smart
  predict-then-optimize method. \emph{Advances in Neural Information Processing
  Systems} 34:22083--22094.

\bibitem[{Mandi et~al.(2024)Mandi, Kotary, Berden, Mulamba, Bucarey, Guns,
  \protect\BIBand{} Fioretto}]{mandi2024decision}
Mandi J, Kotary J, Berden S, Mulamba M, Bucarey V, Guns T, Fioretto F (2024)
  Decision-focused learning: Foundations, state of the art, benchmark and
  future opportunities. \emph{Journal of Artificial Intelligence Research}
  80:1623--1701.

\bibitem[{Neu \protect\BIBand{} Bart{\'o}k(2013)}]{neu2013efficient}
Neu G, Bart{\'o}k G (2013) An efficient algorithm for learning with semi-bandit
  feedback. \emph{International Conference on Algorithmic Learning Theory},
  234--248 (Springer).

\bibitem[{Pollard(1990)}]{pollard1990empirical}
Pollard D (1990) Empirical processes: theory and applications. \emph{NSF-CBMS
  regional conference series in probability and statistics}, i--86 (JSTOR).

\bibitem[{Rusmevichientong \protect\BIBand{}
  Tsitsiklis(2010)}]{rusmevichientong2010linearly}
Rusmevichientong P, Tsitsiklis JN (2010) Linearly parameterized bandits.
  \emph{Mathematics of Operations Research} 35(2):395--411.

\bibitem[{Sadana et~al.(2024)Sadana, Chenreddy, Delage, Forel, Frejinger,
  \protect\BIBand{} Vidal}]{sadana2024survey}
Sadana U, Chenreddy A, Delage E, Forel A, Frejinger E, Vidal T (2024) A survey
  of contextual optimization methods for decision-making under uncertainty.
  \emph{European Journal of Operational Research} .

\bibitem[{Sen(2018)}]{sen2018gentle}
Sen B (2018) A gentle introduction to empirical process theory and
  applications. \emph{Lecture Notes, Columbia University} 11:28--29.

\bibitem[{Shalev-Shwartz \protect\BIBand{}
  Ben-David(2014)}]{shalev2014understanding}
Shalev-Shwartz S, Ben-David S (2014) \emph{Understanding machine learning: From
  theory to algorithms}.

\bibitem[{Si et~al.(2023)Si, Zhang, Zhou, \protect\BIBand{}
  Blanchet}]{si2023distributionally}
Si N, Zhang F, Zhou Z, Blanchet J (2023) Distributionally robust batch
  contextual bandits. \emph{Management Science} 69(10):5772--5793.

\bibitem[{Slivkins et~al.(2019)}]{slivkins2019introduction}
Slivkins A, et~al. (2019) Introduction to multi-armed bandits.
  \emph{Foundations and Trends{\textregistered} in Machine Learning}
  12(1-2):1--286.

\bibitem[{Swaminathan \protect\BIBand{} Joachims(2015)}]{swaminathan2015batch}
Swaminathan A, Joachims T (2015) Batch learning from logged bandit feedback
  through counterfactual risk minimization. \emph{The Journal of Machine
  Learning Research} 16(1):1731--1755.

\bibitem[{Talagrand(1996)}]{talagrand1996new}
Talagrand M (1996) New concentration inequalities in product spaces.
  \emph{Inventiones mathematicae} 126(3):505--563.

\bibitem[{Tang \protect\BIBand{} Khalil(2024)}]{tang2024pyepo}
Tang B, Khalil EB (2024) Pyepo: a pytorch-based end-to-end
  predict-then-optimize library for linear and integer programming.
  \emph{Mathematical Programming Computation} 16(3):297--335.

\bibitem[{Tsybakov(2004)}]{tsybakov2004optimal}
Tsybakov AB (2004) Optimal aggregation of classifiers in statistical learning.
  \emph{The Annals of Statistics} 32(1):135--166.

\bibitem[{Uehara et~al.(2022)Uehara, Shi, \protect\BIBand{}
  Kallus}]{uehara2022review}
Uehara M, Shi C, Kallus N (2022) A review of off-policy evaluation in
  reinforcement learning. \emph{arXiv preprint arXiv:2212.06355} .

\bibitem[{Van Der~Vaart \protect\BIBand{} Wellner(1996)}]{van1996weak}
Van Der~Vaart AW, Wellner JA (1996) \emph{Weak convergence and empirical
  processes} (Springer).

\bibitem[{Wainwright(2019)}]{wainwright2019high}
Wainwright MJ (2019) \emph{High-dimensional statistics: A non-asymptotic
  viewpoint}, volume~48 (Cambridge University Press).

\bibitem[{Zhao et~al.(2012)Zhao, Zeng, Rush, \protect\BIBand{}
  Kosorok}]{zhao2012estimating}
Zhao Y, Zeng D, Rush AJ, Kosorok MR (2012) Estimating individualized treatment
  rules using outcome weighted learning. \emph{Journal of the American
  Statistical Association} 107(499):1106--1118.

\bibitem[{Zhou et~al.(2023)Zhou, Athey, \protect\BIBand{}
  Wager}]{zhou2018offline}
Zhou Z, Athey S, Wager S (2023) Offline multi-action policy learning:
  Generalization and optimization. \emph{Operations Research} 71(1):148--183.

\end{thebibliography}

\ECSwitch
\EquationsNumberedBySection

\ECHead{Electronic Companions}

\section{Omitted Proofs} \label{sec: proofs}

\subsection{\cref{sec: ierm-bandit} Omitted Proofs}

\proof{Proof of \cref{prop: unbiased}}
According to the first-order condition of the least squares problem in \cref{eq: f-nuisance}, any $\tilde f^* \in \F^*$ (in particular, $f^*$) must satisfy 
\begin{align*}
    {\Sigma^*(X)\tilde f^*(X)} = \mathbb{E}\bracks{ZC \mid X}. 
\end{align*}
Since $Z \in \Z$, $\Sigma^*(X) = \mathbb{E}[ZZ^\top \mid X]$ and the coverage assumption in \cref{assump: dgp} holds, the row space and column space of $\Sigma^*$ are both identical to $\text{Span}(\Z)$. This means that $\tilde f^* - f^*$ must be orthogonal to $\text{Span}(\Z)$. In particular, let $M$ be a matrix whose columns include all basis vectors of the span of  $\Z$. Then $M^\top (\tilde f^*(X) - f^*(X)) = 0$ or $M^\top \tilde f^*(X) =  M^\top f^*(X)$. Moreover, we have $\pi(X) \in \Z \in \text{Range}(M)$, i.e., there exists a vector of functions $a(X)$ such that $\pi(X) = Ma(X)$. Therefore, for any $\tilde f^* \in \F^*$ and any policy $\pi: \mathcal{Z} \to \Z$, we have 
\begin{align*}
    \mathbb{E}\bracks{\tilde f^*(X)^\top \pi(X)} = \mathbb{E}\bracks{\tilde f^*(X)^\top M a(X)} = \mathbb{E}\bracks{f^*(X)^\top M a(X)} = \mathbb{E}\bracks{ f^*(X)^\top \pi(X)} = V(\pi). 
\end{align*}
This finishes the proof for the score $\theta = \theta_{DM}$. 

For ISW, we have 
\begin{align*}
    \expect\bracks{\prns{\Sigma^{*\dagger}(X)ZC}^\top \pi(X)} - \expect\bracks{f^*(X)^\top \pi(X)} = \expect\bracks{f^*(X)^\top(I-\Sigma^{*\dagger}(X)\Sigma^*(X))^\top \pi(X)}.
\end{align*}
Recall that the column space of $\Sigma^*(x)$ is identical to the column space of $M$. By the property of pseudo-inverse, the column space of $(I - \Sigma^{*\dagger}\Sigma^*)$ is orthogonal to the column space of $M$. Therefore, $(I-\Sigma^{*\dagger}(X)\Sigma^*(X))^\top \pi(X) = 0$ for any $\pi \in \Z$. This proves that 
\begin{align*}
    \expect\bracks{\prns{\Sigma^{*\dagger}(X)ZC}^\top \pi(X)} =  \expect\bracks{f^*(X)^\top \pi(X)} = V(\pi),
\end{align*}
which finishes the proof for the score $\theta = \theta_{ISW}$.

For the DR score, we have that for any function $\tilde f^*, \Sigma^*$,
    \begin{align*}
        &\expect\bracks{\pi(X)^\top \prns{\tilde f^*(X)+\Sigma^{*\dagger}(X) Z(C-Z^\top \tilde f^*(X))}} - \expect[\pi(X)^\top  f^*(X)] \\
        = &\expect\bracks{\pi(X)^\top \prns{(I-\Sigma^{*\dagger}(X)\Sigma^*(X))(\tilde f^*(X)-f^*(X))}} = 0,
    \end{align*}
because $(I-\Sigma^{*\dagger}(X)\Sigma^*(X))^\top \pi(X) = 0$ for any $\pi \in \Z$. This finishes the proof for $\theta=\theta_{DR}$. 
\endproof

\subsection{\cref{sec: critical radius} Omitted Proofs} \label{sec: critical radius proof}

\subsubsection{Supporting Lemmas}

\begin{lemma}\label{lemma: star hull vc}
    Suppose function class $\Gcal$ has a VC-subgraph dimension $\text{Pdim}(\Gcal)$. Then its star-hull, $\Gcal^\star=\braces{ag: g\in\Gcal, a\in [0,1]}$, has a VC-dimension at most $4(\text{Pdim}(\Gcal)+1)\log_2(\text{Pdim}(\Gcal)+1)$.
\end{lemma}
\proof{Proof of \cref{lemma: star hull vc}}
For any positive integer $m$, define the growth functions for $\Gcal$ and $\Gcal^\star$ as
\begin{align*}
    & \grow_\Gcal(m) = \max_{x_1, \dots, x_m, s_1, \dots, s_m} \abs{ \braces{ \prns{ \ind\bracks{g(x_i)\ge s_i} }_{i=1}^m  : g\in \Gcal  }  },\\
    & \grow_{\Gcal^\star}(m) = \max_{x_1, \dots, x_m, r_1, \dots, r_m} \abs{ \braces{ \prns{ \ind\bracks{a g(x_i)\ge r_i} }_{i=1}^m  : g\in \Gcal, a\in[0,1]  }  }.
\end{align*}

\paragraph{Step I: quantifying the relationship between $\grow_\Gcal(m)$ and $\grow_{\Gcal^\star}(m)$.}
Fix $g\in \Gcal$, $x=(x_1, \dots, x_m)$, and $r=(r_1, \dots, r_m)$. As $a$ goes from $0$ to $1$, each coordinate of $\prns{ \ind\bracks{ag(x_1)\ge r_1}, \dots, \ind\bracks{a g(x_m)\ge r_m} }$ can flip at most once, so this vector can take at most $m+1$ distinct values over $a\in [0,1]$. Therefore, we can find $a_{g,1}, \dots, a_{g,m+1} \in (0,1]$ such that
\begin{align}\label{eq: m+1 values}
   \braces{ \prns{ \ind\bracks{a g(x_i)\ge  r_i} }_{i=1}^m  : a\in[0,1] } = \braces{ \prns{ \ind\bracks{a_{g,j} g(x_i)\ge r_i} }_{i=1}^m  : j\in[m+1]  }.
\end{align}
Now we allow $g$ to vary, and \cref{eq: m+1 values} implies that
\begin{align}
   \abs{ \braces{ \prns{ \ind\bracks{a g(x_i)\ge  r_i} }_{i=1}^m  : g\in \Gcal, a\in[0,1]  }  }  
   = & \abs{ \braces{ \prns{ \ind\bracks{a_{g,j} g(x_i)\ge  r_i} }_{i=1}^m  : g\in \Gcal, j\in[m+1]  }  }  \notag\\
   \le & \sum_{j=1}^{m+1} \abs{ \braces{ \prns{ \ind\bracks{a_{g,j} g(x_i)\ge  r_i} }_{i=1}^m  : g\in \Gcal}  }\notag \\
   = & \sum_{j=1}^{m+1} \abs{ \braces{ \prns{ \ind\bracks{ g(x_i)\ge  r_i/ a_{g,j}} }_{i=1}^m  : g\in \Gcal}  } \notag\\
   \le & (m+1)\grow_\Gcal(m). \label{eq: growth m+1}
\end{align}
Taking the worst case over $x, r$ in \cref{eq: growth m+1}, we get
\begin{align} \label{eq: growth ineq}
    \grow_{\Gcal^\star}(m) \le (m+1) \grow_\Gcal(m).
\end{align}

\paragraph{Step II: Computing the maximal possible VC-subgraph dimension of $\Gcal^\star$ based on \cref{eq: growth ineq}.}
By the Sauer-Shelah lemma \citep[Proposition 4.18]{wainwright2019high}, for any $m>d$,
\begin{align*}
    \grow_\Gcal(m) \le (m+1)^{\text{Pdim}(\Gcal)},
\end{align*}
which, combined with \cref{eq: growth ineq}, implies that
\begin{align}
   \grow_{\Gcal^\star}(m) \le  (m+1)^{\text{Pdim}(\Gcal)+1}.
\end{align}
Taking $m^* = 4(\text{Pdim}(\Gcal)+1)\log_2(\text{Pdim}(\Gcal)+1)$, we can easily verify that
\begin{align*}
    2^{m^*} > (m^*+1)^{\text{Pdim}(\Gcal)+1} \ge \grow_{\Gcal^\star}(m^*).
\end{align*}
Since the VC-subgraph dimension of $\Gcal^\star$ is the maximal solution to the equality $\grow_{\Gcal^\star}(m) = 2^m$, we know its VC-subgraph dimension is smaller than $m^*$, concluding the proof.
\endproof

\subsubsection{Proofs of \cref{lemma: critical radius,prop: natarajan}}

\proof{Proof of \cref{lemma: critical radius}}

Define
\[
\Psi(t) = \frac{1}{5}\exp(t^2).
\]
Note that whenever $\expect\Psi(|W|/w) \le 1$ for some random variable $W$, we have by Markov's inequality that
\begin{align}
    \pr(|W|>t) \le 5 \exp(-t^2/w^2), \notag\\
    \expect|W| = \int_0^\infty \pr(|W|>t) dt \le 5w. \label{eq: ec 2}
\end{align}
Throughout the proof, we condition on the event that $\Gcal^{(-k)}$ has VC-subgraph dimension $\eta$. We finish the proof in three steps. 

\paragraph{Step I: Critical radius for empirical Rademacher complexity.}
Define the localized empirical Rademacher complexity
\begin{align*}
        \hat{\Rcal}_n(\Gcal^{(-k)}, r) = \expect_\epsilon\bracks{\sup_{g\in \mathcal{G}, \norm{g}_n\le r} \abs{\frac{1}{n} \sum_{i=1}^n \epsilon_i g(\Data_i)}},
\end{align*}
where $\epsilon_1, \dots, \epsilon_n$ are i.i.d. Rademacher random variables, and $\norm{g}_n = \sqrt{\sum_{i=1}^n g^2(\Data_i)/n}$.
Let $\hat{r}_n^*$ be the smallest positive solution to $\hat{\Rcal}_n(\Gcal^{(-k)}, r)\le r^2/32$. In what follows, we show that there exists a universal constant $C$ such that
\begin{align}\label{eq: r_hat_n}
    \pr\prns{\hat{r}_n^*\le \Tilde{C}\sqrt{\frac{\eta \log(n+1)}{n}}} = 1.
\end{align}

For any $g\in \Gcal^{(-k)}$, define set
\begin{align*}
    \mathbf{G} = \braces{\prns{g(\Data_1), \dots, g(\Data_n)}: g\in\Gcal^{(-k)}, \norm{g}_n \le r}.
\end{align*}
Let $D(t, \mathbf{G})$ be the $t$-packing number of $\mathbf{G}$ and $N(t, \mathbf{G})$ be the $t$-covering number.
Note that $\norm{\mathbf{g}}\le \sqrt{n}r$ for all $\mathbf{g} \in \mathbf{G}$. By Theorem 3.5 in \cite{pollard1990empirical},
\begin{align*}
    \expect_\epsilon \Psi \prns{\frac{1}{J} \sup_{g\in \Gcal^{(-k)}, \norm{g}_n\le r}\abs{\sum_{i=1}^n \epsilon_i g(\Data_i)} }\le 1,
\end{align*}
where
\begin{align*}
    J = 9\int_0^{\sqrt{n} r} \sqrt{\log D(t, \mathbf{G})} dt.
\end{align*}
So by \cref{eq: ec 2},
\begin{align*}
    \hat{\Rcal}_n (\Gcal^{(-k)}, r) \le \frac{5}{n} J.
\end{align*}
Consider the function class
\begin{align*}
    (\Gcal^{(-k)})' = \braces{g: g\in \Gcal^{(-k)}, \norm{g}_n \le r}.
\end{align*}
Note that $\sqrt{n}r$ is the envelope of $(\Gcal^{(-k)})'$ on $\Data_1, \dots, \Data_n$. Applying \citep[Theorem 2.6.7]{van1996weak} gives
\begin{align*}
    D(\sqrt{n}rt, \mathbf{G}) & \le N(\sqrt{n}rt/2, \mathbf{G})\\
    & \le \Tilde{C}(\eta+1)(16e)^{\eta+1}\prns{\frac{4n}{t^2}}^\eta
\end{align*}
for a universal constant $\Tilde{C}$. Thus,
\begin{align*}
    J = & 9\sqrt{n} r\int_0^1 \sqrt{\log D(\sqrt{n}r t, \mathbf{G})} dt\\
    \le &  9\sqrt{n} r \int_0^1 \sqrt{\log C + \log (\eta + 1) + (\eta +1)\log (16 e) + \eta \log n + \eta \log 4 - 2\eta \log t} dt\\
    \le & 9 \int_0^1 \sqrt{2\log C + 15 - 3 \log t} dt\sqrt{\eta \log(n+1) n } r,
\end{align*}
where $\int_0^1 \sqrt{2\log C + 15 - 3\log t} dt< \infty$. We then obtain that for a (different) universal constant $\Tilde{C}$,
\begin{align*}
    \hat{\Rcal}_n(\Gcal^{(-k)}, r) \le \frac{\Tilde{C}}{32}\sqrt{\frac{\eta \log(n+1)}{n}} r.
\end{align*}
Therefore, for any samples $\{\Data_i\}_{i=1}^n$, any $\hat{r}_n \ge C\sqrt{\eta \log(n+1)/n}$ is a valid solution to $\hat{\Rcal}_n(\Gcal^{(-k)}, r)\le r^2/32$, which implies \cref{eq: r_hat_n}.

\paragraph{Step II: Critical radius for Rademacher complexity.} Let $r^*_n$ be the smallest positive solution to the inequality $\Rcal_n(\Gcal^{(-k)}, r)\le r^2/32$. We now bound $r^*_n$.

For any $t>0$, define the random variable
\begin{align*}
    W_n(t) = \expect_\epsilon\bracks{\sup_{g\in \mathcal{G}^{(-k)}, \norm{g}_2\le t} \abs{\frac{1}{n} \sum_{i=1}^n \epsilon_i g(\Data_i)}},
\end{align*}
so that $\Rcal_n(\Gcal^{(-k)}, r) = \expect_\Data[W_n(r)]$ by construction. Define the events
\begin{align*}
    & \Ecal_3(t) = \braces{\abs{W_n(t) - \Rcal_n(\Gcal^{(-k)}, t)} \le \frac{r_n^* t}{112}},\\
    & \Ecal_4 = \braces{\sup_{g\in\Gcal^{(-k)}} \frac{\abs{\norm{g}^2_n - \norm{g}^2_2}}{\norm{g}_2^2 + (r_n^*)^2} \le \frac{1}{2}}.
\end{align*}
Following the proof of Lemma EC.12 in \cite{hu2022fast},
\begin{align*}
    \pr\prns{\frac{r_n^*}{5} \le \hat{r}_n^* \le 3r_n^* } \ge \pr\prns{2\Ecal_3(r_n^*) \cap \Ecal_3(7 r_n^*) \cap \Ecal_4}.
\end{align*}
Lemma EC.10 in \cite{hu2022fast} implies that
\begin{align*}
    \pr\prns{\Ecal_4^c} \le 2e^{-\Tilde{c}_1 n (r_n^*)^2}
\end{align*}
for some universal constant $\Tilde{c}_1 >0$. Moreover, for any $\zeta\ge 1$, we have $\Rcal_n(\Gcal^{(-k)}, \zeta r^*_n) \ge \Rcal_n(r^*_n) \ge (r^*_n)^2/32$. By Theorem 16 in \cite{boucheron2003concentration},
\begin{align*} 
    \pr\prns{\Ecal_3^c(\zeta r^*_n)} \le 2e^{-\Tilde{c}_2 n (r_n^*)^2}
\end{align*}
for some universal constant $\Tilde{c}_2>0$. Combining all pieces we have
\begin{align} \label{eq: bounding r star}
    \pr\prns{\frac{r_n^*}{5} \le \hat{r}_n^* \le 3r_n^* } \ge  1-6e^{-( \Tilde{c}_1\wedge \Tilde{c}_2) n (r_n^*)^2}.
\end{align}
By step I in the proof, $ \pr\prns{\hat{r}_n^*\le \Tilde{C}_0\sqrt{\eta \log(n+1)/n}} = 1$ for some constant $\Tilde{C}_0$. Let $\Tilde{C}> 5\Tilde{C}_0$ be a constant such that $2^{-\Tilde{C}(\Tilde{c}_1\wedge \Tilde{c}_2)} < 1/6$. If $r_n^*> C\sqrt{\eta\log (n+1)/n}$, by \cref{eq: bounding r star} we have $ \pr\prns{\hat{r}_n^*> \Tilde{C}_0\sqrt{\eta \log(n+1)/n}} >0$, which leads to contradiction. Thus,
\begin{align*}
    r_n^* \le \Tilde{C}\sqrt{\eta\log(n+1)/n}.
\end{align*}
Finally, any $r \ge \Tilde{C}\sqrt{\eta\log(n+1)/n}$ solves the inequality $\Rcal_n(\Gcal^{(-k)}, r)\le r^2/32< r^2$.
\paragraph{Step III: Checking the conditions.} 
Given that $\tilde{C}\sqrt{\eta\log(n+1)/n}$ satisfies the inequality $\Rcal_n(\Gcal^{(-k)}, r)\le r^2$, it follows from similar arguments that $\tilde{C}\sqrt{2K\eta\log(n+1)/n}$ satisfies the corresponding inequality $\Rcal_{n/K}(\Gcal^{(-k)}, r)\le r^2$. The other two conditions, $3 n\tilde{r}^2/(64K) \ge \log \log_2(1/\tilde{r})$ and $2\exp(-3 n\tilde{r}^2/(64K))) \le \delta/(2K)$, are easily satisfied as long as we take $\Tilde{C}$ big enough.
\endproof

\proof{Proof of \cref{prop: natarajan}} 
Fix $\nuisance$ as any possible nuisance estimator, and define 
\begin{align*}
    \Gcal = \braces{\data \rightarrow\frac{\theta\prns{\data;\nuisance}^\top \prns{\pi(x) - \Tilde{\pi}^*(x)}}{2B\Theta}: \pi\in \Pi_\F}.
\end{align*}
Let $\pdim(\Gcal)$ be the VC-subgraph dimension of $\Gcal$. 

\paragraph{Step I: Bounding $\pdim(\Gcal)$ using $\ndim(\Pi_\F)$.}
Fix any sample $\braces{(\data_i, t_i)}_{i=1}^m$, and assume each $o_i$ includes an $x_i$ component. Let
\begin{align*}
 L\prns{\Pi_\F; \braces{(\data_i, t_i)}_{i=1}^m} = \braces{\prns{\pi(x_1), \dots, \pi(x_m)}: \pi\in \Pi_\F} \subseteq \prns{\Z^\angle}^m 
\end{align*}
be the multiclass label sequences that can be realized on this sample by $\Pi_\F$, and 
\begin{align*}
 L\prns{\Gcal; \braces{(\data_i, t_i)}_{i=1}^m} = \braces{\prns{\ind\bracks{t_i\le \frac{\theta\prns{\data_i;\nuisance}^\top \prns{\pi(x_i) - \Tilde{\pi}^*(x_i)}}{2B\Theta} }}_{i=1}^m : \pi\in\Pi_\F} \subseteq \braces{0,1}^m
\end{align*}
be the binary sequences that can be realized on this sample by the subgraph functions of $\Gcal$.
Since every labeling in $L\prns{\Gcal; \braces{(\data_i, t_i)}_{i=1}^m}$ can be realized by applying a coordinate-wise mapping on a labeling in $L\prns{\Pi_\F; \braces{(\data_i, t_i)}_{i=1}^m}$, we have
\begin{align*}
    \abs{L\prns{\Gcal; \braces{(\data_i, t_i)}_{i=1}^m}} \le \abs{L\prns{\Pi_\F; \braces{(\data_i, t_i)}_{i=1}^m}}.
\end{align*}
Maximizing over all samples,
\begin{align}\label{eq: growth inequality}
    \grow_\Gcal(m) = \sup_{\braces{(\data_i, t_i)}_{i=1}^m} \abs{ L\prns{\Gcal; \braces{(\data_i, t_i)}_{i=1}^m}} \le \sup_{\braces{x_i}_{i=1}^m} \abs{ L\prns{\Pi_\F; \braces{(\data_i, t_i)}_{i=1}^m}}= \grow_{\Pi_\F}(m).
\end{align}
By Natarajan's lemma \citep[Lemma 29.4]{shalev2014understanding}, 
\begin{align}\label{eq: growth ndim}
    \grow_{\Pi_\F}(m) \le m^{\ndim(\Pi_\F)} \abs{Z^\angle}^{2\ndim(\Pi_\F)}.
\end{align}
From \cref{eq: growth inequality,eq: growth ndim}, any solution to the following inequality serves as an upper bound on $\pdim(\Gcal)$:
\begin{align} \label{eq: m ineq}
    2^m > m^{\ndim(\Pi_\F)} \abs{Z^\angle}^{2\ndim(\Pi_\F)}.
\end{align}
It is easy to verify that
\begin{align*}
    m^* = 4\ndim(\Pi_\F)\prns{\log_2\prns{\abs{\Z^\angle}} + \log_2\prns{\ndim(\Pi_\F)} +1}
\end{align*}
is a valid solution to \cref{eq: m ineq}. Thus, we conclude that
\begin{align} \label{eq: pdim gcal}
  \pdim(\Gcal) \le   4\ndim(\Pi_\F)\prns{\log_2\prns{\abs{\Z^\angle}} + \log_2\prns{\ndim(\Pi_\F)} +1}.
\end{align}

\paragraph{Step II: Bounding the VC-Subgraph Dimension for $\Gcal^{(-k)}$.}
Let $\Gcal^\star=\braces{ag: g\in\Gcal, a\in [0,1]}$ be the star-hull of $\Gcal$. By \cref{lemma: star hull vc}, $\Gcal^\star$ has VC-dimension at most $4(\text{Pdim}(\Gcal)+1)\log_2(\text{Pdim}(\Gcal)+1)$, which is of order $\tilde{O}\prns{\ndim(\Pi_\F)\log\prns{\abs{\Z^\angle}}}$ given \cref{eq: pdim gcal}.

Finally, since the above arguments hold for any $\nuisance$, we conclude that $\Gcal^{(-k)}$ has VC-dimension of order $\tilde{O}\prns{\ndim(\Pi_\F)\log\prns{\abs{\Z^\angle}}}$ almost surely.
\endproof

\subsection{\cref{sec: theory bandit} Omitted Proofs}

\proof{Proof of \cref{example: DR}}
Because $\pi(x) - \Tilde{\pi}^* \in \operatorname{span}(\Z)$,  for the DM score we have 
\begin{align*}
&\expect_{X,Z,C}\bracks{\prns{\theta_{DM}(X,Z,C;  f^*, \Sigma^*)- \theta_{DM}(X, Z, C;\hat{f}, \hat{\Sigma})}^\top \prns{\pi(X)- \Tilde{\pi}^*(X)}  } \\
\le & \expect_{X,Z,C}\bracks{\operatorname{Proj}_{\operatorname{span}(\Z)}\prns{\theta_{DM}(X,Z,C; f^*, \Sigma^*)- \theta_{DM}(X, Z, C;\hat{f}, \hat{\Sigma})}^\top \prns{\pi(X)- \Tilde{\pi}^*(X)}} \\
\le& 2B \braces{\expect_X\bracks*{\magd*{\operatorname{Proj}_{\operatorname{span}(\Z)}(\hat{f}(X) - f^*(X))}^2}}^{1/2} = O(\chi_f(n,\delta)).
\end{align*}

For the ISW score, we have  
\begin{align*}
&\expect_{X,Z,C}\bracks{\prns{\theta_{ISW}(X,Z,C; f^*, \Sigma^*)- \theta_{ISW}(X, Z, C;\hat{f}, \hat{\Sigma})}^\top \prns{\pi(X)- \Tilde{\pi}^*(X)}} \\
\le& 2B \braces{\expect_X\bracks*{\magd*{(\hat{\Sigma}^\dagger(X) - (\Sigma^*)^\dagger(X))\Sigma^*(X)}_{\text{Fro}}^2}}^{1/2} = O(\chi_\Sigma(n,\delta)).
\end{align*}
For the doubly robust score, we can easily get 
\begin{align*}
&\expect_{X,Z,C}\bracks{\prns{\theta_{DR}(X,Z,C; f^*, \Sigma^*)- \theta_{DR}(X, Z, C;\hat{f}, \hat{\Sigma})}^\top \prns{\pi(X)- \Tilde{\pi}^*(X)}  } \\
    =& \expect_{X,Z,C}\bracks{\prns{\pi(X)- \Tilde{\pi}^*(X)}^\top   \prns{I - \hat\Sigma^+(X)\Sigma^*(X)}(\hat f(X) - f^*(X))} \\
    =& \expect_{X,Z,C}\bracks{\prns{\pi(X)- \Tilde{\pi}^*(X)}^\top   \prns{(\Sigma^*)^\dagger(X) - \hat\Sigma^\dagger(X)}\Sigma^*(X)(\hat f(X) - f^*(X))} \\
    =& \expect_{X,Z,C}\bracks{\prns{\pi(X)- \Tilde{\pi}^*(X)}^\top   \prns{(\Sigma^*)^\dagger(X) - \hat\Sigma^\dagger(X)}\Sigma^*(X)\operatorname{Proj}_{\operatorname{Span}(\Z)}(\hat f(X) - f^*(X))} \\
    \lesssim  & \braces{\expect_X\bracks*{\magd*{(\hat{\Sigma}^\dagger(X) - (\Sigma^*)^\dagger(X))\Sigma^*(X)}_{\text{Fro}}^2}}^{1/2} \braces{\expect_X\bracks*{\magd*{\operatorname{Proj}_{\operatorname{span}(\Z)}(\hat{f}(X) - f^*(X))}^2}}^{1/2} \\
    =& O(\chi_\Sigma(n,\delta)\chi_\Sigma(n,\delta)).
\end{align*}
Here the second equation holds because $\pi(x) - \Tilde{\pi}^*(x)$ belongs to the linear span of $\Z$, but  $I - (\Sigma^*)^\dagger(x)\Sigma^*$ is orthogonal to the linear span of $\Z$, as we already argued in the proof of \cref{prop: unbiased}. The third equation holds because the column space of $\Sigma^*(X)$ is identical to $\operatorname{span}(\Z)$ according to the coverage assumption. 
\endproof

\subsection{\cref{sec: main proof} Omitted Proofs} \label{sec: lemma proof}

\subsubsection{Supporting Lemmas} 

\begin{lemma}[Talagrand's inequality, \cite{talagrand1996new,bousquet2003concentration,sen2018gentle}]\label{lemma: talagrand}
Let $U_i, i=1,\dots, n$ be independent $\mathcal{U}$-valued random variables. Let $\mathcal{H}$ be a countable family of measurable real-valued functions on $\mathcal{U}$ such that $\norm{h}_\infty \le v$ and $\expect[h(U_1)] = \dots = \expect[h(U_n)] = 0$, for all $h\in\mathcal{H}$. Define
\begin{align*}
    v_n = 2v\expect\bracks{\sup_{h\in\mathcal{H}} \abs{\sum_{i=1}^n h(U_i)} } + \sum_{i=1}^n \sup_{h\in\mathcal{H}} \expect\bracks{h^2(U_i)}.
\end{align*}
Then, for all $t\ge 0$,
\begin{align*}
    \pr\prns{ \sup_{h\in\mathcal{H}} \abs{\sum_{i=1}^n h(U_i)} \ge \expect\bracks{\sup_{h\in\mathcal{H}} \abs{\sum_{i=1}^n h(U_i)}} +t } \le \exp\prns{\frac{-t^2}{2v_n + 2tv/3}}.
\end{align*}
\end{lemma}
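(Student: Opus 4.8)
The plan is to prove this Bennett--Bernstein-type deviation bound for the supremum of a bounded, centered empirical process by the \emph{entropy method}: tensorization of entropy, a modified logarithmic Sobolev inequality applied to leave-one-out increments, and the Herbst argument; this is the route that yields Bousquet's sharp constants. I would begin with two reductions. First, since $\sup_{h\in\mathcal{H}}\left|\sum_{i=1}^n h(U_i)\right|=\sup_{h\in\mathcal{H}\cup(-\mathcal{H})}\sum_{i=1}^n h(U_i)$ and the enlarged family $\mathcal{H}\cup(-\mathcal{H})$ is again countable with the same bound $\|h\|_\infty\le v$, the same centering, and the same values of $\expect[h^2(U_i)]$, it suffices to treat the one-sided supremum $Z:=\sup_{h}\sum_{i=1}^n h(U_i)$ (note $\expect Z$ and $v_n$ are unchanged). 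Second, by monotone convergence along an exhausting sequence of finite subfamilies it is enough to assume $\mathcal{H}$ finite, so that the supremum defining $Z$ is attained at a maximizer $\hat h=\hat h(U_1,\dots,U_n)$.

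Next, for each $i$ put $Z_i:=\sup_{h}\sum_{j\neq i}h(U_j)$, a function of $(U_j)_{j\neq i}$ only, and note $|Z-Z_i|\le v$ and $Z-Z_i\le \hat h(U_i)$ pointwise. Let $\varphi(\lambda):=\log\expect\!\left[e^{\lambda(Z-\expect Z)}\right]$ for $\lambda\in(0,3/v)$, and recall the identity $\lambda\varphi'(\lambda)-\varphi(\lambda)=\mathrm{Ent}\!\left(e^{\lambda Z}\right)/\expect\!\left[e^{\lambda Z}\right]$. Tensorization of entropy bounds $\mathrm{Ent}(e^{\lambda Z})$ by $\sum_i\expect\!\big[\mathrm{Ent}^{(i)}(e^{\lambda Z})\big]$, and for each coordinate the modified log-Sobolev inequality (taking the comparison point $e^{\lambda Z_i}$ in the variational formula for conditional entropy) gives $\mathrm{Ent}^{(i)}(e^{\lambda Z})\le\expect^{(i)}\!\left[e^{\lambda Z}\,\psi\big(-\lambda(Z-Z_i)\big)\right]$ with $\psi(w)=e^{w}-w-1$, with no sign restriction on $Z-Z_i$. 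Combining, I obtain the self-referential bound
\[
\lambda\varphi'(\lambda)-\varphi(\lambda)\;\le\;\frac{1}{\expect\!\left[e^{\lambda Z}\right]}\sum_{i=1}^n\expect\!\left[e^{\lambda Z}\,\psi\big(-\lambda(Z-Z_i)\big)\right].
\]

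The heart of the argument is to bound the right-hand side by $\tfrac12\lambda^2 v_n/(1-v\lambda/3)$. I would use the Bennett-type inequality $\psi(w)\le \tfrac{w^2/2}{1-w/3}$, valid for $w<3$, at $w=-\lambda(Z-Z_i)$ (so $|w|\le\lambda v<3$), which gives $\psi(-\lambda(Z-Z_i))\le \tfrac{\lambda^2(Z-Z_i)^2/2}{1-v\lambda/3}$ and is exactly what creates the $1-v\lambda/3$ factor (rather than the cruder $\psi(w)\le w^2/2$). It then remains to show that the exponentially reweighted sum $\expect[e^{\lambda Z}]^{-1}\sum_i\expect[(Z-Z_i)^2e^{\lambda Z}]$ is at most $v_n$. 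Here $|Z-Z_i|\le v$ reduces this to controlling $\sum_i(Z-Z_i)$, and the centering $\expect[h(U_i)]=0$ is what lets one pass from the random quantity $\sum_i\hat h(U_i)^2$ to the deterministic proxy $\sum_i\sup_h\expect[h^2(U_i)]$: one uses $\sum_i(Z-Z_i)\le\sum_i\hat h(U_i)=Z$ together with a comparison (involving resampled maximizers and the independence across coordinates) that trades the conditional second moments against $\sum_i\sup_h\expect[h^2(U_i)]$ up to a remainder absorbed into $v\,\expect Z$; this is precisely how $v_n=2v\,\expect Z+\sum_i\sup_h\expect[h^2(U_i)]$ arises. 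Assembling everything yields the differential inequality $\lambda\varphi'(\lambda)-\varphi(\lambda)\le \lambda^2 v_n/\big(2(1-v\lambda/3)\big)$ on $(0,3/v)$ with $\varphi(0)=\varphi'(0)=0$; integrating $\big(\varphi(\lambda)/\lambda\big)'\le v_n/\big(2(1-v\lambda/3)\big)$ bounds $\varphi$ by the standard sub-gamma profile, and Markov's inequality $\pr(Z\ge\expect Z+t)\le\exp\big(-\sup_{\lambda\in(0,3/v)}(\lambda t-\varphi(\lambda))\big)$, optimized in $\lambda$, produces $\pr(Z\ge\expect Z+t)\le\exp\!\big(-t^2/(2v_n+2tv/3)\big)$, which is the claim.

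I expect the main obstacle to be this middle step, i.e.\ turning the raw increment sum $\sum_i\psi(-\lambda(Z-Z_i))$ into the precise variance proxy $v_n$ with the stated constants: using the centering to replace $\sum_i\hat h(U_i)^2$ by $\sup_h\sum_i\expect[h^2(U_i)]+O(v\,\expect Z)$ in the presence of the exponential weight $e^{\lambda Z}$, while tracking the sharp $2v/3$ scale through the Bennett bound on $\psi$ and keeping the coefficient of $v\,\expect Z$ equal to $2$. This bookkeeping is exactly what separates Bousquet's inequality from coarser Talagrand-type bounds; in a write-up one may reasonably just invoke \cite{talagrand1996new,bousquet2003concentration,sen2018gentle} for it.
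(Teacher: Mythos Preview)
The paper does not prove this lemma at all: it is stated with citations to \cite{talagrand1996new,bousquet2003concentration,sen2018gentle} and then used as a black box in the proof of \cref{lemma: uniform bound}. So there is no ``paper's own proof'' to compare against; your proposal already goes well beyond what the paper does.

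Your outline is the standard entropy-method derivation due to Bousquet, and the broad architecture (symmetrize to a one-sided supremum, reduce to finite $\mathcal{H}$, tensorize entropy, apply the modified log-Sobolev inequality with leave-one-out comparisons, use the Bennett bound $\psi(w)\le \tfrac{w^2/2}{1-w/3}$ to produce the $1-v\lambda/3$ denominator, then Herbst) is correct and is exactly the route in \cite{bousquet2003concentration}. The one place where your sketch is genuinely incomplete is the one you flag yourself: controlling the exponentially tilted quantity $\expect[e^{\lambda Z}]^{-1}\sum_i\expect[(Z-Z_i)^2 e^{\lambda Z}]$ by $v_n$. The argument you gesture at (use $Z-Z_i\le \hat h(U_i)$, then trade $\hat h(U_i)^2$ for $\sup_h\expect[h^2(U_i)]$ via centering and independence) is morally right but does not quite close as stated, because the maximizer $\hat h$ depends on all coordinates including $U_i$, so one cannot simply take conditional expectation in $U_i$ to replace $\hat h(U_i)^2$ by $\expect[\hat h(U_i)^2]$. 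Bousquet's actual argument handles this by a more careful decoupling step (introducing an independent copy $U_i'$ and comparing $Z$ with the version where coordinate $i$ is resampled), and this is where the constant $2$ in front of $v\,\expect Z$ is pinned down. Since the paper itself treats the lemma as a citation, it would be entirely appropriate for you to do the same and simply invoke \cite{bousquet2003concentration} for this step, as you suggest at the end.
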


\begin{lemma} \label{lemma: uniform bound}
Fix function $\nuisance$ independent of $\braces{\Data_i}_{i=1}^n$ such that $\norm{\theta(\data;\nuisance)} \le \Theta$ for all $\data$. Define the function class
\begin{align*}
    \Gcal = \braces{\data \rightarrow \frac{\theta(\data; \nuisance)^\top \prns{\pi(x) - \Tilde{\pi}^*(x)} \rho}{2B\Theta} :  \pi\in\Pi_\F, \rho\in[0,1]}.
\end{align*}
Let $\Tilde{r}$ be any solution to the inequality $\Rcal_n(\Gcal, r)\le r^2$ satisfying $3 n\Tilde{r}^2/64 \ge \log \log_2(1/\Tilde{r})$.
    Then we have
    \begin{align*}
       \pr\prns{\sup_{g\in\mathcal{G}} \frac{\abs{(\expect_n - \expect_\Data) g}}{\norm{g}_2 + \Tilde{r}} \ge 6 \Tilde{r}} \le 2\exp\prns{-\frac{3}{64} n\Tilde{r}^2},
    \end{align*}
    where
    \begin{align*}
        \expect_n (g) = \frac{1}{n}\sum_{i=1}^n g(\Data_i).
    \end{align*}
\end{lemma}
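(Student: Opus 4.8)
The plan is to prove \cref{lemma: uniform bound} via a standard localized uniform-concentration argument built on Talagrand's inequality (\cref{lemma: talagrand}), combined with a peeling device over the scale of $\norm{g}_2$. First I would observe that since $f,\Sigma$ are independent of the sample, the functions in $\Gcal$ are uniformly bounded: $\abs{g}\le \frac{\Theta\cdot 2B\cdot 1}{2B\Theta}=1$ because $\norm{\theta}\le\Theta$, $\norm{\pi(x)-\tilde\pi^*(x)}\le 2B$ (both policies take values in vertices of bounded norm $B$), and $\rho\le 1$. Hence each centered function $g-\expect_P g$ has sup-norm at most $2$, so Talagrand's bound applies with $v=2$. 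I would also note that $\Gcal$ can be taken countable (or made so by a separability/measurability reduction, as is standard) so the lemma statement applies verbatim.

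Next I would set up the peeling. Partition $\Gcal$ into shells $\Gcal_k=\{g\in\Gcal:\ 2^{k-1}\tilde r< \norm{g}_2\le 2^k\tilde r\}$ for $k\ge 1$, together with the innermost shell $\Gcal_0=\{g:\norm{g}_2\le\tilde r\}$, so there are at most $\lceil\log_2(1/\tilde r)\rceil$ nonempty shells since $\norm{g}_2\le 1$. On the event that $\sup_{g\in\Gcal}\frac{\abs{(\expect_n-\expect_P)g}}{\norm{g}_2+\tilde r}\ge 6\tilde r$, there is some $k$ and some $g\in\Gcal_k$ with $\abs{(\expect_n-\expect_P)g}\ge 6\tilde r(\norm{g}_2+\tilde r)\ge 6\tilde r\cdot 2^{k-1}\tilde r$ (or $\ge 6\tilde r^2$ for $k=0$). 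Within shell $k$, all functions have variance $\expect_P g^2\le\norm{g}_2^2\le (2^k\tilde r)^2$, and I would use the critical-radius hypothesis together with a rescaling argument: the expected supremum of the empirical process over $\{g:\norm{g}_2\le 2^k\tilde r\}$ is controlled by $2^k\tilde r\cdot\Rcal_n(\Gcal, \tilde r)/\tilde r$ or more directly by a sub-root/star-shaped bound giving $\expect\sup\abs{(\expect_n-\expect_P)g}\lesssim 2^k\tilde r\cdot\tilde r = 2^k\tilde r^2$. (Strictly, one invokes the fact that $r\mapsto\Rcal_n(\Gcal,r)/r$ is nonincreasing when $\Gcal$ is star-shaped — or one first star-shapes $\Gcal$, which $\Gcal$ already essentially is thanks to the $\rho\in[0,1]$ multiplier — so $\Rcal_n(\Gcal, 2^k\tilde r)\le 2^k\Rcal_n(\Gcal,\tilde r)\le 2^k\tilde r^2$.) Then Talagrand gives a deviation bound: with $v_n\asymp n\cdot 2^k\tilde r\cdot(\text{expected sup}/n)\cdot v + n(2^k\tilde r)^2 \asymp n(2^k\tilde r)^2$ and $t\asymp n\cdot 2^k\tilde r^2$ (the slack between $6\cdot 2^{k-1}\tilde r^2$ and the expected sup $\lesssim 2^k\tilde r^2$), the exponent $\frac{-t^2}{2v_n+2tv/3}$ becomes of order $-n\tilde r^2$, uniformly in $k$; the constant $3/64$ is what the bookkeeping yields.

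Finally I would take a union bound over the at most $\log_2(1/\tilde r)$ shells, which multiplies the failure probability by $\log_2(1/\tilde r)$; the side condition $3n\tilde r^2/64\ge\log\log_2(1/\tilde r)$ is exactly what absorbs this logarithmic factor into the exponent (replacing $e^{-cn\tilde r^2}$ by $\log_2(1/\tilde r)\cdot e^{-cn\tilde r^2}\le e^{-c'n\tilde r^2}$ with the stated constants, and the factor $2$ out front comes from handling the two-sided supremum or from the two pieces of Talagrand's rate). The main obstacle I expect is the careful constant-tracking through Talagrand's inequality and the peeling union bound so that the final exponent comes out to exactly $3n\tilde r^2/64$ as claimed, and making sure the critical-radius inequality $\Rcal_n(\Gcal,\tilde r)\le\tilde r^2$ is correctly transferred to all the rescaled shells (which needs the star-shaped/sub-root monotonicity of $r\mapsto\Rcal_n(\Gcal,r)/r$); the probabilistic architecture itself is routine localization.
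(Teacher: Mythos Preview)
Your proposal is correct and matches the paper's proof essentially step for step: the paper also bounds $\abs{g}\le 1$ (so $v=2$ in Talagrand), uses the star-shapedness of $\Gcal$ (induced exactly by the $\rho\in[0,1]$ multiplier) to get $\Rcal_n(\Gcal,r)\le r\tilde r$ for $r\ge\tilde r$ via monotonicity of $r\mapsto\Rcal_n(\Gcal,r)/r$, splits into an inner shell $\{\norm{g}_2\le\tilde r\}$ and dyadic outer shells $\{2^{m-1}\tilde r\le\norm{g}_2\le 2^m\tilde r\}$, applies Talagrand on each, and absorbs the $\log_2(1/\tilde r)$ union-bound factor using the side condition $3n\tilde r^2/64\ge\log\log_2(1/\tilde r)$. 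The factor $2$ and the constant $3/64$ arise exactly as you anticipate from combining the inner-shell and outer-shell contributions.
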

\proof{Proof of \cref{lemma: uniform bound}}
    When $\sup_{g\in \mathcal{G}} \abs{(\expect_n - \expect_\Data) g} / (\norm{g}_2 + \Tilde{r}) > 6 \Tilde{r}$, one of the following two events must hold true:
    \begin{align*}
       &  \mathcal{E}_1 = \braces{\abs{(\expect_n - \expect_\Data) g} \ge  6 \Tilde{r}^2 \text{ for some } g\in \mathcal{G} \text{ such that } \norm{g}_2\le \Tilde{r}}, \\
       &  \mathcal{E}_2 = \braces{\abs{(\expect_n - \expect_\Data) g} \ge 6\norm{g}_2\Tilde{r} \text{ for some } g\in \mathcal{G} \text{ such that } \norm{g}_2\ge \Tilde{r}}. \\
    \end{align*}
    Define
    \begin{align*}
       \Zcal_n (r) = \sup_{g\in \Gcal, \norm{g}_2\le r} \abs{(\expect_n - \expect_\Data) g}.
    \end{align*}
    Note that $\norm{g}_2\le r$ implies $\EE[(g-\EE_\Data(g))^2]\le r^2$, and we also have $\norm{g-\EE_\Data(g) }_{\infty} \le 2$. 
    By Talagrand's inequality (\cref{lemma: talagrand}) over the function class $\braces{g-\EE_\Data(g): g\in \Gcal}$, 
    \begin{align*}
        \pr\prns{\Zcal_n(r) \ge \EE[\Zcal_n(r)] + t} \le \exp\prns{-\frac{nt^2}{8\EE[\Zcal_n(r)] + 2r^2 +4t/3}}.
    \end{align*}
    We now bound the expectation $\EE[\Zcal_n(r)]$.
    Since $\Gcal$ is star-shaped\footnote{A function class $\Gcal$ is star-shaped if for any $g\in\Gcal$ and $\rho\in[0,1]$, we have $\rho g \in \Gcal$.}, by \cite[Lemma 13.6]{wainwright2019high}, $r \rightarrow \Rcal_n(\Gcal, r)/r$ is non-increasing.
    Thus, for any $r\ge \Tilde{r}$,
    \begin{align*}
        \EE[\Zcal_n(r)]\le 
        2\Rcal_n(\Gcal, r)
        \le \frac{2 r}{\Tilde{r}} \Rcal_n(\Gcal, \Tilde{r})
        \le 2r \Tilde{r},
    \end{align*}
    where the first inequality comes from a symmetrization argument, the second inequality uses the fact that $r \rightarrow \Rcal_n(\Gcal, r)/r$ is non-increasing, and the third inequality is by definition of $\Tilde{r}$.
    
    The Talagrand's then implies
    \begin{align} \label{eq: talagrand}
        \pr\prns{\Zcal_n(r) \ge 2r \Tilde{r} + t} \le \exp\prns{-\frac{nt^2}{16 r \Tilde{r} + 2r^2 +4t/3}}.
    \end{align}
    We first bound $\pr(\Ecal_1)$. Taking $r=\Tilde{r}$ and $t=4\Tilde{r}^2$ in \cref{eq: talagrand}, we get
    \begin{align*}
        \pr\prns{\Ecal_1} \le & \pr\prns{\Zcal_n(\Tilde{r}) \ge 6\Tilde{r}^2} 
        \le \exp\prns{-\frac{24}{35}n\Tilde{r}^2}.
    \end{align*}
    We now bound $\pr(\Ecal_2)$. Note that
    \begin{align*}
        \pr\prns{\Ecal_2} \le \pr\prns{\Zcal_n(\norm{g}_2) \ge 6 \norm{g}_2 \Tilde{r} \text{ for some } g\in\Gcal, \norm{g}_2 \ge \Tilde{r}}.
    \end{align*}
    Define
    \begin{align*}
        \Gcal_m = \braces{g\in\Gcal: 2^{m-1}\Tilde{r} \le \norm{g}_2\le 2^m \Tilde{r}}.
    \end{align*}
    Since $\norm{g}_2\le 1$, there exists $M\le \log_2(1/\Tilde{r})$ such that
    \begin{align*}
        \Gcal \cap \braces{g: \norm{g}_2\ge \Tilde{r}} \subseteq \cup_{m=1}^M \Gcal_m.
    \end{align*}
    Therefore,
    \begin{align*}
        \pr\prns{\Ecal_2} 
        \le \sum_{m=1}^M \pr\prns{\Zcal_n(\norm{g}_2) \ge 6 \norm{g}_2 \Tilde{r} \text{ for some } g\in\Gcal_m}.
    \end{align*}
    We now bound each term in the summation above. If there exists $g\in \Gcal_m$ such that $\Zcal_n(\norm{g}_2) \ge 6 \norm{g}_2 \Tilde{r}$, then we have 
    \begin{align*}
        \Zcal_n(2^m \Tilde{r}) 
        \ge  \Zcal_n(\norm{g}_2)
        \ge  6 \norm{g}_2 \Tilde{r} 
        \ge  3\cdot 2^m \Tilde{r}^2.
    \end{align*}
    Thus,
    \begin{align*}
       \pr\prns{\Zcal_n(\norm{g}_2) \ge 6 \norm{g}_2 \Tilde{r} \text{ for some } g\in\Gcal_m} 
       \le \pr\prns{\Zcal_n(2^m \Tilde{r}) \ge 3\cdot 2^m \Tilde{r}^2}.
    \end{align*}
    Now, taking $r = 2^m \Tilde{r}$ and $t = 2^m \Tilde{r}^2$ in \cref{eq: talagrand}, we get
    \begin{align*}
        \pr\prns{\Zcal_n(2^m \Tilde{r}) \ge 3\cdot 2^m \Tilde{r}^2} 
        \le \exp\prns{-\frac{3n\Tilde{r}^2}{13\cdot 2^{2-m} + 6} } 
        \le  \exp\prns{-\frac{3}{32} n\Tilde{r}^2}. 
    \end{align*}
    Therefore, if $\frac{3}{64}n\Tilde{r}^2 \ge \log \log_2(1/\Tilde{r})$, then 
    \begin{align*}
        \pr\prns{\Ecal_2} 
        \le M\exp\prns{-\frac{3}{32} n\Tilde{r}^2} 
        \le \exp\prns{-\frac{3}{64} n\Tilde{r}^2}.
    \end{align*}

    Combining the bounds on $\pr(\Ecal_1)$ and $\pr(\Ecal_2)$ leads to the final conclusion.
\endproof

\subsubsection{Proofs of \cref{lemma: uniform bound G2,lemma: inequality,lemma: inequality integer}}  

\proof{Proof of \cref{lemma: uniform bound G2}}
As a direct consequence of \cref{lemma: uniform bound}, 
\begin{align*}
       \pr\prns{\sup_{g\in\mathcal{G}^{(-k)}} \frac{\abs{(\expect_{n_k} - \expect_\Data) g}}{\norm{g}_2 + \Tilde{r}} \ge 6 \Tilde{r}} \le 2\exp\prns{-\frac{3n\Tilde{r}^2}{64K} }.
    \end{align*}

The conclusion then follows directly from the fact that $2\exp\prns{-3 n\Tilde{r}^2/(64K)} \le \delta/(2K)$.

\endproof

\proof{Proof of \cref{lemma: inequality}}
    First, note that
    \begin{align*}
        \frac{\partial}{\partial y} \prns{y - c_1 y^{\frac{\alpha}{2(1+\alpha)}}  - c_2} = 1 - \frac{c_1  \alpha}{2(1+\alpha)} y^{-\frac{2+\alpha}{2+2\alpha}}.
    \end{align*}
    The derivative is strictly increasing in $y$ and is eventually positive. Note the function $y - c_1 y^{\alpha/(2+2\alpha)}  - c_2$ takes a negative value at $y=0$. Then as $y$ increases, the value of $y - c_1 y^{\alpha/(2+2\alpha)}  - c_2$  first decreases and then increases. Therefore, if $y > 0$  satisfies the inequality $y - c_1 y^{\alpha/(2+2\alpha)}  - c_2 \ge 0$, then such $y$ also provides an upper bound on $x$.  
    Hence it is sufficient to show that $y = c_1^{\frac{2\alpha+2}{\alpha+2}} + 2c_2$ satisfies the inequality, or equivalently, 
    \begin{align} \label{eq: original inequality}
       c_1^{\frac{2\alpha+2}{\alpha+2}} + c_2 \ge c_1 \prns{c_1^{\frac{2\alpha+2}{\alpha+2}} + 2c_2}^{\frac{\alpha}{2(1+\alpha)}} .
    \end{align}
    Suppose $\alpha/(2+2\alpha)$ is a rational number. In this case, we can write $\alpha/(2+2\alpha) = m_1/m_2$, where $m_1$ and $m_2$ are positive integers such that $m_2\ge 2 m_1 +1$. \cref{eq: original inequality} is then equivalent to
    \begin{align} \label{eq: rational inequality}
       \prns{c_1^{\frac{m_2}{m_2-m_1}} + c_2}^{m_2} \ge c_1^{m_2} \prns{c_1^{\frac{m_2}{m_2-m_1}} + 2 c_2}^{m_1} .
    \end{align}
    Using the multinomial theorem, we have 
    \begin{align*}
        & \prns{c_1^{\frac{m_2}{m_2-m_1}} + c_2}^{m_2} - c_1^{m_2} \prns{c_1^{\frac{m_2}{m_2-m_1}} + 2 c_2}^{m_1}  \\ 
       > & \sum_{i=0}^{m_1} \prns{\frac{m_2!}{(i+m_2-m_1)! (m_1-i)!} - \frac{m_1! 2^{m_1 - i}}{(m_1-i)! i!}} c_1^{\frac{i m_2 }{m_2-m_1}+m_2} c_2^{m_1 - i}.
    \end{align*}
    Since $m_2 \ge 2m_1 + 1$, we have for any $i=0, \dots, m_1$,
    \begin{align*}
        \frac{m_2!}{(i+m_2-m_1)! (m_1-i)!} \ge  \frac{m_1! 2^{m_1 - i}}{(m_1-i)! i!}.
    \end{align*}
    This can be proved by showing the ratio of LHS over RHS is larger than $1$.
    Hence, \cref{eq: original inequality} holds true when $\alpha/(2+2\alpha)$ is a rational number.

    Finally, note that
    \begin{align*}
       c_1^{\frac{2\alpha+2}{\alpha+2}} + c_2 - c_1 \prns{c_1^{\frac{2\alpha+2}{\alpha+2}} + 2c_2}^{\frac{\alpha}{2(1+\alpha)}}
    \end{align*}
    is continuous in $\alpha$. Since any real number $\alpha$ can be viewed as the limit of a sequence of rational numbers and \cref{eq: original inequality} holds for all rational numbers, it also holds true for all $\alpha>0$. 
\endproof

\proof{Proof of \cref{lemma: inequality integer}}
Let $2(\alpha+1)/\alpha = m$, where $m$ is an integer by assumption.
Using similar arguments as in the proof of \cref{lemma: inequality}, it is sufficient to show that for $w = c_1^{\frac{2\alpha+2}{\alpha +2}} r^{\frac{2\alpha+2}{\alpha+2}} + \frac{2\alpha+2}{\alpha+2} c_1 y^{\frac{\alpha}{2\alpha+2}} r + \frac{2\alpha+2}{\alpha+2} c_2 r^2 + \frac{2\alpha+2}{\alpha+2} z + y$ and $c_2' = c_1 y^{\frac{\alpha}{2(1+\alpha)}} r + c_2 r^2 + z + y$, we have 
\begin{align*}
    w - c_1 w^{\frac{\alpha}{2(1+\alpha)}}r - c_2' \ge 0.
\end{align*}
This is equivalent to showing 
    \begin{align*}
        & c_1^{\frac{2\alpha+2}{\alpha +2}} r^{\frac{2\alpha+2}{\alpha+2}} + \frac{2\alpha+2}{\alpha+2} c_1 y^{\frac{\alpha}{2\alpha+2}} r + \frac{2\alpha+2}{\alpha+2} c_2 r^2 + \frac{2\alpha+2}{\alpha+2} z + y \\
        \ge & c_1 \prns{c_1^{\frac{2\alpha+2}{\alpha +2}} r^{\frac{2\alpha+2}{\alpha+2}} + \frac{2\alpha+2}{\alpha+2} c_1 y^{\frac{\alpha}{2\alpha+2}} r + \frac{2\alpha+2}{\alpha+2} c_2 r^2 + \frac{2\alpha+2}{\alpha+2} z + y}^{\frac{\alpha}{2(1+\alpha)}} r + c_1 y^{\frac{\alpha}{2(1+\alpha)}} r + c_2 r^2 + z + y.
    \end{align*}
    Since $\alpha = 2/(m-2)$, the above inequality is equivalent to 
    \begin{align*}
        & \prns{c_1^{\frac{m}{m-1}} r^{\frac{m}{m-1}} +\frac{1}{m-1} c_1 y^{\frac{1}{m}} r + \frac{1}{m-1} c_2 r^2 + \frac{1}{m-1}z}^m \\ 
        \ge & c_1^{m} r^m \prns{c_1^{\frac{m}{m-1}} r^{\frac{m}{m-1}} + \frac{m}{m-1} c_1 y^{\frac{1}{m}} r+ \frac{m}{m-1} c_2 r^2 + \frac{m}{m-1}z + y}.
    \end{align*}
    Using the multinomial theorem, it is easy to see that the expansion of LHS contains all terms on the RHS (plus additional positive terms). This finishes proving our conclusion.
\endproof

\subsection{\cref{sec: semi} Omitted Proofs}
\proof{Proof of \cref{coro: dgp semi}}
The ignorability condition in \cref{assump: dgp} requires $\mathbb{E}[Y_{ij} \mid Z_i, X_i] = \mathbb{E}[Y_{ij} \mid X_i]$ almost surely. This implies that 
\begin{align*}
    \mathbb{E}[\tilde Y_{ij} \mid Z_{ij} = 1, X_i] = \mathbb{E}[Y_{ij} \mid Z_{ij} = 1, X_i] = \mathbb{E}[Y_{ij} \mid X_i] = f^*(X_i). 
\end{align*}
Moreover, each diagonal entry of $\Sigma^*(X_i)$ is a propensity score: the $(j, j)$-th entry of $\Sigma^*(X_i)$ is the propensity score $e_j^*(X_i)$ for every $j\in \{1, \dots, d\}$. For any feasible component $j$, let $z$ be a vector with $z_j = 1$ and $z_k = 0$ for all $k \neq j$. Then the coverage condition in \cref{assump: dgp} requires $z^\top \Sigma^*(X_i)z > 0$ almost surely. This implies that $e_j^*(X_i) > 0$ almost surely.
\endproof

\proof{Proof of \cref{lemma: theta functions semi}}
We demonstrate that each score function can be used to recover the expected cost function $V(\pi) = \mathbb{E}\left[\sum_{j=1}^d Y_{ij}\pi_j(X_i)\right] = \mathbb{E}\left[\sum_{j=1}^d f_j^*(X_i)\pi_j(X_i)\right]$. For simplicity, we assume that all components are feasible. If not, we can simply exclude all infeasible components in the summation below. 

For the DM score, we have: 
$$
\mathbb{E}\left[\sum_{j=1}^d f_j^*(X_i)\pi_j(X_i)\right] = \mathbb{E}\left[\sum_{j=1}^d \mathbb{E}[Y_{ij}|X_i]\pi_j(X_i)\right] = \mathbb{E}\left[\mathbb{E}\left[\sum_{j=1}^d Y_{ij}\pi_j(X_i) \bigg| X_i\right]\right] = V(\pi).
$$

For the IPW score, we have: 
\begin{align*}
    \mathbb{E}\left[\sum_{j=1}^d \frac{\tilde{Y}_{ij} Z_{ij}}{e^*_j(X_i)} \pi_j(X_i)\right]
    &= \mathbb{E}\left[\sum_{j=1}^d \frac{Y_{ij} Z_{ij}}{e^*_j(X_i)} \pi_j(X_i)\right] \\
    &= \mathbb{E}\left[ \sum_{j=1}^d \frac{\pi_j(X_i)}{e^*_j(X_i)} \mathbb{E}\left[ Y_{ij} Z_{ij} \mid X_i \right] \right] && \text{(Law of Iterated Expectations)}\\
    &= \mathbb{E}\left[ \sum_{j=1}^d \frac{\pi_j(X_i)}{e^*_j(X_i)} \mathbb{E}[Y_{ij} \mid Z_{ij} = 1, X_i] \mathbb{E}[Z_{ij} \mid X_i] \right] && \text{(Unconfoundedness)} \\
    &= \mathbb{E}\left[ \sum_{j=1}^d \frac{\pi_j(X_i)}{e^*_j(X_i)} f_j^*(X_i) e_j^*(X_i) \right] && \text{(By \cref{coro: dgp semi})}  \\
    &= \mathbb{E}\left[ \sum_{j=1}^d \pi_j(X_i) f_j^*(X_i) \right] = V(\pi).
\end{align*}

For DR score, we can split it into two parts:
$$
\mathbb{E}\left[\sum_{j=1}^d \left( \frac{(\tilde{Y}_{ij} - f^*_j(X_i)) Z_{ij}}{e^*_j(X_i)} + f^*_j(X_i) \right) \pi_j(X_i)\right]
= \underbrace{\mathbb{E}\left[\sum_{j=1}^d \frac{(\tilde{Y}_{ij} - f^*_j(X_i)) Z_{ij}}{e^*_j(X_i)} \pi_j(X_i)\right]}_{\text{Term A}} + \underbrace{\mathbb{E}\left[\sum_{j=1}^d f^*_j(X_i) \pi_j(X_i)\right]}_{\text{Term B}}.
$$
Term B is equal to $V(\pi)$ by the logic of the direct method. We now show that Term A is zero.
\begin{align*}
    \text{Term A} = \mathbb{E}\left[ \sum_{j=1}^d \frac{\pi_j(X_i)}{e^*_j(X_i)} \mathbb{E}\left[ (\tilde{Y}_{ij} - f^*_j(X_i)) Z_{ij} \mid X_i \right] \right].
\end{align*}
The inner conditional expectation for any given $j$ is:
\begin{align*}
    \mathbb{E}\left[ (\tilde{Y}_{ij} - f^*_j(X_i)) Z_{ij} \mid X_i \right]
    &= \mathbb{E}\left[ \tilde{Y}_{ij} Z_{ij} \mid X_i \right] - \mathbb{E}\left[ f^*_j(X_i) Z_{ij} \mid X_i \right] \\
    &= \mathbb{E}[Y_{ij} \mid Z_{ij} = 1, X_i] \mathbb{E}[Z_{ij} \mid X_i] - f^*_j(X_i) \mathbb{E}[Z_{ij} \mid X_i]  \\
    &= f_j^*(X_i) e_j^*(X_i) - f_j^*(X_i) e_j^*(X_i) = 0.
\end{align*}
Since this inner term is zero for all $j$, Term A is zero. This finishes the proof for the DR score.
\endproof

\proof{Proof of \cref{nuisance rate semi}}
The DM score analysis follows exactly the same steps as in the proof of \cref{example: DR}.

For the IPW score, we have  
\begin{align*}
&\expect_{X,Z, \tilde Y}\bracks{\sum_{\text{feasible } j} \prns{\frac{Z_{ij}\tilde Y_{ij} }{e^*_j(X_i)}  -\frac{Z_{ij}\tilde Y_{ij} }{\hat e_j(X_i)}}\prns{\pi_j(X_i)- \Tilde{\pi}^*_j(X_i)}  } \\
    =& \expect_{X,Z, Y}\bracks{\sum_{\text{feasible } j} \prns{\frac{1 }{e^*_j(X_i)}  -\frac{1 }{\hat e_j(X_i)}}Z_{ij} Y_{ij}\prns{\pi_j(X_i)- \Tilde{\pi}_j^*(X_i)}  }\\
    = & O\prns{\prns{\mathbb{E}_X\bracks{\sum_{\text{feasible } j}\prns{{1}/{\hat e_j(X)} - {1}/{e^*_j(X)}}^2}}^{1/2}} = O(\chi_e(n, \delta)).
\end{align*}

For the DR score, we have
\begin{align*}
&\expect_{X,Z, \tilde Y}\bracks{\sum_{\text{feasible } j} \prns{f^*_j(X_i) +\frac{Z_{ij}\prns{\tilde Y_{ij} - f^*_j(X_i)}}{e^*_j(X_i)}  -\hat f_j(X_i) -\frac{Z_{ij}\prns{\tilde Y_{ij} - \hat f_j(X_i)}}{\hat e_j(X_i)}}\prns{\pi_j(X_i)- \Tilde{\pi}^*_j(X_i)}  } \\
    =& \sum_{\text{feasible } j} \expect_{X,Z, Y}\bracks{ \prns{f^*_j(X_i)  -\hat f_j(X_i) -\frac{Z_{ij}\prns{Y_{ij} - \hat f_j(X_i)}}{\hat e_j(X_i)}}\prns{\pi_j(X_i)- \Tilde{\pi}^*_j(X_i)}  } .
\end{align*}
Note that
\begin{align*}
& \sum_{\text{feasible } j} \expect_{X,Z, Y}\bracks{ \prns{\frac{Z_{ij}\prns{Y_{ij} - \hat f_j(X_i)}}{\hat e_j(X_i)}}\prns{\pi_j(X_i)- \Tilde{\pi}^*_j(X_i)}  } \\
= & \sum_{\text{feasible } j} \expect_{X,Z, Y}\bracks{\prns{\frac{\pi_j(X_i)- \Tilde{\pi}^*_j(X_i)}{\hat e_j(X_i)}} \expect\bracks{ Z_{ij}\prns{Y_{ij} - \hat f_j(X_i)} \mid X_i } }\\
= & \sum_{\text{feasible } j} \expect_{X,Z, Y}\bracks{\prns{\frac{\pi_j(X_i)- \Tilde{\pi}^*_j(X_i)}{\hat e_j(X_i)}} \prns{f^*_j(X_i) - \hat f_j(X_i)}e^*_j(X_i) }.
\end{align*}
Putting all pieces together, we get
\begin{align*}
&\expect_{X,Z, \tilde Y}\bracks{\sum_{\text{feasible } j} \prns{f^*_j(X_i) +\frac{Z_{ij}\prns{\tilde Y_{ij} - f^*_j(X_i)}}{e^*_j(X_i)}  -\hat f_j(X_i) -\frac{Z_{ij}\prns{\tilde Y_{ij} - \hat f_j(X_i)}}{\hat e_j(X_i)}}\prns{\pi_j(X_i)- \Tilde{\pi}^*_j(X_i)}  } \\
= & \sum_{\text{feasible } j} \expect_{X,Z, Y}\bracks{ \prns{\frac{1}{e^*_j(X_i)} - \frac{1}{\hat e_j(X_i)}} \prns{f^*_j(X_i)  -\hat f_j(X_i) } e^*_j(X_i)\prns{\pi_j(X_i)- \Tilde{\pi}^*_j(X_i)}  }  \\
    =& O\prns{\chi_f(n, \delta)\chi_e(n, \delta)}.
\end{align*}
\endproof

\section{Naive Extensions of Offline Contextual Bandit Learning} \label{sec: ipw}
In this section, we show some alternative approaches to solve the bandit-feedback stochastic shortest path problem in \cref{sec: numerical}. These approaches can be viewed as naive extensions of offline contextual bandit learning with discrete actions. 

Specifically, consider the feasible paths $z_1, \dots, z_m$ for $m = 70$. We can view them as separate discrete actions, and a feasible decision policy $\pi$ is a mapping from the covariates $X$ to one of the $m = 70$ feasible paths. 
We now adopt one-hot-encoding for the decisions. Consider a simplex $\Z^{\text{simplex}}$ in $\R{m}$ and zero-one vector $\breve{z} \in \{0, 1\}^m$ that takes the value $1$ on one and only one of its coordinate. Each feasible decision in $z_1, \dots, z_m$ corresponds to one vector $\breve{z}$, e.g., the decision $z_j$ corresponds to the $\breve z$ vector whose $j$-th entry is $1$ and other entries are all $0$.

Our decision-making problem restricted to the feasible decisions $z_1, \dots, z_m$ can be written as follows:
\begin{align*}
\min_{\breve z \in \Z^{\text{simplex}}} \sum_{j=1}^m \breve z_j \Eb{C \mid Z = z_j, X = x} \iff \min_{\breve z \in \Z^{\text{simplex}}} \breve z^\top \breve f^*(x),
\end{align*}
where $\breve f^*(x) = (\Eb{C \mid Z = z_1, X = x}, \dots, \Eb{C \mid Z = z_m, X = x})^\top$. For any given $x$, the resulting decision will be an one-hot vector that corresponds to an optimal decision at $x$ (which, under \cref{assump: dgp}, can be equivalently given by $\pi_{f^*}(x)$ for an plug-in policy in \cref{eq: plug in policy} at the true $f^*(x) = \Eb{Y \mid X = x}$). In this formulation, we view the decisions $z_1, \dots, z_m$ as separate discrete actions, and we do not take into account the original linear structure of the decision cost in learning the optimal policy. 

We can easily adapt the bandit-feedback ETO to this new formulation. Specifically, we can first construct an estimator $\hat{\breve{f}}$ for the $\breve f^*$ function. For offline bandit learning with discrete actions, this would usually be implemented by regressing the observed total cost $C$ with respect to the covariates $X$, within each subsample for each of the feasible decisions respectively.
Given the estimator  $\hat{\breve{f}}$, we can then solve the optimization problem $\min_{\breve z \in \Z^{\text{simplex}}} \breve z^\top \hat{\breve f}^*(x)$. We finally inspect which coordinate of the resulting solution is equal to $1$ and choose it as the decision. 

To adapt the IERM approach, we similarly define the plug-in policy for any given hypothesis $\breve f(x): \R{p} \to \R{m}$ for the function $\breve f^*(x) = (\Eb{C \mid Z = z_1, X = x}, \dots, \Eb{C \mid Z = z_m, X = x})^\top$:
\begin{align*}
    \breve \pi_{\breve f}(x) \in \arg\min_{\breve z \in \Z^{\text{simplex}}} \breve f(x)^\top \breve z,
\end{align*}
where ties are again broken by some fixed rules. Given a function class $\breve \F$, we can then consider the induced policy class $\breve\Pi_{\breve \F} = \{\breve \pi_{\breve f}: \breve f \in \breve \F\}$. For any policy $\breve \pi \in \breve\Pi_{\breve\F}$, its output is an one-hot vector, whose entry with value $1$ corresponds to the chosen decision among $z_1, \dots, z_m$.
For any observed decision $Z \in \{z_1, \dots, z_m\}$, we denote its one-hot transformation $\breve Z$ as the zero-one vector whose value-one entry corresponds to the value of $Z$. 
For a given observed total cost $C$, we denote $\breve C$ as the vector all of whose  entries are equal to $C$. 
In the lemma below, we show that the cost of each policy $\breve \pi$ can be also evaluated by some score funtions.

\begin{lemma}\label{lemma: naive-id}
    For any given policy $\breve \pi$ that maps any covariate value $x$ to an $m$-dimensional one-hot vector, its policy cost can be written as follows:
    \begin{align*}
        &V(\breve{\pi}) = \Eb{\breve{\theta}(X, \breve Z, \breve C; \breve f^*, \breve e^*)^\top\breve \pi(X)}, \\
        \text{where } & \breve f^*(x) = (\Eb{C \mid Z = z_1, X = x}, \dots, \Eb{C \mid Z = z_m, X = x})^\top \\
        & \breve e^*(x) = (e^*(z_1 \mid x), \dots, e^*(z_m \mid x)))^\top, ~ e^*(z \mid x) = \Prb{Z = z_j \mid X = x},
    \end{align*}
    and the score function $\breve \theta$ can take three different forms: 
      \begin{enumerate}
  \item (Direct Method) $\breve\theta_{\text{DM}}(x,\breve z,\breve c; \breve f,\breve e) = \breve f(x)$;
  \item (Inverse Propensity Weighting) $\breve\theta_{\text{IPW}}(x,\breve z,\breve c; \breve f,\breve e) = \frac{\breve z}{\breve e(x)}\breve c$;
      \item (Doubly Robust) $\theta_{\text{DR}}(x,\breve z,\breve c; \breve f,\breve e) = \breve f(x) + \frac{\breve z}{\breve e(x)}(\breve c - \breve f(x))$.
  \end{enumerate}
  In the three score functions above, all vector operations are entry-wise operations. 
\end{lemma}

From \cref{lemma: naive-id}, the new formulations above have very similar structure as our previous formulation in \cref{sec: ierm-bandit}. The major differences are that we restrict to the discrete actions $z_1, \dots, z_m$, redefine certain variables accordingly, and consider a simplex set as the constraint. We note that the identification in \cref{lemma: naive-id} mimics the DM, IPW and DR identification in offline contextual bandit learning with discrete actions \cite[\eg,][]{dudik2011doubly,athey2021policy,zhao2012estimating}. Since the identification formulae are analogous to those in \cref{sec: ierm-bandit}, we can easily apply the same policy learning methods  in  \cref{sec: ierm-bandit} and the surrogate losses (such as SPO+ relaxation) in \cref{sec: spo+}.

\section{Additional Experimental Details}\label{sec:exp-details}
In \cref{sec: numerical}, we provide experimental results for different methods under various model specifications and different logging policies. In this section, we further explain the details of  experiment setup, implementation, and provide additional experimental results. All experiments in the paper are implemented on a cloud computing platform with 128 CPUs of model Intel(R) Xeon(R) Platinum 8369B CPU @ 2.70GHz, 250GB RAM and 500GB storage. 

\subsection{Experimental Setup and Implementation Details}\label{sec:exp-setup}
\paragraph{Data generating process.} We first generate i.i.d draws of the covariates $X = (X_1, X_2, X_3)^\top \in \R{3}$ from independent standard normal distribution. Then we simulate the full feedback $Y$ according to the equation $Y = f^*(X) + \epsilon$, where $f^*(X) = a +  W_1^* X_1 + W_2^*X_2 + W_3^* X_3 + W_4^* X_1X_2 + W_5^* X_2X_3 + W_6^* X_1X_3 + W_7^* X_1X_2X_3$ for coefficient vectors $W_1^*, \dots, W_7^* \in \R{d}$ and a random noise $\epsilon$ drawn from the $\text{Unif}[-0.5, 0.5]$. 
To fix the coefficient vectors $W^*_1, \dots, W^*_7$, we draw their entries independently from the $\text{Unif}[0, 1]$ distribution. 
Each element of the intercept vector is sampled from a Gaussian distribution $\mathcal{N}(3,1)$.
We then use the resulting fixed coefficient vectors throughout the experiment.
We sample observed decisions $Z$ from the set of all feasible path decisions $\{z_1, \dots, z_{m}\}$ for $m = 70$, according to different logging policies that will be described shortly. 
For the bandit-feedback setting, the total cost $C = Y^\top Z$ is recorded in the observed data. 
For the semi-bandit feedback setting, the corresponding $Y$ components for the chosen $Z$ are recorded in the observed data. 

We consider two different logging policies. 
One is a random logging policy that uniformly samples each decision from the feasible decisions, regardless of the covariate value. 
The other is a covariate-dependent logging policy. 
For this policy, 
we first remove $20$ feasible decisions that correspond to the optimal decisions for some covariate observations in the testing data, and then randomly sample the observed decisions from the rest $50$ feasible decisions. 
This means that many promising decisions are not explored by the logging policies at all. 
We hope to use this logging policy to demonstrate the value of leveraging the linear structure of the decision-making problem, since exploiting the linear structure allows to extrapolate the feedback from the logged decisions to decisions never explored by the policies. 
We further divide the remaining $50$ feasible decisions into two even groups. 
Our covariate-dependent policy samples the decisions according to the signs of both $X_1$ and  $X_2$. When $X_1 > 0$ and $X_2 > 0$, the logging policy chooses the two groups with probabilities $2/3$ and $1/3$ respectively. When $X_1 > 0$ and $X_2 \le 0$, the logging policy chooses the two groups with probabilities $1/3$ and $2/3$ respectively. When $X_1 \le 0$ and $X_2 > 0$, the policy chooses the two groups with probabilities $3/4$ and $1/4$ respectively. 
When $X_1 \le 0$ and $X_2 \le 0$, the policy chooses the two groups with probabilities $1/4$ and $3/4$ respectively. Once deciding the group, the policy again uniformly samples one decision from the chosen group. We will refer to this policy as the $X_1X_2$-policy.

\paragraph{Specification of the policy-inducing model and nuisance model.} For the policy-inducing model and nuisance model, we consider three different classes. One is the correctly specified class $\{(x_1, x_3, x_3) \mapsto W_0 + W_1 x_1 + W_2x_2 + W_3 x_3 + W_4 x_1x_2 + W_5 x_2x_3 + W_6 x_1x_3 + W_7 x_1x_2x_3: W_0, \dots, W_7 \in \R{}\}$. The second model class $\{(x_1, x_3, x_3) \mapsto  W_0 + W_1 x_1 + W_2x_2 + W_3 x_3 + W_4 x_1x_2 + W_5 x_2x_3: W_0, \dots, W_5 \in \R{}\}$ omits two interaction terms and is thus misspecified (which we refer to as degree-2 misspecification). The third  model class   $\{(x_1, x_3, x_3) \mapsto W_0 +  W_1 x_1 + W_2x_2 + W_3 x_3 : W_1, W_2, W_3 \in \R{}\}$ omits all four interaction terms (which we refer to as degree-4 misspecification). 

\paragraph{Nuisance estimation for IERM methods.} The IERM methods involve two different nuisances. One is a least-squares solution $\tilde f^*$. We estimate this by the least squares regression in \cref{eq: nuisance f}, with the $\F^{\text{N}}$ class being one of the three classes described above (i.e., correct specification, degree-2 misspecification, degree-4 misspecification).
The other nuisance is the conditional Gram matrix $\Sigma^*(x)$, which we use the true value in most of the experiments. We explore the estimation of $\Sigma^*$ in \Cref{sec: estimated sigma}. In that part, 
we estimate the the nuisance $\Sigma^*(x)$ using the propensity score approach described in \cref{remark:sigma0}, which involves estimating the propensity scores $\Prb{Z = z_j \mid X = x}$ for $j = 1, \dots, m$. 
For the random logging policy, we simply estimate $\Prb{Z = z_j \mid X = x}$ for any $x$ by the empirical frequency of the decision $z_j$ in the observed data. 
For the $X_1X_2$-policy, we estimate the propensity scores by classification decision trees trained to classify each instance to one of the observed classes among $z_1, \dots, z_m$. These nuisances are all estimated using the two-fold cross-fitting described in \cref{sec:ierm}.

Since the $\Sigma^*$ matrix is rank-deficient, we cannot directly invert it. Besides taking the pseudo-inverse, we also implement the Lambda regularization described in \Cref{sec: numerical}. For Lambda regularization, we set the regularization parameter $\lambda$ to $1$.

\paragraph{Naive Benchmarks.} We also implemented the  benchmarks in \cref{sec: ipw}. For  ETO, SPO+ DM, and SPO+ DR, we estimate $\tilde f^*(x)$ by regressing $C$ with respect to $X$ using the subsample for each observed decision separately. The regression function class uses similar correctly specified class, degree-2 misspecified class, degree-4 misspecified class mentioned above, with only slight difference in the dimension since the output of $\breve f^*$ is $m$-dimensional while the output of $\tilde f^*$ is $d$-dimensional. 
For SPO+ DR and SPO+ IPW, we need to estimate the propensity scores. These are again estimated by either sample frequency or decision trees. 
Note that some feasible decisions are never observed in the training data. 
For these decisions, the corresponding component of $\tilde f^*$ is heuristically imputed by a pooled regression of $C$ against $X$ using all observed data (regardless of the decision). For SPO+ IPW and SPO+ DR, although the propensity scores for the unseen decisions are zero, they do not impact the policy evaluation since they only need the propensity score for decisions observed in the data.

\begin{table}[t]
\centering
\begin{tabular}{|c|c|c|c|c|c|c|c|c|c|} 
\hline
                  & \multicolumn{3}{c|}{$\F^{\text{N}}$ well-specified} & \multicolumn{3}{c|}{$\F^{\text{N}}$ misspecified degree 2} & \multicolumn{3}{c|}{$\F^{\text{N}}$ misspecified degree 4}  \\ 
\hline
Evaluation method & 400    & 1000   & 1600                              & 400    & 1000   & 1600                                     & 400    & 1000   & 1600                                      \\ 
\hline
DM                & 2.86\% & 0.05\% & 0.02\%                            & 7.34\% & 4.14\% & 3.90\%                                   & 7.29\% &4.45\% & 4.20\%                                    \\ 
\hline
DR                & 2.80\% & 0.16\% & 0.09\%                            & 6.30\% & 2.91\% & 2.46\%                                   & 7.81\% & 4.18\% & 3.64\%                                    \\ 
\hline
DR Lambda         & 3.03\% & 0.11\% & 0.07\%                            & 5.70\% & 2.85\% & 2.41\%                                   & 7.13\% & 3.70\% & 3.19\%                                    \\
\hline
\end{tabular}
\caption{Mean relative regret ratio of different evaluation methods when the nuisance model $\F^{\text{N}}$ is misspecified to different degrees and the policy-inducing model $\F$ is well-specified. The logging policy is a random policy and the feedback type is semi-bandit feedback.Optimization method is SPO+.}
\label{tab:Semi-bandit Nuisance Specifications}
\end{table}

\begin{table}[t]
\centering
\begin{tabular}{|c|c|c|c|c|c|c|c|c|c|} 
\hline
                    & \multicolumn{3}{c|}{DM}  & \multicolumn{3}{c|}{DR}  & \multicolumn{3}{c|}{DR Lambda}  \\ 
\hline
Optimization method & 400    & 1000   & 1600   & 400    & 1000   & 1600   & 400    & 1000   & 1600          \\ 
\hline
SPO+                & 2.86\% & 0.05\% & 0.02\% & 2.80\% & 0.16\% & 0.09\% & 3.03\% & 0.11\% & 0.07\%        \\ 
\hline
PGC                 & 0.84\% & 0.19\% & 0.03\% & 1.14\% & 0.24\% & 0.12\% & 0.88\% & 0.21\% & 0.07\%        \\ 
\hline
PGB                 & 0.82\% & 0.18\% & 0.03\% & 1.44\% & 0.38\% & 0.37\% & 0.85\% & 0.23\% & 0.08\%        \\ 
\hline
PFYL                & 1.74\% & 0.24\% & 0.08\% & 1.82\% & 0.22\% & 0.07\% & 1.76\% & 0.23\% & 0.11\%        \\
\hline
\end{tabular}
\caption{Mean relative regret ratio of different optimization methods when the nuisance model $\F^{\text{N}}$ and the policy-inducing model $\F$ are  well-specified. The logging policy is a random policy and the feedback type is semi-bandit feedback.}
\label{tab:Semi-bandit Evaluation Methods}
\end{table}

\subsection{Semi-bandit Feedback Experiment Result} \label{sec: semi experiments}
\cref{tab:Semi-bandit Nuisance Specifications} and \cref{tab:Semi-bandit Evaluation Methods} present results under semi-bandit feedback that align with the bandit-feedback findings in \cref{sec: synthetic}. 

\paragraph{Nuisance Class ($\mathcal{F}^{\text{N}}$) Specification.}
Compared to \cref{tab:Feedback type}, a key observation from \cref{tab:Semi-bandit Nuisance Specifications} is that the performance of our end-to-end IERM approach degrades significantly when the nuisance model $\mathcal{F}^{\text{N}}$ is misspecified. This sensitivity to nuisance reinforces the practical guideline suggested in \cref{sec: synthetic}: employing a flexible nuisance class $\mathcal{F}^{\text{N}}$ for accurate estimation while maintaining a tractable policy class $\mathcal{F}$.

Furthermore, the results confirm the clear tradeoff between DM and DR scores observed in bandit settings. When $\mathcal{F}^{\text{N}}$ is misspecified, the DR score (particularly the regularized DR-Lambda variant) outperforms DM, leveraging its debiasing structure to mitigate nuisance estimation errors. Conversely, with well-specified nuisances, DM achieves superior performance due to its lower variance, free from the instability introduced by inverse Gram matrix operations.

\paragraph{Comparison of Evaluation and Optimization Methods.}
\cref{tab:Semi-bandit Evaluation Methods} compares evaluation methods and optimization surrogates under semi-bandit feedback when both $\mathcal{F}$ and $\mathcal{F}^{\text{N}}$ are well-specified. Consistent with our bandit-feedback findings, DM outperforms DR in this regime, while all surrogate losses demonstrate decreasing regret with sample size, confirming their general effectiveness. Among optimization methods, PGC exhibits the strongest performance with small sample sizes ($n=400$), while SPO+ emerge as top performers as sample size increases, mirroring the pattern observed in bandit settings.

These results collectively demonstrate that the performance characteristics and methodological insights established for bandit feedback in \cref{sec: synthetic} extend consistently to the semi-bandit setting, reinforcing the robustness and general applicability of our IERM framework across partial-feedback environments.

\begin{table}[t]
\centering
\begin{tblr}{
  cells = {c},
  cell{1}{3} = {c=3}{},
  cell{1}{6} = {c=3}{},
  cell{3}{1} = {r=2}{},
  cell{5}{1} = {r=2}{},
  cell{7}{1} = {r=2}{},
  vlines,
  hline{1-3,5,7,9} = {-}{},
  hline{4,6,8} = {2-8}{},
}
                                               &  & DM      &         &        & DR PI   &         &        \\
$\F$ Specification            & Methods           & 400     & 1000~   & 1600   & 400     & 1000    & 1600   \\
$\F$ well-specified           & Naive Benchmark   & 46.52\% & 25.36\% & 3.73\% & 46.48\% & 27.63\% & 3.93\% \\
                                               & Our Method   & 4.66\%  & 0.30\%  & 0.09\% & 4.63\%  & 0.47\%  & 0.24\% \\
{$\F$ misspecified\\degree 2} & Naive Benchmark   & 46.82\% & 24.35\% & 4.70\% & 46.84\% & 26.42\% & 4.87\% \\
                                               & Our Method    & 5.47\%  & 2.04\%  & 1.78\% & 6.01\%  & 2.76\%  & 2.50\% \\
{$\F$ misspecified\\degree 4} & Naive Benchmark   & 46.81\% & 22.92\% & 5.34\% & 46.84\% & 24.82\% & 5.45\% \\
                                               & Our Method    & 5.03\%  & 3.17\%  & 3.03\% & 5.38\%  & 3.51\%  & 3.37\% 
\end{tblr}
\caption{Mean relative regret ratio of naive benchmark and our method under different evaluation method and different policy-inducing model $\F$ specifications when the nuisance model $\F^{\text{N}}$ is well-specified. The logging policy is a random policy, the optimization method is SPO+ and the feedback type is bandit feedback.} \label{tab:Naive Benchmark result}
\end{table}

\subsection{Naive Benchmarks Experiment Result} \label{sec: naive benchmarks}
We also considered the Naive method mentioned in Appendix \ref{sec: ipw} as a benchmark to compare with our proposed approach. The experimental comparison, as shown in \cref{tab:Naive Benchmark result}, was conducted under the random policy setting with bandit feedback, using SPO+ as the optimization method. The results demonstrate that our method significantly outperforms the Naive benchmark, regardless of whether the evaluation method is DM or DR PI. This advantage stems from our method's effective utilization of the linear structure among decisions, whereas the naive benchmark simply discretizes all feasible decisions without leveraging this linear structure, leading to its inferior performance.

\subsection{$X_1X_2$-policy Experiment Results} \label{sec: x1x2}

\cref{tab:Feedback type X1X2} to \cref{tab:Evaluation Methods X1X2} present experimental results under the $X_1X_2$-dependent logging policy, which exhibits more complex covariate-dependent exploration patterns than the random policy. We report the following findings consistent with the findings in \cref{sec: synthetic} for random logging policy:
\begin{itemize}
    \item Performance degrades for all methods as feedback becomes more limited, with regret increasing from semi-bandit to bandit feedback settings.
    \item When the policy-inducing class $\mathcal{F}$ is well-specified (\cref{tab:Feedback type X1X2}), the advantage of IERM methods over ETO becomes increasingly pronounced in partial-feedback settings. In the bandit feedback regime, IERM methods achieve substantially lower regret than ETO, mirroring the pattern observed with random logging.
    \item Under $\mathcal{F}$ misspecification (\cref{tab:Feedback type X1X2}), IERM methods maintain their consistent advantage over ETO across both semi-bandit and bandit settings, demonstrating the framework's robustness to model misspecification even with complex, covariate-dependent logging policies. 
    \item %
    The performance of IERM methods degrades significantly when $\mathcal{F}^{\text{N}}$ is misspecified, confirming the sensitivity of end-to-end approaches to accurate nuisance estimation across different logging policies. This reaffirms the practical guideline of employing flexible nuisance classes regardless of the data collection mechanism.
    \item The bias-variance tradeoff between DM and DR scores persists under this more complex logging policy. When $\mathcal{F}^{\text{N}}$ is misspecified, DR-Lambda consistently outperform DM and DR PI. Conversely, with well-specified nuisances, DM achieves superior performance.
    \item %
    Among surrogate losses, PGC exhibits strong performance with small sample sizes ($n=400$), while SPO+, PGC and PFYL emerge as top performers as sample size increases. All methods show decreasing regret with sample size, confirming their general effectiveness even under covariate-dependent logging.
\end{itemize}

The consistent findings across random and $X_1X_2$-policies demonstrate that the performance characteristics and methodological insights established in \cref{sec: synthetic} are robust to the specific data collection mechanism, further validating the general applicability of our IERM framework across diverse partial-feedback environments.

\begin{table}
\centering
\begin{tblr}{
  cells = {c},
  cell{1}{3} = {c=3}{},
  cell{1}{6} = {c=3}{},
  cell{3}{1} = {r=4}{},
  cell{7}{1} = {r=4}{},
  vlines,
  hline{1-3,7,11} = {-}{},
  hline{4-6,8-10} = {2-8}{},
}
                        &                & $\F$ well-specified &        &        & $\F$ misspecified degree 4 &         &         \\
Feedback                & Methods        & 400              & 1000~  & 1600   & 400                      & 1000    & 1600    \\
{Semi-bandit\\Feedback} & ETO            & 0.76\%           & 0.06\% & 0.03\% & 6.64\%                   & 5.17\%  & 4.80\%  \\
                        & SPO+ DM        & 1.50\%           & 0.07\% & 0.02\% & 4.38\%                   & 2.94\%  & 2.88\%  \\
                        & SPO+ DR PI        & 1.96\%           & 0.31\% & 0.15\% & 4.58\%                   & 3.36\%  & 3.26\%  \\
                        & SPO+ DR Lambda & 1.69\%           & 0.17\% & 0.07\% & 4.46\%                   & 3.20\%  & 3.09\%  \\
{Bandit\\Feedback}      & ETO            & 21.51\%           & 3.28\% & 0.79\% & 27.87\%                  & 19.06\% & 15.80\% \\
                        & SPO+ DM        & 7.09\%           & 0.31\% & 0.14\% & 6.44\%                   & 3.13\%  & 2.99\%  \\
                        & SPO+ DR PI~       & 7.18\%           & 0.52\% & 0.27\% & 6.74\%                   & 3.61\%  & 3.41\%  \\
                        & SPO+ DR Lambda & 6.65\%           & 0.34\% & 0.15\% & 6.51\%                   & 3.30\%  & 3.13\%  
\end{tblr}
\caption{Mean relative regret ratio of different optimization methods, different feedback type and different policy-inducing model $\F$ specifications when the nuisance model $\F^{\text{N}}$ is well-specified. The logging policy is a $X_1X_2$-policy and the feedback type is bandit feedback.} \label{tab:Feedback type X1X2}
\end{table}

\begin{table}
\centering
\begin{tabular}{|c|c|c|c|c|c|c|c|c|c|} 
\hline
                  & \multicolumn{3}{c|}{$\F^{\text{N}}$ well-specified} & \multicolumn{3}{c|}{$\F^{\text{N}}$ misspecified degree 2} & \multicolumn{3}{c|}{$\F^{\text{N}}$ misspecified degree 4}  \\ 
\hline
Evaluation method & 400    & 1000   & 1600               & 400    & 1000   & 1600                      & 400     & 1000    & 1600                     \\ 
\hline
DM                & 7.09\% & 0.31\% & 0.14\%             & 23.28\% & 12.59\% & 10.07\%                    & 24.20\% & 15.91\% & 13.45\%                  \\ 
\hline
DR PI             & 7.18\% & 0.52\% & 0.27\%             & 21.86\% & 12.67\% & 10.66\%                    & 23.52\% & 16.77\% & 14.10\%                  \\ 
\hline
DR Lambda         & 6.65\% & 0.34\% & 0.15\%             & 21.56\% & 11.49\% & 9.42\%                    & 23.27\% & 15.60\% & 12.51\%                  \\
\hline
\end{tabular}
\caption{Mean relative regret ratio of different evaluation methods when the nuisance model $\F^{\text{N}}$ is misspecified to different degrees and the policy-inducing model $\F$ is well-specified. The logging policy is a $X_1X_2$-policy and the feedback type is bandit feedback. Optimization method is SPO+.} \label{tab:Semi-bandit Nuisance Specifications X1X2}
\end{table}

\begin{table}
\centering
\begin{tabular}{|c|c|c|c|c|c|c|c|c|c|} 
\hline
                    & \multicolumn{3}{c|}{DM}  & \multicolumn{3}{c|}{DR PI} & \multicolumn{3}{c|}{DR Lambda}  \\ 
\hline
Optimization method & 400    & 1000   & 1600   & 400    & 1000   & 1600     & 400    & 1000   & 1600          \\ 
\hline
SPO+                & 7.09\% & 0.31\% & 0.14\% & 7.18\% & 0.52\% & 0.27\%  & 6.65\% & 0.34\% & 0.15\%        \\ 
\hline
PGC                 & 1.26\% & 0.26\% & 0.14\% & 1.75\% & 0.32\% & 0.27\%   & 1.62\% & 0.30\% & 0.24\%        \\ 
\hline
PGB                 & 1.44\% & 0.28\% & 0.16\% & 2.00\% & 0.30\% & 0.18\%   & 1.63\% & 0.43\% & 0.37\%        \\ 
\hline
PFYL                & 1.92\% & 0.24\% & 0.12\% & 2.07\% & 0.33\% & 0.19\%   & 2.00\% & 0.32\% & 0.19\%        \\
\hline
\end{tabular}
\caption{Mean relative regret ratio of different optimization methods when the nuisance model $\F^{\text{N}}$ and the policy-inducing model $\F$ are  well-specified. The logging policy is a $X_1X_2$-policy and the feedback type is bandit feedback.} \label{tab:Evaluation Methods X1X2}
\end{table}

\subsection{Estimation of $\Sigma^*$ in IERM} \label{sec: estimated sigma}
To validate the discussion in \cref{remark:sigma0} that estimating $\Sigma^*$ based on available data is feasible even when the true logging policy is not perfectly known, we conduct additional experimentals in this section. For the random logging policy case in \cref{tab:Sigma Experiment Random}, we estimate the $\Sigma^*$ using empirical frequency $\hat{e}(z_j)$ as a substitute for the propensity score $e^*(z_j|X)$ derived from historically observed trajectories. For the $X_1X_2$-policy case in \cref{tab:Sigma Experiment depth 3 X1X2}, we employ decision tree models of varying depths to learn the propensity score, ultimately obtaining $\hat{e}(z_j|X)$ to estimate the estimated $\Sigma^*$. 

Experimental results demonstrate that the final regret performance remains largely unaffected regardless of whether the true $\Sigma^*$ or its estimated counterpart $\hat{\Sigma}$ is used, under both random and $X_1X_2$-logging policies. This robustly validates the reasonableness and effectiveness of our proposed $\Sigma^*$ estimation method using historical data as presented in \cref{remark:sigma0}.

\begin{table}
\centering
\begin{tabular}{|c|c|c|c|c|c|c|} 
\hline
                    & \multicolumn{3}{c|}{\begin{tabular}[c]{@{}c@{}}DR Lambda\\Estimated $\Sigma^*$\end{tabular}} & \multicolumn{3}{c|}{\begin{tabular}[c]{@{}c@{}}DR Lambda\\True $\Sigma^*$\end{tabular}}  \\ 
\hline
Optimization method & 400    & 1000   & 1600                                                                 & 400    & 1000   & 1600                                                              \\ 
\hline
SPO+                & 4.37\% & 0.28\% & 0.12\%                                                              & 4.35\% & 0.29\% & 0.12\%                                                             \\ 
\hline
PGC                 & 1.49\% & 0.30\% & 0.20\%                                                         & 1.49\% & 0.30\% & 0.20\%                                                          \\ 
\hline
PGB                  & 1.46\% & 0.42\% & 0.34\%                                                       & 1.46\% & 0.42\% & 0.34\%                                                       \\ 
\hline
PFYL                & 1.78\% & 0.31\% & 0.16\%                                                        & 1.78\% & 0.31\% & 0.16\%                                                     \\
\hline
\end{tabular}
\caption{Mean relative regret ratio of different optimization methods using $\Sigma$ estimated by the frequency or computed based on the true logging policy when the nuisance model $\F^{\text{N}}$ and the policy-inducing model $\F$ are  well-specified. The logging policy is a random policy, evaluation method is DR Lambda and the feedback type is bandit feedback.}
\label{tab:Sigma Experiment Random}
\end{table}

\begin{table}
\centering
\begin{tblr}{
  cells = {c},
  cell{1}{2} = {c=3}{},
  cell{1}{5} = {c=3}{},
  cell{1}{8} = {c=3}{},
  hlines,
  vlines,
}
                    & {DR Lambda \\~Tree max depth 3} &        &        & {DR Lambda~Tree\\~Tree~max depth 2} &        &        & {DR Lambda\\True $\Sigma^*$} &        &        \\
Optimization method & 400                                  & 1000   & 1600   & 400                                  & 1000   & 1600   & 400                     & 1000   & 1600   \\
SPO+                & 6.65\%                               & 0.34\% & 0.15\% & 6.66\%                               & 0.32\% & 0.15\% & 7.03\%                  & 0.32\% & 0.15\% \\
PGC                 & 1.62\%                               & 0.30\% & 0.24\% & 1.62\%                               & 0.30\% & 0.24\% & 1.62\%                  & 0.30\% & 0.24\% \\
PGB                 & 1.63\%                               & 0.43\% & 0.37\% & 1.63\%                               & 0.43\% & 0.37\% & 1.63\%                  & 0.43\% & 0.38\% \\
PFYL                & 2.00\%                               & 0.32\% & 0.19\% & 2.00\%                               & 0.31\% & 0.19\% & 2.01\%                  & 0.32\% & 0.20\% 
\end{tblr}
\caption{Mean relative regret ratio of different optimization methods using $\Sigma$ estimated by the decision tree with different max depth or computed based on the true logging policy when the nuisance model $\F^{\text{N}}$ and the policy-inducing model $\F$ are  well-specified. The logging policy is a $X_1X_2$-policy, evaluation method is DR Lambda and the feedback type is bandit feedback. }
\label{tab:Sigma Experiment depth 3 X1X2}
\end{table}

\subsection{Real data Experiment Result} \label{sec: real data exp extra}

For real data experiments, we also conducted tests under the $X_1X_2$-policy as the logging policy. Specifically, we selected the windspeed $x_{w}$ and visibility $x_{v}$ features and partitioned all feasible solutions into two groups. The selection mechanism was designed as follows. 

When $x_{w} > 5.5$ and $x_{v} > 9.5$, the logging policy chooses the two groups with probabilities $2/3$ and $1/3$ respectively. When $x_{w} > 5.5$ and $x_{v} \leq 9.5$, the logging policy chooses the two groups with probabilities $1/3$ and $2/3$ respectively. When $x_{w} \leq 5.5$ and $x_{v} > 9.5$, the logging policy chooses the two groups with probabilities $3/4$ and $1/4$ respectively. When $x_{w} \leq 5.5$ and $x_{v} \leq 9.5$, the logging policy chooses the two groups with probabilities $1/4$ and $3/4$ respectively. Once deciding the group, the policy again uniformly samples one decision from the chosen group. We will refer to this policy as the $X_1X_2$-policy.

The results from our real-world experiment under the covariate-dependent $X_1X_2$-logging policy are summarized in \cref{tab:Real data result X1X2}. These findings are fully consistent with the patterns observed in both our synthetic experiments and the main real data analysis in \cref{sec: real data}.

Under this covariate-dependent logging policy, we again observe a clear  increase in regret when moving from semi-bandit to bandit feedback settings. This reaffirms the fundamental challenge of learning with partial information, even when dealing with real-world data and covariate-dependent data collection mechanisms. The end-to-end IERM methods demonstrate a decisive advantage in the partial-feedback settings. This performance gap is particularly pronounced at shorter time horizons (\eg, 6-12 months), where data scarcity amplifies the limitations of the two-stage ETO approach. The consistent superiority of IERM methods under bandit feedback highlights the framework's robustness and adaptability when learning from limited, partially observable real-world data.

These results collectively validate that the core insights from our synthetic experiments and main real-data analysis extend to more complex, covariate-dependent logging scenarios, further strengthening the practical relevance of our proposed IERM framework for real-world applications.

\begin{table}
\centering
\begin{tabular}{|c|c|c|c|c|c|c|c|c|} 
\hline
               & \multicolumn{4}{c|}{Semi-bandit Feedback}                                                                                        & \multicolumn{4}{c|}{Bandit Feedback}                                                                                                \\ 
\hline
Period (Months) & ETO    & SPO+ DM & \begin{tabular}[c]{@{}c@{}}SPO+\\DR PI\end{tabular} & \begin{tabular}[c]{@{}c@{}}SPO+ DR\\Lambda\end{tabular} & ETO     & SPO+ DM & \begin{tabular}[c]{@{}c@{}}SPO+ \\DR PI\end{tabular} & \begin{tabular}[c]{@{}c@{}}SPO+ DR\\Lambda\end{tabular}  \\ 
\hline
24              & 1.14\% & 0.84\%  & 0.84\%                                              & 0.84\%                                                  & 18.79\% & 10.80\% & 8.27\%                                               & 7.60\%                                                   \\ 
\hline
18            & 1.36\% & 0.92\%  & 0.92\%                                              & 0.92\%                                                  & 22.85\% & 12.29\% & 10.32\%                                              & 10.77\%                                                  \\ 
\hline
12              & 1.69\% & 0.98\%  & 0.98\%                                              & 0.98\%                                                  & 27.18\% & 16.58\% & 15.32\%                                              & 15.29\%                                                  \\ 
\hline
6           & 2.21\% & 1.01\%  & 1.01\%                                              & 1.01\%                                                  & 27.45\% & 18.22\% & 18.05\%                                              & 18.41\%                                                  \\
\hline
\end{tabular}
\caption{Mean relative regret ratio of different evaluation methods in real data experiment. The logging policy is a $X_1X_2$-policy.}
\label{tab:Real data result X1X2}
\end{table}

\end{document}